\def\E{\mathbb{E}}
\def\R{\mathbb{R}}
\def\sphere{\mathcal{S}}
\def\I{\mathds{1}}
\def\F{{\operatorname{F}}}
\renewcommand\v[1]{\Vec{#1}}
\newcommand\dotp[1]{\langle #1 \rangle}
\def\t{{\scriptscriptstyle\top}}
\def\h{\hat}
\def\tl{\tilde}
\def\wh{\widehat}
\def\e{\v{e}}
\def\x{\v{x}}
\def\y{\v{y}}
\def\Sig{\varSigma}
\def\eps{\epsilon}
\def\veps{\varepsilon}
\DeclareMathOperator{\blkdiag}{blkdiag}
\DeclareMathOperator{\cov}{cov}
\DeclareMathOperator{\diag}{diag}
\DeclareMathOperator{\range}{range}
\DeclareMathOperator{\sign}{sign}
\def\pairs{\mathrm{Pairs}}
\def\triples{\mathrm{Triples}}
\def\algorithmA{Algorithm A}
\def\algorithmB{Algorithm B}
\newtheorem{lemma}{Lemma}[section]
\newtheorem{proposition}{Proposition}[section]
\newtheorem{theorem}{Theorem}[section]
\theoremstyle{definition}
\newtheorem{condition}{Condition}[section]
\theoremstyle{remark}
\def\bs{\kern-2pt}
\def\bss{\kern-3pt}
\def\coloneq{\kern-3pt :=}
\title{A \mbox{Method of Moments} for \mbox{Mixture Models} and
\mbox{Hidden Markov Models}}
\author[1]{Animashree Anandkumar}
\author[2]{Daniel Hsu}
\author[2]{Sham M.~Kakade}
\affil[1]{Department of EECS, University of California, Irvine}
\affil[2]{Microsoft Research New England}
\begin{document}
\maketitle
{\def\thefootnote{}
\footnotetext{E-mail:
\texttt{a.anandkumar@uci.edu},
\texttt{dahsu@microsoft.com},
\texttt{skakade@microsoft.com}}}

\begin{abstract}
Mixture models are a fundamental tool in applied statistics and machine
learning for treating data taken from multiple subpopulations.
The current practice for estimating the parameters of such models relies on
local search heuristics (\emph{e.g.}, the EM algorithm) which are prone to
failure, and existing consistent methods are unfavorable due to their high
computational and sample complexity which typically scale exponentially
with the number of mixture components.
This work develops an efficient \emph{method of moments} approach to
parameter estimation for a broad class of high-dimensional mixture models
with many components, including multi-view mixtures of Gaussians (such as
mixtures of axis-aligned Gaussians) and hidden Markov models.
The new method leads to rigorous unsupervised learning results for mixture
models that were not achieved by previous works; and, because of its
simplicity, it offers a viable alternative to EM for practical deployment.
\end{abstract}

\section{Introduction}

Mixture models are a fundamental tool in applied statistics and machine
learning for treating data taken from multiple
subpopulations~\citep{TSM85}.
In a mixture model, the data are generated from a number of possible
sources, and it is of interest to identify the nature of the individual
sources.
As such, estimating the unknown parameters of the mixture model from
sampled data---especially the parameters of the underlying constituent
distributions---is an important statistical task.
For most mixture models, including the widely used mixtures of Gaussians
and hidden Markov models (HMMs), the current practice relies on the
Expectation-Maximization (EM) algorithm, a local search heuristic for
maximum likelihood estimation.
However, EM has a number of well-documented drawbacks regularly faced by
practitioners, including slow convergence and suboptimal local
optima~\citep{RW84}.

An alternative to maximum likelihood and EM, especially in the context of
mixture models, is the \emph{method of moments} approach.
The method of moments dates back to the origins of mixture models with
Pearson's solution for identifying the parameters of a mixture of two
univariate Gaussians~\citep{Pearson94}.
In this approach, model parameters are chosen to specify a distribution
whose $p$-th order moments, for several values of $p$, are equal to the
corresponding empirical moments observed in the data.
Since Pearson's work, the method of moments has been studied and adapted
for a variety of problems; their intuitive appeal is also complemented with
a guarantee of statistical consistency under mild conditions.
Unfortunately, the method often runs into trouble with large mixtures of
high-dimensional distributions.
This is because the equations determining the parameters are typically
based on moments of order equal to the number of model parameters, and
high-order moments are exceedingly difficult to estimate accurately due to
their large variance.

This work develops a computationally efficient method of moments based on
only low-order moments that can be used to estimate the parameters of a
broad class of high-dimensional mixture models with many components.
The resulting estimators can be implemented with standard numerical linear
algebra routines (singular value and eigenvalue decompositions), and the
estimates have low variance because they only involve low-order moments.
The class of models covered by the method includes certain multivariate
Gaussian mixture models and HMMs, as well as mixture models with no
explicit likelihood equations.
The method exploits the availability of multiple indirect ``views'' of a
model's underlying latent variable that determines the source distribution,
although the notion of a ``view'' is rather general.
For instance, in an HMM, the past, present, and future observations can be
thought of as different noisy views of the present hidden state; in a
mixture of product distributions (such as axis-aligned Gaussians), the
coordinates in the output space can be partitioned (say, randomly) into
multiple non-redundant ``views''.
The new method of moments leads to unsupervised learning guarantees for
mixture models under mild rank conditions that were not achieved by
previous works; in particular, the sample complexity of accurate parameter
estimation is shown to be polynomial in the number of mixture components
and other relevant quantities.
Finally, due to its simplicity, the new method (or variants thereof) also
offers a viable alternative to EM and maximum likelihood for practical
deployment.

\subsection{Related work}

\noindent {\bf Gaussian mixture models}.
The statistical literature on mixture models is vast (a more thorough
treatment can be found in the texts of~\citet{TSM85}
and~\citet{Lindsay95}), and many advances have been made in computer
science and machine learning over the past decade or so, in part due to
their importance in modern applications.
The use of mixture models for clustering data comprises a large part of
this work, beginning with the work of~\citet{Das99} on learning
mixtures of $k$ well-separated $d$-dimensional Gaussians.
This and subsequent work~\citep{AK01,DS07,VW02,KSV05,AM05,CR08,BV08,CKLS09}
have focused on efficient algorithms that provably recover the parameters
of the constituent Gaussians from data generated by such a mixture
distribution, provided that the distance between each pair of means is
sufficiently large (roughly either $d^c$ or $k^c$ times the standard
deviation of the Gaussians, for some $c > 0$).
Such separation conditions are natural to expect in many clustering
applications, and a number of spectral projection techniques have been
shown to enhance the separation~\citep{VW02,KSV05,BV08,CKLS09}.
More recently, techniques have been developed for learning mixtures of
Gaussians without any separation condition~\citep{KMV10,BS10,MV10},
although the computational and sample complexities of these methods grow
exponentially with the number of mixture components $k$.
This dependence has also been shown to be inevitable without further
assumptions~\citep{MV10}.

\smallskip
\noindent {\bf Method of moments}.
The latter works of~\citet{BS10},~\citet{KMV10}, and~\citet{MV10} (as well
as the algorithms of~\citet{FOS05,FOS06} for a related but different
learning objective) can be thought of as modern implementations of the
method of moments, and their exponential dependence on $k$ is not
surprising given the literature on other moment methods for mixture models.
In particular, a number of moment methods for both discrete and continuous
mixture models have been developed using techniques such as the Vandermonde
decompositions of Hankel matrices~\citep{Lindsay89,LB93,BLV97,GLPR12}.
In these methods, following the spirit of Pearson's original solution, the
model parameters are derived from the roots of polynomials whose
coefficients are based on moments up to the $\Omega(k)$-th order.
The accurate estimation of such moments generally has computational and
sample complexity exponential in $k$.

\smallskip
\noindent {\bf Spectral approach to parameter estimation with low-order
moments}.
The present work is based on a notable exception to the above situation,
namely Chang's spectral decomposition technique for discrete Markov models
of evolution~\citep{Chang96} (see also~\citet{MR06} and~\citet{HKZ09} for
adaptations to other discrete mixture models such as discrete HMMs).
This spectral technique depends only on moments up to the third-order;
consequently, the resulting algorithms have computational and sample
complexity that scales only polynomially in the number of mixture
components $k$.
The success of the technique depends on a certain rank condition of the
transition matrices; but this condition is much milder than separation
conditions of clustering works, and it remains sufficient even when the
dimension of the observation space is very large~\citep{HKZ09}.
In this work, we extend Chang's spectral technique to develop a general
method of moments approach to parameter estimation, which is applicable to
a large class of mixture models and HMMs with both discrete and continuous
component distributions in high-dimensional spaces.
Like the moment methods of~\citet{MV10} and~\citet{BS10}, our algorithm
does not require a separation condition; but unlike those previous methods,
the algorithm has computational and sample complexity polynomial in $k$.

Some previous spectral approaches for related learning problems only use
second-order moments, but these approaches can only estimate a subspace
containing the parameter vectors and not the parameters
themselves~\citep{McSherry01}.
Indeed, it is known that the parameters of even very simple discrete
mixture models are not generally identifiable from only second-order
moments~\citep{Chang96}\footnote{See Appendix~\ref{appendix:nonident} for
an example of~\citet{Chang96} demonstrating the non-identifiability of
parameters from only second-order moments in a simple class of Markov
models.}.
We note that moments beyond the second-order (specifically, fourth-order
moments) have been exploited in the methods of~\citet{FJK96}
and~\citet{NR09} for the problem of learning a parallelepiped from random
samples, and that these methods are very related to techniques used for
\emph{independent component analysis}~\citep{HO00}.
Adapting these techniques for other parameter estimation problems is an
enticing possibility.

\smallskip
\noindent {\bf Multi-view learning}.
The spectral technique we employ depends on the availability of multiple
views, and such a multi-view assumption has been exploited in previous
works on learning mixtures of well-separated
distributions~\citep{CR08,CKLS09}.
In these previous works, a projection based on a \emph{canonical
correlation analysis}~\citep{Hotelling35} between two views is used to
reinforce the separation between the mixture components, and to cancel out
noise orthogonal to the separation directions.
The present work, which uses similar correlation-based projections, shows
that the availability of a third view of the data can remove the separation
condition entirely.
The multi-view assumption substantially generalizes the case where the
component distributions are product distributions (such as axis-aligned
Gaussians), which has been previously studied in the
literature~\citep{Das99,VW02,CR08,FOS05,FOS06}; the combination of this and
a non-degeneracy assumption is what allows us to avoid the sample
complexity lower bound of~\citet{MV10} for Gaussian mixture models.
The multi-view assumption also naturally arises in many applications, such
as in multimedia data with (say) text, audio, and video
components~\citep{BL08,CKLS09}; as well as in linguistic data, where the
different words in a sentence or paragraph are considered noisy predictors
of the underlying semantics~\citep{GCY92}.
In the vein of this latter example, we consider estimation in a simple
bag-of-words document topic model as a warm-up to our general method; even
this simpler model illustrates the power of pair-wise and triple-wise
(\emph{i.e.}, bigram and trigram) statistics that were not exploited by
previous works on multi-view learning.

\subsection{Outline}
Section~\ref{section:topic} first develops the method of moments in the
context of a simple discrete mixture model motivated by document topic
modeling; an explicit algorithm and convergence analysis are also provided.
The general setting is considered in Section~\ref{section:general}, where
the main algorithm and its accompanying correctness and efficiency
guarantee are presented. 
Applications to learning multi-view mixtures of Gaussians and HMMs are
discussed in Section~\ref{section:applications}.
All proofs are given in the appendix.

\subsection{Notations}

The standard inner product between vectors $\v{u}$ and $\v{v}$ is denoted
by $\dotp{\v{u},\v{v}} = \v{u}^\t \v{v}$.
We denote the $p$-norm of a vector $\v{v}$ by $\|\v{v}\|_p$.
For a matrix $A \in \R^{m \times n}$, we let $\|A\|_2$ denote its spectral
norm $\|A\|_2 := \sup_{\v{v} \neq \v0} \|A\v{v}\|_2 / \|\v{v}\|_2$,
$\|A\|_\F$ denote its Frobenius norm, $\sigma_i(A)$ denote the $i$-th
largest singular value, and $\kappa(A) := \sigma_1(A) /
\sigma_{\min(m,n)}(A)$ denote its condition number.
Let $\Delta^{n-1} := \{ (p_1,p_2,\dotsc,p_n) \in \R^n : p_i \geq 0 \
\forall i, \ \sum_{i=1}^n p_i = 1 \}$ denote the probability simplex in
$\R^n$, and let $\sphere^{n-1} := \{ \v{u} \in \R^n : \|\v{u}\|_2 = 1 \}$
denote the unit sphere in $\R^n$.
Let $\e_i \in \R^d$ denote the $i$-th coordinate vector whose $i$-th entry
is $1$ and the rest are zero.
Finally, for a positive integer $n$, let $[n] := \{1,2,\dotsc,n\}$.

\section{Warm-up: bag-of-words document topic modeling} \label{section:topic}

We first describe our method of moments in the simpler context of
bag-of-words models for documents.

\subsection{Setting} \label{section:topic-setting}

Suppose a document corpus can be partitioned by topic, with each document
being assigned a single topic.
Further, suppose the words in a document are drawn independently from a
multinomial distribution corresponding to the document's topic.
Let $k$ be the number of distinct topics in the corpus, $d$ be the number
of distinct words in the vocabulary, and $\ell \geq 3$ be the number of
words in each document (so the documents may be quite short).

The generative process for a document is given as follows:
\begin{enumerate}
\item The document's topic is drawn according to the multinomial
distribution specified by the probability vector $\v{w} =
(w_1,w_2,\dotsc,w_k) \in \Delta^{k-1}$.
This is modeled as a discrete random variable $h$ such that
\[ \Pr[h = j] = w_j , \quad j \in [k] . \]

\item Given the topic $h$, the document's $\ell$ words are drawn
independently according to the multinomial distribution specified by the
probability vector $\v\mu_h \in \Delta^{d-1}$.
The random vectors $\x_1, \x_2, \dotsc, \x_\ell \in \R^d$
represent the $\ell$ words by setting
\[ \x_v = \e_i \ \Leftrightarrow \ \text{the $v$-th word
in the document is $i$} , \quad i \in [d] \]
(the reason for this encoding of words will become clear in the next
section).
Therefore, for each word $v \in [\ell]$ in the document,
\[ \Pr[\x_v = \e_i | h = j] = \dotp{\e_i,\v\mu_j} = M_{i,j} , \quad i \in
[d] , j \in [k] , \]
where $M \in \R^{d \times k}$ is the matrix of conditional probabilities
$M := [\v\mu_1 | \v\mu_2 | \dotsb | \v\mu_k]$.

\end{enumerate}
This probabilistic model has the conditional independence structure
depicted in Figure~\ref{fig:graphical-model}(a) as a directed graphical
model.

We assume the following condition on $\v{w}$ and $M$.
\begin{condition}[Non-degeneracy: document topic model] \label{cond:topic}
$w_j\bs >\bs 0$ for all $j\bs \in\bs [k]$, and $M$ has rank $k$.
\end{condition}
This condition requires that each topic has non-zero probability, and also
prevents any topic's word distribution from being a mixture of the other
topics' word distributions.

\subsection{Pair-wise and triple-wise probabilities}
\label{section:topic-moments}

Define $\pairs \in \R^{d \times d}$ to be the matrix of pair-wise
probabilities whose $(i,j)$-th entry is
\[ \pairs_{i,j} := \Pr[ \x_1 = \e_i, \x_2 = \e_j ] , \quad i,j \in [d] . \]
Also define $\triples \in \R^{d \times d \times d}$ to be the third-order
tensor of triple-wise probabilities whose $(i,j,\kappa)$-th entry is
\[ \triples_{i,j,\kappa} := \Pr[ \x_1 = \e_i, \x_2 = \e_j, \x_3 = \e_\kappa
]
, \quad i,j,\kappa \in [d] .
\]
The identification of words with coordinate vectors allows $\pairs$ and
$\triples$ to be viewed as expectations of tensor products of the random
vectors $\x_1$, $\x_2$, and $\x_3$:
\begin{equation} \label{eq:pairs-triples}
\pairs = \E[ \x_1 \otimes \x_2 ]
\quad\text{and}\quad
\triples = \E[ \x_1 \otimes \x_2 \otimes \x_3 ]
.
\end{equation}
We may also view $\triples$ as a linear operator $\triples \colon \R^d \to
\R^{d \times d}$ given by
\begin{equation*} \label{eq:triples-eta}
\triples(\v\eta) := \E[ (\x_1 \otimes \x_2) \dotp{\v\eta,\x_3} ]
.
\end{equation*}
In other words, the $(i,j)$-th entry of $\triples(\v\eta)$ for $\v\eta =
(\eta_1,\eta_2,\dotsc,\eta_d)$ is
\[ \triples(\v\eta)_{i,j}
= \sum_{x=1}^d \eta_x \triples_{i,j,x}
= \sum_{x=1}^d \eta_x \triples(\e_x)_{i,j}
.
\]

The following lemma shows that $\pairs$ and $\triples(\v\eta)$ can be
viewed as certain matrix products involving the model parameters $M$ and
$\v{w}$.
\begin{lemma} \label{lemma:topic-moments}
$\pairs\bs =\bs M \diag(\v{w}) M^\t$
and
$\triples(\v\eta)\bs =\bs M \diag(M^\t \v\eta) \diag(\v{w}) M^\t$
for all $\v\eta \in \R^d$.
\end{lemma}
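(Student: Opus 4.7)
The plan is to evaluate the expectations \eqref{eq:pairs-triples} by conditioning on the hidden topic $h$ and exploiting the conditional independence of $\x_1, \x_2, \x_3$ given $h$ (the graphical model structure from Figure~\ref{fig:graphical-model}(a)). Both identities will then drop out by rewriting the resulting sum over $j \in [k]$ as a triple matrix product.

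For $\pairs$, I would first write, by the law of total expectation, $\pairs = \sum_{j=1}^k w_j\, \E[\x_1 \otimes \x_2 \mid h=j]$. Since $\x_1$ and $\x_2$ are independent given $h$, this factors as $\sum_j w_j\, \E[\x_1 \mid h=j] \otimes \E[\x_2 \mid h=j]$. The encoding $\x_v = \e_i \Leftrightarrow $ ``word $v$ is $i$'' makes $\E[\x_v \mid h=j] = \sum_i M_{i,j} \e_i = \v\mu_j$, so $\pairs = \sum_j w_j\, \v\mu_j \v\mu_j^\t$. Recognizing this as $M \diag(\v{w}) M^\t$ via the column decomposition $M = [\v\mu_1 \mid \dotsb \mid \v\mu_k]$ finishes the first claim.

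For $\triples(\v\eta)$, I would again condition on $h$ and use the three-way conditional independence to get
\[
\triples(\v\eta) = \sum_{j=1}^k w_j\, \E[\x_1 \mid h=j] \otimes \E[\x_2 \mid h=j] \cdot \dotp{\v\eta, \E[\x_3 \mid h=j]} = \sum_{j=1}^k w_j\, \dotp{\v\eta, \v\mu_j}\, \v\mu_j \v\mu_j^\t.
\]
Then I would observe that $\dotp{\v\eta, \v\mu_j} = (M^\t \v\eta)_j$, so the scalars in front of $\v\mu_j \v\mu_j^\t$ are exactly the $j$-th diagonal entries of the (commuting) diagonal matrices $\diag(\v{w})$ and $\diag(M^\t\v\eta)$. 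Collecting terms yields $M\, \diag(M^\t \v\eta)\, \diag(\v{w})\, M^\t$.

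There is no real obstacle; the only thing to be careful about is the interchange of the inner product $\dotp{\v\eta,\x_3}$ with the conditional expectation (justified by linearity) and the bookkeeping that turns $\sum_j w_j(M^\t\v\eta)_j\, \v\mu_j\v\mu_j^\t$ into the stated triple product. The identity that two diagonal matrices commute is what lets us write the $\diag(M^\t\v\eta)$ factor on either side of $\diag(\v{w})$, matching the form in the statement.
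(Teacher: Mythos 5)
Your proof is correct and follows essentially the same route as the paper's: condition on $h$, use the conditional independence of $\x_1,\x_2,\x_3$ given $h$, and recognize the resulting sum over topics as the triple matrix product. The only cosmetic difference is that the paper carries out the computation entry-by-entry on $\pairs_{i,j}$ and $\triples(\v\eta)_{i,j}$, whereas you work directly with expectations of outer products (which is in fact how the paper proves the generalization, Lemma~\ref{lemma:general-moments}).
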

\begin{proof}
Since $\x_1$, $\x_2$, and $\x_3$ are conditionally independent given $h$,
\begin{align*}
\pairs_{i,j}
& = \Pr[ \x_1 = \e_i, \x_2 = \e_j ]
= \sum_{t=1}^k \Pr[ \x_1 = \e_i, \x_2 = \e_j | h=t ] \cdot \Pr[h=t] \\
& = \sum_{t=1}^k \Pr[ \x_1 = \e_i | h=t]
\cdot \Pr[ \x_2 = \e_j | h=t]
\cdot \Pr[h=t]
= \sum_{t=1}^k M_{i,t} \cdot M_{j,t} \cdot w_t
\end{align*}
so $\pairs = M \diag(\v{w}) M^\t$.
Moreover, writing $\v\eta = (\eta_1,\eta_2,\dotsc,\eta_d)$,
\begin{align*}
\triples(\v\eta)_{i,j}
& = \sum_{x=1}^d \eta_x \Pr[\x_1 = \e_i, \x_2 = \e_j, \x_3 = \e_x]
\\
& = \sum_{x=1}^d \sum_{t=1}^k \eta_x \cdot M_{i,t} \cdot M_{j,t} \cdot
M_{x,t} \cdot w_t
= \sum_{t=1}^k M_{i,t} \cdot M_{j,t} \cdot w_t \cdot (M^\t \v\eta)_t
\end{align*}
so $\triples(\v\eta) = M \diag(M^\t \v\eta) \diag(\v{w}) M^\t$.
\end{proof}

\subsection{Observable operators and their spectral properties}
\label{section:topic-operator}

The pair-wise and triple-wise probabilities can be related in a way that
essentially reveals the conditional probability matrix $M$.
This is achieved through a matrix called an ``observable operator''.
Similar observable operators were previously used to characterize
multiplicity automata~\citep{Sch61,Jaeger00} and, more recently, for
learning discrete HMMs (via an operator parameterization)~\citep{HKZ09}.

\begin{lemma} \label{lemma:topic-operator}
Assume Condition~\ref{cond:topic}.
Let $U \in \R^{d \times k}$ and $V \in \R^{d \times k}$ be matrices such
that both $U^\t M$ and $V^\t M$ are invertible.
Then $U^\t \pairs V$ is invertible, and for all $\v\eta \in \R^d$, the
``observable operator'' $B(\v\eta) \in \R^{k \times k}$, given by
\begin{equation*} 
B(\v\eta) := (U^\t \triples(\v\eta) V) (U^\t \pairs V)^{-1}
,
\end{equation*}
satisfies
\[ B(\v\eta) = (U^\t M) \diag(M^\t \v\eta) (U^\t M)^{-1} . \]
\end{lemma}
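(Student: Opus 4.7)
The plan is to substitute the factorizations from Lemma~\ref{lemma:topic-moments} into the definition of $B(\v\eta)$ and watch the middle factors cancel, leaving a similarity transformation that exposes $\diag(M^\t \v\eta)$ as the eigenvalue matrix.

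First I would establish invertibility of $U^\t \pairs V$. By Lemma~\ref{lemma:topic-moments}, $\pairs = M \diag(\v{w}) M^\t$, hence
\[
U^\t \pairs V \;=\; (U^\t M)\,\diag(\v{w})\,(V^\t M)^\t .
\]
All three factors on the right are $k \times k$ matrices: $U^\t M$ and $V^\t M$ are invertible by hypothesis, and $\diag(\v{w})$ is invertible because every $w_j > 0$ by Condition~\ref{cond:topic}. The product of invertible square matrices is invertible, and moreover
\[
(U^\t \pairs V)^{-1} \;=\; \bigl((V^\t M)^\t\bigr)^{-1}\,\diag(\v{w})^{-1}\,(U^\t M)^{-1} .
\]

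Next I would factor $U^\t \triples(\v\eta) V$ the same way, using the second identity of Lemma~\ref{lemma:topic-moments}:
\[
U^\t \triples(\v\eta) V \;=\; (U^\t M)\,\diag(M^\t \v\eta)\,\diag(\v{w})\,(V^\t M)^\t .
\]
Multiplying this by the inverse computed above, the trailing $(V^\t M)^\t$ cancels with $\bigl((V^\t M)^\t\bigr)^{-1}$, and then $\diag(\v{w})$ cancels with $\diag(\v{w})^{-1}$, leaving
\[
B(\v\eta) \;=\; (U^\t M)\,\diag(M^\t \v\eta)\,(U^\t M)^{-1} ,
\]
which is the claimed identity.

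There is essentially no obstacle here beyond bookkeeping: the proof is a direct computation once the factorizations of $\pairs$ and $\triples(\v\eta)$ are in hand. The only point requiring a moment of care is the invertibility step, where one must invoke both the rank condition on $M$ (through the hypotheses on $U^\t M$ and $V^\t M$) and the strict positivity of the weights $w_j$; without either, $\diag(\v{w})$ or the outer factors could be singular and the inverse $(U^\t \pairs V)^{-1}$ would not exist.
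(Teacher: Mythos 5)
Your proposal is correct and follows essentially the same route as the paper: both establish invertibility of $U^\t \pairs V$ by writing it as the product of the three invertible $k \times k$ factors $(U^\t M)\,\diag(\v{w})\,(M^\t V)$ via Lemma~\ref{lemma:topic-moments}, and both obtain the similarity identity by substituting the factorization of $U^\t \triples(\v\eta) V$ and cancelling. The only cosmetic difference is that you write out $(U^\t \pairs V)^{-1}$ explicitly as a product of inverses, whereas the paper inserts $(U^\t M)^{-1}(U^\t M)$ and recognizes the trailing factor as $(U^\t \pairs V)(U^\t \pairs V)^{-1} = I$.
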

\begin{proof}
Since $\diag(\v{w}) \succ 0$ by Condition~\ref{cond:topic} and $U^\t \pairs
V = (U^\t M) \diag(\v{w}) M^\t V$ by Lemma~\ref{lemma:topic-moments}, it
follows that $U^\t \pairs V$ is invertible by the assumptions on $U$ and
$V$.
Moreover, also by Lemma~\ref{lemma:topic-moments},
\begin{align*}
B(\v\eta)
& = (U^\t \triples(\v\eta) V) \ (U^\t \pairs V)^{-1}
\\
& = (U^\t M \diag(M^\t \v\eta) \diag(\v{w}) M^\t V) \
(U^\t \pairs V)^{-1}
\\
& = (U^\t M) \diag(M^\t \v\eta) (U^\t M)^{-1} \
(U^\t M \diag(\v{w}) M^\t V) \
(U^\t \pairs V)^{-1}
\\
& = (U^\t M) \diag(M^\t \v\eta) (U^\t M)^{-1}
.
\qedhere
\end{align*}
\end{proof}
The matrix $B(\v\eta)$ is called ``observable'' because it is only a
function of the observable variables' joint probabilities (\emph{e.g.},
$\Pr[\x_1 = \e_i, \x_2 = \e_j]$).
In the case $\v\eta = \e_x$ for some $x \in [d]$, the matrix $B(\e_x)$ is
\emph{similar} (in the linear algebraic sense) to the diagonal matrix
$\diag(M^\t \e_x)$; the collection of matrices $\{ \diag(M^\t \e_x) : x \in
[d] \}$ (together with $\v{w}$) can be used to compute joint probabilities
under the model (see, \emph{e.g.},~\citet{HKZ09}).
Note that the columns of $U^\t M$ are eigenvectors of $B(\e_x)$, with the
$j$-th column having an associated eigenvalue equal to $\Pr[\x_v = x |
h=j]$.
If the word $x$ has distinct probabilities under every topic, then
$B(\e_x)$ has exactly $k$ distinct eigenvalues, each having geometric
multiplicity one and corresponding to a column of $U^\t M$.

\if 0
The matrix $B(\v\eta)$ is called ``observable'' because it is only a
function of the observable variables' joint probabilities (\emph{e.g.},
$\Pr[\x_1 = \e_i, \x_2 = \e_j]$).
In the case $\v\eta = \e_x$ for some $x \in [d]$, the matrix
$B(\e_x)$ is \emph{similar} (in the linear algebraic sense) to the diagonal
matrix $\diag(M^\t \e_x)$, and this matrix $\diag(M^\t \e_x)$ can be
interpreted as a linear operator that transforms a ``prior'' probability
vector $\v{p} = (p_1,p_2,\dotsc,p_k) \in \Delta^{k-1}$ over topics to a
vector $\v{q} = (q_1,q_2,\dotsc,q_k) = \diag(M^\t \e_x) \v{p}$ such that
$q_t = \Pr[\v{x}_v = x | h=t] \pi_t$.
Up to a normalization factor, $\v{q}$ can be regarded as the ``posterior''
probability vector after observing the word $\v{x}_v = x$.

The similarity between $B(\e_x)$ and $\diag(M^\t \e_x)$ has some notable
consequences.
First, the operator $B(\e_x)$ performs the same ``prior $\to$ posterior''
transformation as $\diag(M^\t \e_x)$, just in a different basis.
Moreover, because the coordinate vectors $\e_1,\e_2,\dotsc,\e_k$ are
eigenvectors of the diagonal matrix $\diag(M^\t \e_x)$, it follows that the
individual columns of $U^\t M$ are eigenvectors of $B(\e_x)$, with the
$t$-th column having an associated eigenvalue equal to $\Pr[\x_v = x |
h=t]$.
Indeed, if the word $x$ has distinct non-zero probabilities under every
topic, then $B(\e_x)$ has exactly $k$ non-zero eigenvalues, each having
geometric multiplicity one and corresponding to a column of $U^\t M$.
\fi

\subsection{Topic-word distribution estimator and convergence guarantee}
\label{section:topic-method}

The spectral properties of the observable operators $B(\v\eta)$ implied by
Lemma~\ref{lemma:topic-operator} suggest the estimation procedure
(\algorithmA) in Figure~\ref{fig:topic}.
The procedure is essentially a plug-in approach based on the equations
relating the second- and third-order moments in
Lemma~\ref{lemma:topic-operator}.
We focus on estimating $M$; estimating the mixing weights $\v{w}$ is easily
handled as a secondary step (see Appendix~\ref{appendix:mixing-weights} for
the estimator in the context of the general model in
Section~\ref{section:general-setting}).

\begin{figure}
\framebox[\textwidth]{\small\begin{minipage}{0.95\textwidth}
\textbf{\algorithmA}
\begin{enumerate}

\item Obtain empirical frequencies of word pairs and triples from a given
sample of documents, and form the tables $\wh\pairs \in \R^{d \times d}$
and $\wh\triples \in \R^{d \times d \times d}$ corresponding to the
population quantities $\pairs$ and $\triples$.

\item Let $\h{U} \in \R^{d \times k}$ and $\h{V} \in \R^{d \times k}$ be,
respectively, matrices of orthonormal left and right singular vectors of
$\wh\pairs$ corresponding to its top $k$ singular values.

\item Pick $\v\eta \in \R^d$ (see remark in the main text), and compute the
right eigenvectors $\h\xi_1, \h\xi_2, \dotsc, \h\xi_k$ (of unit Euclidean
norm) of
\[ \h{B}(\v\eta) := (\h{U}^\t \wh\triples(\v\eta) \h{V}) (\h{U}^\t
\wh\pairs \h{V})^{-1} . \]
(Fail if not possible.)

\item Let $\h\mu_j := \h{U} \h\xi_j / \dotp{\v1,\h{U} \h\xi_j}$ for all $j
\in [k]$.

\item Return $\h{M} := [ \h\mu_1 | \h\mu_2 | \dotsb | \h\mu_k ]$.

\end{enumerate}
\end{minipage}}
\vspace{-4mm}
\caption{Topic-word distribution estimator (\algorithmA).}
\label{fig:topic}
\end{figure}

\smallskip
\noindent {\bf On the choice of $\v\eta$}.
As discussed in the previous section, a suitable choice for $\v\eta$ can be
based on prior knowledge about the topic-word distributions, such as
$\v\eta = \e_x$ for some $x \in [d]$ that has different
conditional probabilities under each topic.
In the absence of such information, one may select $\v\eta$ randomly from
the subspace $\range(\h{U})$.
Specifically, take $\v\eta := \h{U}\v\theta$ where $\v\theta \in \R^k$ is a
random unit vector distributed uniformly over $\sphere^{k-1}$.

\smallskip
The following theorem establishes the convergence rate of \algorithmA.

\begin{theorem} \label{theorem:topic}
There exists a constant $C>0$ such that the following holds.
Pick any $\delta \in (0,1)$.
Assume the document topic model from Section~\ref{section:topic-setting}
satisfies Condition~\ref{cond:topic}.
Further, assume that in \algorithmA, $\wh\pairs$ and $\wh\triples$ are,
respectively, the empirical averages of $N$ independent copies of $\x_1
\otimes \x_2$ and $\x_1 \otimes \x_2 \otimes \x_3$; and that $\v\eta =
\h{U}\v\theta$ where $\v\theta \in \R^k$ is an independent random unit
vector distributed uniformly over $\sphere^{k-1}$.
If
\[ N \geq C \cdot \frac{k^7 \cdot \ln(1/\delta)}{\sigma_k(M)^6 \cdot
\sigma_k(\pairs)^4 \cdot \delta^2}
,
\]
then with probability at least $1-\delta$, the parameters returned by
\algorithmA\ have the following guarantee:
there exists a permutation $\tau$ on $[k]$ and scalars $c_1, c_2, \dotsc,
c_k \in \R$ such that, for each $j \in [k]$,
\[ \|c_j \h\mu_j - \v\mu_{\tau(j)}\|_2 \leq
C \cdot \|\v\mu_{\tau(j)}\|_2 \cdot \frac{k^5}{\sigma_k(M)^4 \cdot
\sigma_k(\pairs)^2 \cdot \delta} \cdot \sqrt{\frac{\ln(1/\delta)}{N}}
.
\]
\end{theorem}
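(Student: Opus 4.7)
The plan is to trace the sampling error
\[
\veps_N := \max\bigl\{\|\wh\pairs - \pairs\|_2,\ \|\wh\triples - \triples\|_\F\bigr\}
\]
through each step of \algorithmA\ and combine it with a probabilistic lower bound on the eigenvalue separation of the observable operator. The template is: (i) concentration of $\wh\pairs, \wh\triples$; (ii) perturbation of the singular subspaces $\h U, \h V$ via Wedin's $\sin\Theta$ theorem; (iii) perturbation of $\h B(\v\eta)$ around its ``plug-in'' population version; (iv) anti-concentration of the eigenvalue gap for random $\v\theta$; (v) a non-symmetric eigenvector perturbation bound; and (vi) the final normalization step to recover $\h\mu_j$.

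Each summand in $\wh\pairs, \wh\triples$ is a tensor product of coordinate vectors with unit Frobenius norm, so a matrix/vector Bernstein (or entrywise Hoeffding with a union bound) gives $\veps_N = O(\sqrt{\ln(1/\delta)/N})$ with probability $1-\delta/3$. Wedin's theorem then yields orthogonal $Q_U, Q_V \in \R^{k \times k}$ with $\|\h U - U Q_U\|_2, \|\h V - V Q_V\|_2 = O(\veps_N/\sigma_k(\pairs))$, where $U, V$ span the top-$k$ singular subspaces of $\pairs = M\diag(\v w)M^\t$; consequently $\sigma_k(\h U^\t M), \sigma_k(\h V^\t M) = \Omega(\sigma_k(M))$, and since $\range(M) \subseteq \range(U)$ we have $UU^\t M = M$. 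Applying Lemma~\ref{lemma:topic-operator} with $\h U, \h V$ in place of $U, V$ defines the plug-in operator $\tilde B(\v\eta) := (\h U^\t \triples(\v\eta) \h V)(\h U^\t \pairs \h V)^{-1} = (\h U^\t M)\diag(M^\t \v\eta)(\h U^\t M)^{-1}$; expanding $\h B(\v\eta) - \tilde B(\v\eta)$ via the identity $X^{-1}-Y^{-1} = Y^{-1}(Y-X)X^{-1}$ and using $\sigma_k(\h U^\t \pairs \h V) = \Omega(\sigma_k(\pairs))$ produces $\|\h B(\v\eta) - \tilde B(\v\eta)\|_2 = O(\veps_N / \sigma_k(\pairs)^2)$.

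The eigenvalues of $\tilde B(\v\eta)$ are $\lambda_j = \dotp{\v\mu_j, \v\eta} = \dotp{\h U^\t \v\mu_j, \v\theta}$, so gaps $|\lambda_i - \lambda_j|$ are inner products of $\v\theta$---uniform on $\sphere^{k-1}$ and, conditional on the sample, independent of $\h U$---with fixed vectors of norm $\|\h U^\t(\v\mu_i - \v\mu_j)\|_2 = \Omega(\sigma_k(M))$. Standard anti-concentration on the sphere gives $\Pr[|\lambda_i - \lambda_j| \le t] = O(t\sqrt{k}/\sigma_k(M))$, and a union bound over the $\binom{k}{2}$ pairs produces a minimum gap $g = \Omega(\sigma_k(M)\delta/\poly(k))$ with probability $1-\delta/3$. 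Since $\tilde B(\v\eta)$ is diagonalized by $\h U^\t M$ with $\kappa(\h U^\t M) = O(\sqrt{k}/\sigma_k(M))$, a non-symmetric eigenvector perturbation bound for diagonalizable matrices with distinct eigenvalues yields, for an appropriate permutation $\tau$, scalars $\tl c_j$ with $\|\tl c_j \h\xi_j - \h U^\t \v\mu_{\tau(j)}\|_2 = O\bigl(\|\v\mu_{\tau(j)}\|_2 \cdot \|\h B - \tilde B\|_2 \cdot \kappa(\h U^\t M)^{O(1)}/g\bigr)$. Multiplying by $\h U$, combining with the subspace error $\|(\h U\h U^\t - I)\v\mu_{\tau(j)}\|_2 = O(\|\v\mu_{\tau(j)}\|_2 \cdot \veps_N/\sigma_k(\pairs))$, and using $\dotp{\v 1, \v\mu_j} = 1$ (so that $\dotp{\v 1, \h U \h\xi_j}$ is bounded away from zero) to convert $\tl c_j$ into the normalization constant $c_j$, together with a final union bound over the three failure events, gives the stated bound.

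I expect step (v) to be the main obstacle. The non-symmetric eigenvector perturbation bound both requires and introduces factors of $\kappa(\h U^\t M)$ and $1/g$, and an honest accounting of how small $\sigma_k(M)$ amplifies \emph{both} the conditioning of the eigenbasis and the anti-concentration tail is precisely what produces the $k^5/\sigma_k(M)^4$ scaling in the final rate. A secondary subtlety in step (iv) is that $\v\eta = \h U \v\theta$ couples the data-dependent $\h U$ with the random $\v\theta$; conditioning on the sample isolates the two sources of randomness and makes the anti-concentration argument go through.
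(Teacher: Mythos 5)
Your proposal follows essentially the same route as the paper's proof in Appendix~\ref{appendix:topic}: dimension-free concentration of $\wh\pairs$ and $\wh\triples$ (the paper uses McDiarmid on the Frobenius norm, exploiting that each summand is a unit-norm indicator tensor --- your vector-Bernstein variant works, though the entrywise-Hoeffding alternative in your parenthetical would introduce dimension dependence absent from the stated bound), Wedin for the singular subspaces, perturbation of $\h{B}(\v\eta)$ about the plug-in operator $\tl{B}(\v\eta)$, the random-projection eigengap bound after conditioning on the sample, a Bauer--Fike-style eigenvector perturbation for the diagonalizable $\tl{B}$, and absorption of the normalization into the unknown scalars $c_j$ (which, incidentally, lets the paper avoid your concern about $\dotp{\v1,\h{U}\h\xi_j}$ being bounded away from zero). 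The accounting of $\kappa(\h{U}^\t M)$ and $1/\gamma$ that you flag as the main obstacle is exactly where the paper's $k^5/(\sigma_k(M)^4\,\sigma_k(\pairs)^2\,\delta)$ factor arises.
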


The proof of Theorem~\ref{theorem:topic}, as well as some illustrative
empirical results on using \algorithmA, are presented in
Appendix~\ref{appendix:topic}.
A few remarks about the theorem are in order.

\smallskip
\noindent {\bf On boosting the confidence}.
Although the convergence depends polynomially on $1/\delta$, where $\delta$
is the failure probability, it is possible to boost the confidence by
repeating Step 3 of \algorithmA\ with different random $\v\eta$ until the
eigenvalues of $\h{B}(\v\eta)$ are sufficiently separated (as judged by
confidence intervals).

\smallskip
\noindent {\bf On the scaling factors $c_j$}.
With a larger sample complexity that depends on $d$, an error bound can be
established for $\|\h\mu_j - \v\mu_{\tau(j)}\|_1$ directly (without the
unknown scaling factors $c_j$).
We also remark that the scaling factors can be estimated from the
eigenvalues of $\h{B}(\v\eta)$, but we do not pursue this approach as it is
subsumed by \algorithmB\ anyway.

\section{A method of moments for multi-view mixture models}
\label{section:general}

We now consider a much broader class of mixture models and present a
general method of moments in this context.

\subsection{General setting}
\label{section:general-setting}

Consider the following multi-view mixture model; $k$ denotes the number of
mixture components, and $\ell$ denotes the number of views.
We assume $\ell \geq 3$ throughout.
Let $\v{w} = (w_1,w_2,\dotsc,w_k) \in \Delta^{k-1}$ be a vector of mixing
weights, and let $h$ be a (hidden) discrete random variable with $\Pr[h=j]
= w_j$ for all $j \in [k]$.
Let $\x_1,\x_2,\dotsc,\x_\ell \in \R^d$ be $\ell$ random vectors that are
conditionally independent given $h$;
the directed graphical model is depicted in
Figure~\ref{fig:graphical-model}(a).

Define the conditional mean vectors as
\[ \v\mu_{v,j} := \E[\x_v | h = j] , \quad v \in [\ell], j \in [k]
, \]
and let $M_v \in \R^{d \times k}$ be the matrix whose $j$-th column is
$\v\mu_{v,j}$.
Note that we do not specify anything else about the (conditional)
distribution of $\x_v$---it may be continuous, discrete, or even a hybrid
depending on $h$.

\begin{figure}
\begin{center}
\begin{tabular}{cc}
\begin{tikzpicture}
  [
    scale=1.0,
    observed/.style={circle,minimum size=0.7cm,inner sep=0mm,draw=black,fill=black!20},
    hidden/.style={circle,minimum size=0.7cm,inner sep=0mm,draw=black},
  ]
  \node [hidden,name=h] at ($(0,0)$) {$h$};
  \node [observed,name=x1] at ($(-1.5,-1)$) {$\x_1$};
  \node [observed,name=x2] at ($(-0.5,-1)$) {$\x_2$};
  \node at ($(0.5,-1)$) {$\dotsb$};
  \node [observed,name=xl] at ($(1.5,-1)$) {$\x_\ell$};
  \draw [->] (h) to (x1);
  \draw [->] (h) to (x2);
  \draw [->] (h) to (xl);
\end{tikzpicture}
\qquad
&
\qquad
\begin{tikzpicture}
  [
    scale=1.0,
    observed/.style={circle,minimum size=0.7cm,inner sep=0mm,draw=black,fill=black!20},
    hidden/.style={circle,minimum size=0.7cm,inner sep=0mm,draw=black},
  ]
  \node [hidden,name=h1] at ($(-1.2,0)$) {$h_1$};
  \node [hidden,name=h2] at ($(0,0)$) {$h_2$};
  \node [name=hd] at ($(1.2,0)$) {$\dotsb$};
  \node [hidden,name=hl] at ($(2.4,0)$) {$h_\ell$};
  \node [observed,name=x1] at ($(-1.2,-1)$) {$\x_1$};
  \node [observed,name=x2] at ($(0,-1)$) {$\x_2$};
  \node [observed,name=xl] at ($(2.4,-1)$) {$\x_\ell$};
  \draw [->] (h1) to (h2);
  \draw [->] (h2) to (hd);
  \draw [->] (hd) to (hl);
  \draw [->] (h1) to (x1);
  \draw [->] (h2) to (x2);
  \draw [->] (hl) to (xl);
\end{tikzpicture}
\\
(a) & (b)
\end{tabular}
\vspace{-5mm}
\end{center}
\caption{(a) The multi-view mixture model.
(b) A hidden Markov model.}
\label{fig:graphical-model}
\vspace{-1mm}
\end{figure}

We assume the following conditions on $\v{w}$ and the $M_v$.
\begin{condition}[Non-degeneracy: general setting] \label{cond:general}
$w_j > 0$ for all $j \in [k]$, and $M_v$ has rank $k$ for all
$v \in [\ell]$.
\end{condition}
We remark that it is easy to generalize to the case where views have
different dimensionality (\emph{e.g.}, $\x_v \in \R^{d_v}$ for possibly
different dimensions $d_v$).
For notational simplicity, we stick to the same dimension for each view.
Moreover, Condition~\ref{cond:general} can be relaxed in some cases; we
discuss one such case in Section~\ref{section:gmm} in the context of
Gaussian mixture models.

Because the conditional distribution of $\x_v$ is not specified beyond its
conditional means, it is not possible to develop a maximum likelihood
approach to parameter estimation.
Instead, as in the document topic model, we develop a method of moments
based on solving polynomial equations arising from eigenvalue problems.

\subsection{Observable moments and operators}
\label{section:general-moments}

We focus on the moments concerning $\{\x_1, \x_2, \x_3\}$, but the same
properties hold for other triples of the random vectors $\{\x_a, \x_b,
\x_c\} \subseteq \{\x_v : v \in [\ell] \}$ as well.

As in~\eqref{eq:pairs-triples}, we define the matrix $P_{1,2} \in \R^{d
\times d}$ of second-order moments, and the tensor $P_{1,2,3} \in \R^{d
\times d \times d}$ of third-order moments, by
\[ P_{1,2} := \E[\x_1 \otimes \x_2]
\quad\text{and}\quad
P_{1,2,3} := \E[\x_1 \otimes \x_2 \otimes \x_3] . \]
Again, $P_{1,2,3}$ is regarded as the linear operator $P_{1,2,3} \colon
\v\eta \mapsto \E[ (\x_1 \otimes \x_2) \dotp{\v\eta,\x_3}]$.

Lemma~\ref{lemma:general-moments} and Lemma~\ref{lemma:general-operator}
are straightforward generalizations of Lemma~\ref{lemma:topic-moments} and
Lemma~\ref{lemma:topic-operator}.
\begin{lemma} \label{lemma:general-moments}
$P_{1,2}\bs =\bs M_1 \diag(\v{w}) M_2^\t$
and
$P_{1,2,3}(\v\eta)\bs =\bs M_1 \diag(M_3^\t \v\eta) \diag(\v{w}) M_2^\t$
for all $\v\eta \in \R^d$.
\end{lemma}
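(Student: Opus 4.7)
The plan is to mimic the proof of Lemma \ref{lemma:topic-moments} but replace the coordinate-vector computation (which relied on the words being indicator vectors) with an argument that uses only conditional independence and the defining property $\v\mu_{v,j} = \E[\x_v\mid h=j]$. The key simplification is that for the more general model we do not need to expand componentwise at all; we can work directly at the level of tensors.

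First I would condition on $h$ and use the tower property:
\[
P_{1,2} = \E[\x_1 \otimes \x_2] = \sum_{j=1}^{k} \Pr[h=j]\cdot \E[\x_1\otimes \x_2 \mid h=j].
\]
By the conditional independence of $\x_1$ and $\x_2$ given $h$ (Figure \ref{fig:graphical-model}(a)), the conditional expectation of the tensor product factors: $\E[\x_1 \otimes \x_2 \mid h=j] = \v\mu_{1,j}\otimes\v\mu_{2,j} = \v\mu_{1,j}\v\mu_{2,j}^\t$. Substituting and recognizing the resulting sum $\sum_j w_j\, \v\mu_{1,j}\v\mu_{2,j}^\t$ as the matrix product $M_1\diag(\v{w})M_2^\t$ yields the first identity.

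For the triple-wise statement, I would apply the same template to the linear operator form. Fix $\v\eta\in\R^d$ and write
\[
P_{1,2,3}(\v\eta) = \E\bigl[(\x_1\otimes\x_2)\,\dotp{\v\eta,\x_3}\bigr]
= \sum_{j=1}^{k} w_j\, \E\bigl[(\x_1\otimes\x_2)\,\dotp{\v\eta,\x_3}\,\big|\, h=j\bigr].
\]
Conditional independence of $\x_1,\x_2,\x_3$ given $h$ factors each summand into $\v\mu_{1,j}\v\mu_{2,j}^\t\cdot \E[\dotp{\v\eta,\x_3}\mid h=j]$, and linearity of expectation gives $\E[\dotp{\v\eta,\x_3}\mid h=j]=\dotp{\v\eta,\v\mu_{3,j}}=(M_3^\t\v\eta)_j$. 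Collecting terms,
\[
P_{1,2,3}(\v\eta) = \sum_{j=1}^{k} (M_3^\t\v\eta)_j\, w_j\, \v\mu_{1,j}\v\mu_{2,j}^\t = M_1\diag(M_3^\t\v\eta)\diag(\v{w})M_2^\t,
\]
exactly as claimed.

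There is no real obstacle: the only substantive fact used beyond definitions is conditional independence of the views given $h$, which is precisely the model assumption. The minor subtlety worth flagging is that the original lemma's componentwise argument implicitly used $\Pr[\x_v=\e_i\mid h=j]=M_{i,j}$, whereas here we cannot speak of a probability of a particular value; we must instead pass through conditional means and exploit that expectation commutes with tensor products of conditionally independent random vectors. Once that observation is in place the algebra is identical.
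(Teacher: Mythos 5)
Your proposal is correct and follows essentially the same route as the paper's proof: condition on $h$, use conditional independence to factor the conditional expectation of the tensor product into a product of conditional means, and recognize the resulting sum $\sum_j w_j\,\v\mu_{1,j}\v\mu_{2,j}^\t\,(M_3^\t\v\eta)_j$ as the claimed matrix product. The paper merely phrases the final summation via $\E[(M_1\e_h)\otimes(M_2\e_h)\dotp{\v\eta,M_3\e_h}]$ rather than writing the sum over $j$ explicitly, which is a purely notational difference.
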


\begin{lemma} \label{lemma:general-operator}
Assume Condition~\ref{cond:general}.
For $v \in \{1,2,3\}$, let $U_v \in \R^{d \times k}$ be a matrix such that
$U_v^\t M_v$ is invertible.
Then $U_1^\t P_{1,2} U_2$ is invertible, and for all $\v\eta \in \R^d$, the
``observable operator'' $B_{1,2,3}(\v\eta) \in \R^{k \times k}$, given by
$B_{1,2,3}(\v\eta) := (U_1^\t P_{1,2,3}(\v\eta) U_2) (U_1^\t P_{1,2}
U_2)^{-1}$, satisfies
\[ B_{1,2,3}(\v\eta) = (U_1^\t M_1) \diag(M_3^\t \v\eta) (U_1^\t M_1)^{-1}
. \]
In particular, the $k$ roots of the polynomial $\lambda \mapsto
\det(B_{1,2,3}(\v\eta) - \lambda I)$ are
$\{ \dotp{\v\eta,\v\mu_{3,j}} : j \in [k] \}$.
\end{lemma}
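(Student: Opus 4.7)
The plan is to mimic the proof of Lemma~\ref{lemma:topic-operator} almost verbatim, using the general moment identities from Lemma~\ref{lemma:general-moments} in place of those from Lemma~\ref{lemma:topic-moments}. The only new feature is that the three views need not share the same conditional-mean matrix, so I must keep track of $M_1$, $M_2$, and $M_3$ separately.

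First, I would establish invertibility of $U_1^\t P_{1,2} U_2$. By Lemma~\ref{lemma:general-moments}, $U_1^\t P_{1,2} U_2 = (U_1^\t M_1)\,\diag(\v{w})\,(U_2^\t M_2)^\t$. Condition~\ref{cond:general} gives $w_j > 0$ for every $j$, so $\diag(\v{w})$ is invertible, and by hypothesis $U_1^\t M_1$ and $U_2^\t M_2$ are invertible. The product of three invertible $k\times k$ matrices is invertible, which handles the first claim.

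Next I would plug the two identities of Lemma~\ref{lemma:general-moments} into the definition of $B_{1,2,3}(\v\eta)$ and simplify:
\begin{align*}
B_{1,2,3}(\v\eta)
&= (U_1^\t M_1)\,\diag(M_3^\t \v\eta)\,\diag(\v{w})\,(U_2^\t M_2)^\t \bigl[(U_1^\t M_1)\,\diag(\v{w})\,(U_2^\t M_2)^\t\bigr]^{-1} \\
&= (U_1^\t M_1)\,\diag(M_3^\t \v\eta)\,(U_1^\t M_1)^{-1},
\end{align*}
where the middle factors $\diag(\v{w})\,(U_2^\t M_2)^\t$ cancel against their inverses on the right. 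This is the stated similarity relation.

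Finally, for the ``in particular'' clause, I would observe that similar matrices share the same characteristic polynomial, so the roots of $\lambda \mapsto \det(B_{1,2,3}(\v\eta) - \lambda I)$ coincide with those of $\lambda \mapsto \det(\diag(M_3^\t \v\eta) - \lambda I)$, namely the diagonal entries $(M_3^\t \v\eta)_j = \dotp{\v\eta,\v\mu_{3,j}}$ for $j \in [k]$. There is no real obstacle here; the argument is a direct algebraic manipulation, and the only place one must be careful is to use the asymmetric roles of $M_1$, $M_2$, $M_3$ consistently so that the cancellation of $\diag(\v{w})(U_2^\t M_2)^\t$ goes through cleanly and the remaining conjugation is by $U_1^\t M_1$, matching the view indexing of $P_{1,2,3}$.
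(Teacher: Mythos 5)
Your proposal is correct and follows essentially the same route as the paper: both invoke Lemma~\ref{lemma:general-moments} to write $U_1^\t P_{1,2} U_2 = (U_1^\t M_1)\diag(\v{w})(M_2^\t U_2)$ (invertible as a product of invertible factors under Condition~\ref{cond:general}) and then cancel the trailing factors in $U_1^\t P_{1,2,3}(\v\eta) U_2$ against $(U_1^\t P_{1,2} U_2)^{-1}$ to leave the conjugation by $U_1^\t M_1$. The similarity argument for the ``in particular'' clause is immediate and matches the paper's intent.
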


Recall that \algorithmA\ relates the eigenvectors of $B(\v\eta)$ to the
matrix of conditional means $M$.
However, eigenvectors are only defined up to a scaling of each vector;
without prior knowledge of the correct scaling, the eigenvectors are not
sufficient to recover the parameters $M$.
Nevertheless, the eigenvalues also carry information about the parameters,
as shown in Lemma~\ref{lemma:general-operator}, and it is possible to
reconstruct the parameters from different the observation operators applied
to different vectors $\v\eta$.
This idea is captured in the following lemma.

\begin{lemma} \label{lemma:general-system}
Consider the setting and definitions from
Lemma~\ref{lemma:general-operator}.
Let $\Theta \in \R^{k \times k}$ be an invertible matrix, and let
$\v\theta_i^\t \in \R^k$ be its $i$-th row.
Moreover, for all $i \in [k]$, let $\lambda_{i,1},
\lambda_{i,2}, \dotsc, \lambda_{i,k}$ denote the $k$ eigenvalues of
$B_{1,2,3}(U_3\v\theta_i)$ in the order specified by the matrix of right
eigenvectors $U_1^\t M_1$.
Let $L \in \R^{k \times k}$ be the matrix whose $(i,j)$-th entry is
$\lambda_{i,j}$.
Then
\begin{equation*} 
\Theta U_3^\t M_3 = L
.
\end{equation*}
\end{lemma}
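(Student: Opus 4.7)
The plan is to simply specialize Lemma~\ref{lemma:general-operator} to the vectors $\v\eta = U_3 \v\theta_i$ and read off the eigenvalues. Lemma~\ref{lemma:general-operator} tells us that for every $\v\eta \in \R^d$,
\[ B_{1,2,3}(\v\eta) \;=\; (U_1^\t M_1)\, \diag(M_3^\t \v\eta) \,(U_1^\t M_1)^{-1}, \]
which is an explicit eigendecomposition: the columns of $U_1^\t M_1$ are right eigenvectors, and the eigenvalue associated with the $j$-th column is the $j$-th diagonal entry of $\diag(M_3^\t \v\eta)$, namely $(M_3^\t \v\eta)_j = \dotp{\v\eta, \v\mu_{3,j}}$. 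The ordering by the matrix of eigenvectors $U_1^\t M_1$ referenced in the statement is exactly this correspondence, and it is well-defined (even when some eigenvalues coincide) because the similarity transformation pins down which eigenvalue goes with which column.

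Next I would substitute $\v\eta = U_3 \v\theta_i$ into the above to obtain
\[ \lambda_{i,j} \;=\; (M_3^\t U_3 \v\theta_i)_j \;=\; \v\mu_{3,j}^\t U_3 \v\theta_i \;=\; \v\theta_i^\t \, U_3^\t \v\mu_{3,j}. \]
On the other hand, the $(i,j)$-entry of $\Theta U_3^\t M_3$ is the $i$-th row of $\Theta$ (namely $\v\theta_i^\t$) applied to the $j$-th column of $U_3^\t M_3$ (namely $U_3^\t \v\mu_{3,j}$), giving $\v\theta_i^\t U_3^\t \v\mu_{3,j}$ as well. Hence $L_{i,j} = (\Theta U_3^\t M_3)_{i,j}$ entry by entry, which is the desired identity.

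There is no real obstacle: once Lemma~\ref{lemma:general-operator} is in hand, the result is just bookkeeping about which eigenvalue is indexed in which way. The one subtlety worth a sentence in the write-up is that the ``order specified by the matrix of right eigenvectors $U_1^\t M_1$'' is unambiguous because the diagonalization $B_{1,2,3}(\v\eta) = (U_1^\t M_1)\diag(M_3^\t \v\eta)(U_1^\t M_1)^{-1}$ directly assigns eigenvalue $(M_3^\t \v\eta)_j$ to the $j$-th column of $U_1^\t M_1$, independently of multiplicities; invertibility of $U_1^\t M_1$ (from Condition~\ref{cond:general} and the assumption on $U_1$) guarantees these columns form a basis, so no eigenvector is lost.
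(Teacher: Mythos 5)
Your proposal is correct and follows exactly the paper's argument: both specialize Lemma~\ref{lemma:general-operator} to $\v\eta = U_3\v\theta_i$, read off the $j$-th eigenvalue as $(M_3^\t U_3\v\theta_i)_j = \dotp{\v\theta_i, U_3^\t M_3 \e_j}$ in the order fixed by the columns of $U_1^\t M_1$, and identify the resulting matrix of eigenvalues entrywise with $\Theta U_3^\t M_3$. Your added remark on why the ordering is well-defined even with repeated eigenvalues is a small but welcome clarification beyond what the paper states.
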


Observe that the unknown parameters $M_3$ are expressed as the solution to
a linear system in the above equation, where the elements of the right-hand
side $L$ are the roots of $k$-th degree polynomials derived from the
second- and third-order observable moments (namely, the characteristic
polynomials of the $B_{1,2,3}(U_3\v\theta_i)$, $\forall i \in [k]$).
This template is also found in other moment methods based on decompositions
of a Hankel matrix.
A crucial distinction, however, is that the $k$-th degree polynomials in
Lemma~\ref{lemma:general-system} only involve low-order moments, whereas
standard methods may involve up to $\Omega(k)$-th order moments which are
difficult to estimate~\citep{Lindsay89,LB93,GLPR12}.

\subsection{Main result: general estimation procedure and sample complexity
bound}
\label{section:general-method}

The lemmas in the previous section suggest the estimation procedure
(\algorithmB) presented in Figure~\ref{fig:general}.

\begin{figure}[h]
\framebox[\textwidth]{\small\begin{minipage}{0.95\textwidth}
\textbf{\algorithmB}
\begin{enumerate}
\item Compute empirical averages from $N$ independent copies of $\x_1
\otimes \x_2$ to form $\h{P}_{1,2} \in \R^{d \times d}$.
Similarly do the same for $\x_1 \otimes \x_3$ to form $\h{P}_{1,3} \in
\R^{k \times k}$, and for $\x_1 \otimes \x_2 \otimes \x_3$ to form
$\h{P}_{1,2,3} \in \R^{d \times d \times d}$.

\item Let $\h{U}_1 \in \R^{d \times k}$ and $\h{U}_2 \in \R^{d \times k}$
be, respectively, matrices of orthonormal left and right singular vectors
of $\h{P}_{1,2}$ corresponding to its top $k$ singular values.
Let $\h{U}_3 \in \R^{d \times k}$ be the matrix of orthonormal right
singular vectors of $\h{P}_{1,3}$ corresponding to its top $k$ singular
values.

\item Pick an invertible matrix $\Theta \in \R^{k \times k}$, with its
$i$-th row denoted as $\v\theta_i^\t \in \R^k$.
In the absence of any prior information about $M_3$, a suitable choice for
$\Theta$ is a random rotation matrix.

Form the matrix \ \
$\h{B}_{1,2,3}(\h{U}_3\v\theta_1) := (\h{U}_1^\t
\h{P}_{1,2,3}(\h{U}_3\v\theta_1) \h{U}_2) (\h{U}_1^\t \h{P}_{1,2}
\h{U}_2)^{-1}$.

Compute $\h{R}_1 \in \R^{k \times k}$ (with unit Euclidean norm columns)
that diagonalizes $\h{B}_{1,2,3}(\h{U}_3\v\theta_1)$, \emph{i.e.}, \ \
$\h{R}_1^{-1} \h{B}_{1,2,3}(\h{U}_3\v\theta_1) \h{R}_1 =
\diag(\h\lambda_{1,1}, \h\lambda_{1,2}, \dotsc, \h\lambda_{1,k})$.
\ \ (Fail if not possible.)

\item For each $i \in \{2,\dotsc,k\}$, obtain the diagonal entries
$\h\lambda_{i,1}, \h\lambda_{i,2}, \dotsc, \h\lambda_{i,k}$ of
$\h{R}_1^{-1} \h{B}_{1,2,3}(\h{U}_3\v\theta_i) \h{R}_1$, and form the
matrix $\h{L} \in \R^{k \times k}$ whose $(i,j)$-th entry is
$\h\lambda_{i,j}$.

\item Return $\h{M}_3 := \h{U}_3 \Theta^{-1} \h{L}$.

\end{enumerate}
\end{minipage}}
\vspace{-4mm}
\caption{General method of moments estimator (\algorithmB).}
\label{fig:general}
\end{figure}

As stated, the \algorithmB\ yields an estimator for $M_3$, but the method
can easily be applied to estimate $M_v$ for all other views $v$.
One caveat is that the estimators may not yield the same ordering of the
columns, due to the unspecified order of the eigenvectors obtained in the
third step of the method, and therefore some care is needed to obtain a
consistent ordering.
We outline one solution in Appendix~\ref{appendix:ordering}.

The sample complexity of \algorithmB\ depends on the specific concentration
properties of $\x_1, \x_2, \x_3$.
We abstract away this dependence in the following condition.
\begin{condition} \label{cond:concentration}
There exist positive scalars $N_0$, $C_{1,2}$, $C_{1,3}$, $C_{1,2,3}$, and
a function $f(N,\delta)$ (decreasing in $N$ and $\delta$) such that for any
$N \geq N_0$ and $\delta \in (0,1)$,
\begin{enumerate}
\item $\Pr\Bigl[
\|\h{P}_{a,b} - P_{a,b}\|_2
\leq C_{a,b} \cdot f(N,\delta) 
\Bigr] \geq 1-\delta$
\quad for $\{a,b\} \in \{\{1,2\},\{1,3\}\}$,

\item $\forall \v{v} \in \R^d$, \
$\Pr\Bigl[
\|\h{P}_{1,2,3}(\v{v}) - P_{1,2,3}(\v{v})\|_2
\leq C_{1,2,3} \cdot \|\v{v}\|_2 \cdot f(N,\delta) 
\Bigr] \geq 1-\delta$.
\end{enumerate}
Moreover (for technical convenience), $\h{P}_{1,3}$ is independent of
$\h{P}_{1,2,3}$ (which may be achieved, say, by splitting a sample of size
$2N$).
\end{condition}
For the discrete models such as the document topic model of
Section~\ref{section:topic-setting} and discrete HMMs~\citep{MR06,HKZ09},
Condition~\ref{cond:concentration} holds with $N_0 = C_{1,2} = C_{1,3}
= C_{1,2,3} = 1$, and $f(N,\delta) = (1+\sqrt{\ln(1/\delta)}) / \sqrt{N}$.
Using standard techniques~(\emph{e.g.}, \citet{CKLS09,Vershynin12}), the
condition can also be shown to hold for mixtures of various continuous
distributions such as multivariate Gaussians.

Now we are ready to present the main theorem of this section (proved in
Appendix~\ref{appendix:general-proof}).

\begin{theorem} \label{theorem:general}
There exists a constant $C>0$ such that the following holds.
Assume the three-view mixture model satisfies Condition~\ref{cond:general}
and Condition~\ref{cond:concentration}.
Pick any $\epsilon \in (0,1)$ and $\delta \in (0,\delta_0)$.
Further, assume $\Theta \in \R^{k \times k}$ is an independent random
rotation matrix distributed uniformly over the Stiefel manifold $\{ Q \in
\R^{k \times k} : Q^\t Q = I \}$.
If the number of samples $N$ satisfies $N \geq N_0$ and
\begin{align*}
f(N,\delta/k) & \leq C \cdot \frac
{\min_{i \neq j}\|M_3(\e_i-\e_j)\|_2 \cdot \sigma_k(P_{1,2})}
{C_{1,2,3} \cdot k^5 \cdot \kappa(M_1)^4}
\cdot \frac
{\delta}
{\ln(k/\delta)}
\cdot \eps
,
\\
f(N,\delta) & \leq C \cdot \min\Biggl\{
\frac
{\min_{i \neq j}\|M_3(\e_i-\e_j)\|_2 \cdot \sigma_k(P_{1,2})^2}
{C_{1,2} \cdot \|P_{1,2,3}\|_2 \cdot k^5 \cdot \kappa(M_1)^4}
\cdot \frac
{\delta}
{\ln(k/\delta)}
, \
\frac
{\sigma_k(P_{1,3})}
{C_{1,3}}
\Biggr\} \cdot \eps
\end{align*}
where $\|P_{1,2,3}\|_2 := \max_{\v{v} \neq \v0} \|P_{1,2,3}(\v{v})\|_2$,
then with probability at least $1-5\delta$, \algorithmB\ returns $\h{M}_3 =
[\h\mu_{3,1}|\h\mu_{3,2}|\dotsb|\h\mu_{3,k}]$ with the following guarantee:
there exists a permutation $\tau$ on $[k]$ such that for each $j \in [k]$,
\begin{equation*}
\|\h\mu_{3,j} - \v\mu_{3,\tau(j)}\|_2
\leq \max_{j' \in [k]} \|\v\mu_{3,j'}\|_2 \cdot \epsilon
.
\end{equation*}
\end{theorem}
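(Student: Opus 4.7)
The plan is to treat \algorithmB\ as a plug-in estimator for the identity $M_3 = U_3 \Theta^{-1} L$ of Lemma~\ref{lemma:general-system} and propagate perturbations through each stage of the algorithm: (i) the empirical moments $\h P_{1,2}, \h P_{1,3}, \h P_{1,2,3}$; (ii) the singular subspaces $\h U_1, \h U_2, \h U_3$; (iii) the observable operators $\h B_{1,2,3}(\h U_3\v\theta_i)$; and (iv) their eigendecompositions, from which the entries of $\h L$ are read off. The final error is decomposed via the triangle inequality into a contribution from $\h U_3 - U_3$ and one from $\h L - L$, multiplied by the norm of $\Theta^{-1}$, which is $1$ since $\Theta$ is orthogonal.

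First I would apply Condition~\ref{cond:concentration} to get $\|\h P_{a,b} - P_{a,b}\|_2 \lesssim f(N,\delta)$ for $\{a,b\} \in \{\{1,2\},\{1,3\}\}$ and, via a union bound over the columns of $\h U_3$, $\|\h P_{1,2,3}(\h U_3 \v\theta_i) - P_{1,2,3}(\h U_3 \v\theta_i)\|_2 \lesssim f(N,\delta/k)$ uniformly for $i\in[k]$ (explaining why $f(N,\delta/k)$ appears). Wedin's $\sin\Theta$ theorem, applied to $\h P_{1,2}$ and $\h P_{1,3}$ under Condition~\ref{cond:general}, then yields orthogonal rotations $O_v$ such that $\|\h U_v - U_v O_v\|_2 \lesssim f(N,\delta)/\sigma_k(P_{a,b})$ for the appropriate $(a,b)$. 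Since the algorithm's intermediate quantities are invariant under right-multiplication of $\h U_v$ by an orthogonal matrix, I can absorb the rotations and treat $\h U_v$ as approximating $U_v$ directly in spectral norm.

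Next, writing $B_i \coloneq B_{1,2,3}(U_3\v\theta_i)$ and $\h B_i$ for its plug-in estimate, I would bound $\|\h B_i - B_i\|_2$ by expanding the product $(\h U_1^\t \h P_{1,2,3}(\h U_3\v\theta_i) \h U_2)(\h U_1^\t \h P_{1,2} \h U_2)^{-1}$, using the identity $\|(A+E)^{-1} - A^{-1}\|_2 \leq 2\|A^{-1}\|_2^2 \|E\|_2$ whenever $\|A^{-1} E\|_2 \leq 1/2$, and bounding $\|(U_1^\t P_{1,2} U_2)^{-1}\|_2 \leq \sigma_k(P_{1,2})^{-1}$; this shows $\|\h B_i - B_i\|_2 \lesssim f(N,\delta)/\sigma_k(P_{1,2})^2$ up to polynomial factors in $\kappa(M_1)$ and $\|P_{1,2,3}\|_2$. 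By Lemma~\ref{lemma:general-operator}, $B_i$ is diagonalized by $R \coloneq U_1^\t M_1$ with eigenvalues $\lambda_{i,j} = \dotp{\v\theta_i, U_3^\t \v\mu_{3,j}}$. A Bauer-Fike argument gives eigenvalue perturbation $|\h\lambda_{i,j} - \lambda_{i,\pi(j)}| \lesssim \kappa(R)\cdot\|\h B_i - B_i\|_2$, and a non-symmetric $\sin\Theta$-type bound for a simple spectrum yields $\|\h R_1 - R D\Pi\|_2 \lesssim \kappa(R)\cdot\|\h B_1 - B_1\|_2 / \mathrm{gap}(B_1)$ for some diagonal rescaling $D$ and permutation $\Pi$, where $\mathrm{gap}(B_1) = \min_{i\neq j}|\lambda_{1,i}-\lambda_{1,j}|$.

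The main obstacle is controlling this eigengap and, simultaneously, the conditioning of the eigenvector matrix through the random choice of $\Theta$. Because $\Theta$ is Haar-uniform on the orthogonal group, its first row $\v\theta_1$ is uniform on $\sphere^{k-1}$, so $\lambda_{1,i} - \lambda_{1,j} = \dotp{\v\theta_1, U_3^\t M_3(\e_i-\e_j)}$ is an isotropic one-dimensional projection of a uniform unit vector, and standard anti-concentration for this distribution gives $|\lambda_{1,i}-\lambda_{1,j}| \gtrsim \|M_3(\e_i-\e_j)\|_2\cdot \delta/(k\sqrt{k})$ with probability $1-\delta/k^2$; a union bound over the $\binom{k}{2}$ pairs costs a factor $\ln(k/\delta)$ and produces the $\min_{i\neq j}\|M_3(\e_i-\e_j)\|_2\cdot\delta/\ln(k/\delta)$ factor of the theorem. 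This gap lower bound, together with sample size conditions making $\|\h B_i - B_i\|_2$ a small fraction of the gap, ensures both that Step 3 succeeds and that $\h R_1$ consistently diagonalizes all $\h B_i$ up to one fixed permutation $\tau$. Reading the diagonal of $\h R_1^{-1}\h B_i \h R_1$ therefore yields $\h L$ with $\|\h L - \Pi L\Pi^{-1}\|_{\max} \leq \max_j\|\v\mu_{3,j}\|_2\cdot\eps$ after tracking constants. Combining with the $\h U_3$ perturbation and $\|\Theta^{-1}\|_2=1$, and taking a final union bound over the five failure events (three concentration events, one Wedin event for $\h P_{1,3}$, one random-gap event), gives the stated $1-5\delta$ conclusion.
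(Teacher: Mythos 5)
Your proposal is correct and follows essentially the same route as the paper's proof: propagate the moment-concentration errors through the singular subspaces (Wedin), the observable operators (inverse perturbation), and the eigendecomposition (Bauer--Fike plus an eigenvector bound for simple spectra), with anti-concentration of the Haar-random rows $\v\theta_i$ supplying the eigengap and the $\delta/\ln(k/\delta)$ factor, and a final triangle inequality splitting the error between $\h{L}$ and the $\h{U}_3$ subspace. The only cosmetic differences are that the paper works directly with the empirical subspaces (defining an intermediate operator $\tl{B}$ exactly diagonalized by $\h{U}_1^\t M_1$) rather than absorbing orthogonal rotations, and its fifth failure event is an upper bound on $\lambda_{\max}$ (needed to control the off-diagonal leakage when $\h{R}_1$ only approximately diagonalizes $\h{B}_i$ for $i\geq 2$) rather than a separate Wedin event.
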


\section{Applications}
\label{section:applications}

In addition to the document clustering model from
Section~\ref{section:topic}, a number of natural latent variable models fit
into this multi-view framework.
We describe two such cases in this section: Gaussian mixture models and
HMMs, both of which have been (at least partially) studied in the
literature.
In both cases, the estimation technique of \algorithmB\ leads to new
learnability results that were not achieved by previous works.

\subsection{Multi-view Gaussian mixture models}
\label{section:gmm}

The standard Gaussian mixture model is parameterized by a mixing weight
$w_j$, mean vector $\v\mu_j \in \R^D$, and covariance matrix $\Sig_j \in
\R^{D \times D}$ for each mixture component $j \in [k]$.
The hidden discrete random variable $h$ selects a component $j$ with
probability $\Pr[h = j] = w_j$; the conditional distribution of the
observed random vector $\x$ given $h$ is a multivariate Gaussian with mean
$\v\mu_h$ and covariance $\Sig_h$.

The multi-view assumption for Gaussian mixture models asserts that for each
component $j$, the covariance $\Sig_j$ has a block diagonal structure
$\Sig_j = \blkdiag(\Sig_{1,j},\Sig_{2,j},\dotsc,\Sig_{\ell,j})$ (a special
case is an axis-aligned Gaussian).
The various blocks correspond to the $\ell$ different views of the data
$\x_1,\x_2,\dotsc,\x_\ell \in \R^d$ (for $d = D / \ell$), which are
conditionally independent given $h$.
The mean vector for each component $j$ is similarly partitioned into the
views as $\v\mu_j = (\v\mu_{1,j},\v\mu_{2,j},\dotsc,\v\mu_{\ell,j})$.
Note that in the case of an axis-aligned Gaussian, each covariance matrix
$\Sig_j$ is diagonal, and therefore the original coordinates $[D]$ can be
partitioned into $\ell = O(D/k)$ views (each of dimension $d = \Omega(k)$)
in any way (say, randomly) provided that Condition~\ref{cond:general}
holds.

Condition~\ref{cond:general} requires that the conditional mean matrix $M_v
= [\v\mu_{v,1} | \v\mu_{v,2} | \dotsb | \v\mu_{v,k}]$ for each view $v$
have full column rank.
This is similar to the non-degeneracy and spreading conditions used in
previous studies of multi-view clustering~\citep{CR08,CKLS09}.
In these previous works, the multi-view and non-degeneracy assumptions are
shown to reduce the minimum separation required for various efficient
algorithms to learn the model parameters.
In comparison, \algorithmB\ does not require a minimum separation condition
at all.
See Appendix~\ref{appendix:product} for details.

While \algorithmB\ recovers just the means of the mixture components (see
Appendix~\ref{appendix:subgaussian} for details concerning
Condition~\ref{cond:concentration}), we remark that a slight variation can
be used to recover the covariances as well.
Note that
\[
\E[\x_v \otimes \x_v|h]
= (M_v \e_h) \otimes (M_v \e_h) + \Sig_{v,h}
= \v\mu_{v,h} \otimes \v\mu_{v,h} + \Sig_{v,h}
\]
for all $v \in [\ell]$.
For a pair of vectors $\v\phi \in \R^d$ and $\v\psi \in \R^d$, define the
matrix $Q_{1,2,3}(\v\phi,\v\psi) \in \R^{d \times d}$ of fourth-order
moments by
$Q_{1,2,3}(\v\phi,\v\psi)
:= \E[(\x_1 \otimes \x_2) \dotp{\v\phi,\x_3} \dotp{\v\psi,\x_3}]$.
\begin{proposition} \label{proposition:gmm-cov}
Under the setting of Lemma~\ref{lemma:general-operator}, the matrix given
by
\[F_{1,2,3}(\v\phi,\v\psi)
:= (U_1^\t Q_{1,2,3}(\v\phi,\v\psi) U_2) (U_1^\t P_{1,2} U_2)^{-1}\]
satisfies
$F_{1,2,3}(\v\phi,\v\psi)
= (U_1^\t M_1) \diag(
\dotp{\v\phi,\v\mu_{3,t}}
\dotp{\v\psi,\v\mu_{3,t}}
+ \dotp{\v\phi,\Sig_{3,t}\v\psi}
: t \in [k]
)
(U_1^\t M_1)^{-1}$
and hence is diagonalizable (in fact, by the same matrices as
$B_{1,2,3}(\v\eta)$).
\end{proposition}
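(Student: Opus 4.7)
The plan is to derive an analogue of Lemma~\ref{lemma:general-moments} for the fourth-order quantity $Q_{1,2,3}(\v\phi,\v\psi)$, which reduces the claim to the algebraic manipulation already performed in the proof of Lemma~\ref{lemma:general-operator}.

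First I would condition on the hidden variable $h$ and use the conditional independence of $\x_1,\x_2,\x_3$ given $h$ to factor the expectation:
\[
Q_{1,2,3}(\v\phi,\v\psi)
= \sum_{t=1}^k w_t \cdot \E[\x_1 \mid h=t] \otimes \E[\x_2 \mid h=t] \cdot \E\bigl[\dotp{\v\phi,\x_3}\dotp{\v\psi,\x_3} \bigm| h=t\bigr].
\]
The first two conditional expectations are $\v\mu_{1,t}$ and $\v\mu_{2,t}$ by definition. For the third factor, rewrite it as $\v\phi^\t \E[\x_3 \x_3^\t \mid h=t]\,\v\psi$, and invoke the multi-view Gaussian assumption which gives $\E[\x_3 \x_3^\t \mid h=t] = \v\mu_{3,t}\v\mu_{3,t}^\t + \Sig_{3,t}$. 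Hence the third factor equals the scalar
\[
\alpha_t \ :=\ \dotp{\v\phi,\v\mu_{3,t}}\dotp{\v\psi,\v\mu_{3,t}} + \dotp{\v\phi,\Sig_{3,t}\v\psi}.
\]
Collecting terms, this yields the matrix factorization
\[
Q_{1,2,3}(\v\phi,\v\psi) = M_1 \diag(\alpha_1,\dotsc,\alpha_k) \diag(\v{w}) M_2^\t,
\]
which is exactly parallel in form to the formula $P_{1,2,3}(\v\eta) = M_1 \diag(M_3^\t \v\eta) \diag(\v{w}) M_2^\t$, with the diagonal vector $M_3^\t \v\eta$ replaced by $\alpha = (\alpha_1,\dotsc,\alpha_k)$.

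At this point I would invoke Lemma~\ref{lemma:general-operator} to note that $U_1^\t P_{1,2} U_2 = (U_1^\t M_1)\diag(\v{w}) M_2^\t U_2$ is invertible, and then apply the same four-line manipulation used in its proof: insert $(U_1^\t M_1)(U_1^\t M_1)^{-1}$ after $\diag(\alpha)$ to separate the $\alpha$-dependent diagonal from the factor $(U_1^\t M_1)\diag(\v{w}) M_2^\t U_2 = U_1^\t P_{1,2} U_2$, which then cancels with its inverse. This yields
\[
F_{1,2,3}(\v\phi,\v\psi) = (U_1^\t M_1) \diag(\alpha_1,\dotsc,\alpha_k) (U_1^\t M_1)^{-1},
\]
which is the claimed identity. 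Since the diagonalizing matrix $U_1^\t M_1$ does not depend on $\v\phi,\v\psi$ (nor on $\v\eta$), it is the same matrix that diagonalizes $B_{1,2,3}(\v\eta)$.

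There is no real obstacle here; the only substantive step is the identification of the conditional second moment $\E[\x_3 \x_3^\t \mid h=t]$ as $\v\mu_{3,t}\v\mu_{3,t}^\t + \Sig_{3,t}$, which is immediate from the definition of covariance. Everything else is a recapitulation of the argument behind Lemmas~\ref{lemma:general-moments} and~\ref{lemma:general-operator} with a different scalar placed on each rank-one term in the mixture decomposition.
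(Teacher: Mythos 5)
Your proposal is correct and follows essentially the same route as the paper: condition on $h$, use conditional independence to factor the expectation, identify $\E[\x_3 \otimes \x_3 \mid h=t] = \v\mu_{3,t} \otimes \v\mu_{3,t} + \Sig_{3,t}$ to obtain $Q_{1,2,3}(\v\phi,\v\psi) = M_1 \diag(\alpha_1,\dotsc,\alpha_k)\diag(\v{w})M_2^\t$, and then repeat the cancellation argument from the proof of Lemma~\ref{lemma:general-operator}. Nothing is missing.
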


Finally, we note that even if Condition~\ref{cond:general} does not hold
(\emph{e.g.}, if $\v\mu_{v,j} \equiv \v{m} \in \R^d$ (say) for all $v \in
[\ell], j \in [k]$ so all of the Gaussians have the same mean), one may
still apply \algorithmB\ to the model $(h,\y_1,\y_2,\dotsc,\y_\ell)$ where
$\y_v \in \R^{d + d(d+1)/2}$ is the random vector that include both first-
and second-order terms of $\x_v$, \emph{i.e.}, $\y_v$ is the concatenation
of $x_v$ and the upper triangular part of $\x_v \otimes \x_v$.
In this case, Condition~\ref{cond:general} is replaced by a requirement
that the matrices
\[ M_v' := \left[ \begin{array}{c|c|c|c} \E[\y_v | h = 1] & \E[\y_v | h =
2] & \dotsb & \E[\y_v | h = k] \end{array} \right] \in \R^{(d + d(d+1)/2)
\times k} \]
of conditional means and covariances have full rank.
This requirement can be met even if the means $\v\mu_{v,j}$ of the mixture
components are all the same.

\subsection{Hidden Markov models}
\label{section:hmm}

A hidden Markov model is a latent variable model in which a hidden state
sequence $h_1,h_2,\dotsc,h_\ell$ forms a Markov chain $h_1 \to h_2 \to
\dotsb \to h_\ell$ over $k$ possible states $[k]$; and given the state
$h_t$ at time $t \in [k]$, the observation $\x_t$ at time $t$ (a random
vector taking values in $\R^d$) is conditionally independent of all other
observations and states.
The directed graphical model is depicted in
Figure~\ref{fig:graphical-model}(b).

The vector $\v\pi \in \Delta^{k-1}$ is the initial state distribution:
\[ \Pr[h_1 = i] = \pi_i , \quad i \in [k] . \]
For simplicity, we only consider time-homogeneous HMMs, although it is
possible to generalize to the time-varying setting.
The matrix $T \in \R^{k \times k}$ is a stochastic matrix describing the
hidden state Markov chain:
\[ \Pr[h_{t+1} = i | h_t = j] = T_{i,j} , \quad i,j \in [k], t \in [\ell-1]
.
\]
Finally, the columns of the matrix $O = [\v{o}_1 | \v{o}_2 | \dotsb |
\v{o}_k] \in \R^{d \times k}$ are the conditional means of the observation
$\x_t$ at time $t$ given the corresponding hidden state $h_t$:
\[ \E[ \x_t | h_t = i ] = O\e_i = \v{o}_i , \quad i \in [k], t \in [\ell] .
\]

Note that both discrete and continuous observations are readily handled in
this framework.
For instance, the conditional distribution of $\x_t$ given $h_t = i$ (for
$i \in [k]$) could be a high-dimensional multivariate Gaussian
with mean $\v{o}_i \in \R^d$.
Such models were not handled by previous
methods~\citep{Chang96,MR06,HKZ09}.

The restriction of the HMM to three time steps, say $t \in \{1,2,3\}$, is
an instance of the three-view mixture model.
\begin{proposition} \label{proposition:hmm-rep}
If the hidden variable $h$ (from the three-view mixture model of
Section~\ref{section:general-setting}) is identified with the second hidden
state $h_2$, then $\{\x_1, \x_2, \x_3\}$ are conditionally independent
given $h$, and the parameters of the resulting three-view mixture model on
$(h,\x_1,\x_2,\x_3)$ are
\begin{alignat*}{6}
\v{w} & := T\v\pi ,
& \qquad & M_1 := O \diag(\v\pi) T^\t \diag(T\v\pi)^{-1} ,
& \qquad & M_2 & := O ,
& \qquad & M_3 := OT .
\end{alignat*}
\end{proposition}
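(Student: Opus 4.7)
The plan is to verify each of the four claims in turn by direct computation from the HMM's joint distribution; there is no deep obstacle here, only bookkeeping.

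First, for conditional independence of $\x_1,\x_2,\x_3$ given $h_2$: the HMM's directed graph has $\x_t$ as a child only of $h_t$, so given $h_t$, $\x_t$ is independent of the remaining variables. The Markov property on the chain $h_1\to h_2\to h_3$ gives $h_1 \perp h_3 \mid h_2$. Combining these (or equivalently applying d-separation: conditioning on $h_2$ blocks every path between $\x_1$ and $\x_3$, between $\x_1$ and $\x_2$, and between $\x_2$ and $\x_3$) yields the desired conditional independence.

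Next, for the mixing weights, marginalize the chain: $\Pr[h_2=j] = \sum_i \Pr[h_2=j\mid h_1=i]\Pr[h_1=i] = \sum_i T_{j,i}\pi_i = (T\v\pi)_j$, so $\v w = T\v\pi$. For $M_2$, the emission model immediately gives $\E[\x_2\mid h_2=j] = O\e_j$, so $M_2 = O$. For $M_3$, condition forward one step:
\[
\E[\x_3\mid h_2=j] = \sum_i \E[\x_3\mid h_3=i]\Pr[h_3=i\mid h_2=j] = \sum_i \v{o}_i T_{i,j} = (OT)\e_j,
\]
so $M_3 = OT$.

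The only step requiring a little care is $M_1$, which needs Bayes' rule to invert the chain. Writing
\[
\Pr[h_1=i\mid h_2=j] = \frac{\Pr[h_2=j\mid h_1=i]\Pr[h_1=i]}{\Pr[h_2=j]} = \frac{T_{j,i}\pi_i}{(T\v\pi)_j},
\]
I get $\E[\x_1\mid h_2=j] = \sum_i \v{o}_i T_{j,i}\pi_i/(T\v\pi)_j$. To match the claimed matrix form, I verify that the $j$-th column of $O\diag(\v\pi)T^\t\diag(T\v\pi)^{-1}$ equals $(T\v\pi)_j^{-1} O\diag(\v\pi)(T^\t\e_j) = (T\v\pi)_j^{-1}\sum_i \v o_i \pi_i T_{j,i}$, which is exactly the expression above. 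So $M_1 = O\diag(\v\pi)T^\t\diag(T\v\pi)^{-1}$, completing the proposition. The main thing to be careful about is the transpose in $T^\t$ (since $T_{j,i}$ rather than $T_{i,j}$ appears after applying Bayes' rule), but this is routine.
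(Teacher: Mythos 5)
Your proof is correct and follows essentially the same route as the paper: the paper likewise invokes the HMM's conditional independence structure, derives $M_1$ via Bayes' rule exactly as you do (this being the only nontrivial column-matrix identification), and dismisses $\v{w}$, $M_2$, $M_3$ as "similar to verify," which your proposal simply writes out explicitly. No gaps.
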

From Proposition~\ref{proposition:hmm-rep}, it is easy to verify that
$B_{3,1,2}(\v\eta) = (U_3^\t O T) \diag(O^\t\v\eta) (U_3^\t O T)^{-1}$.
Therefore, after recovering the observation conditional mean matrix $O$
using \algorithmB, the Markov chain transition matrix can be recovered
using the matrix of right eigenvectors $R$ of $B_{3,1,2}(\v\eta)$ and the
equation $(U_3^\t O)^{-1} R = T$ (up to scaling of the columns).

\subsubsection*{Acknowledgments}

We thank Kamalika Chaudhuri and Tong Zhang for many useful discussions,
Karl Stratos for comments on an early draft, David Sontag and an
anonymous reviewer for some pointers to related work, and Adel Javanmard
for pointing out a problem with Theorem~\ref{theorem:incoherence} in an
earlier version of the paper.

\bibliography{moments}
\bibliographystyle{plainnat}

\appendix

\section{Analysis of \algorithmA}
\label{appendix:topic}

In this appendix, we give an analysis of \algorithmA\ (but defer most
perturbation arguments to Appendix~\ref{appendix:perturb}), and also
present some illustrative empirical results on text data using a modified
implementation.

\subsection{Accuracy of moment estimates}

\begin{lemma} \label{lemma:topic-concentration}
Fix $\delta \in (0,1)$.
Let $\wh\pairs$ be the empirical average of $N$ independent copies of $\x_1
\otimes \x_2$, and let $\wh\triples$ be the empirical average of $N$
independent copies of $(\x_1 \otimes \x_2) \dotp{\v\eta,\x_3}$.
Then
\begin{enumerate}
\item $\displaystyle\Pr\Biggl[ \|\wh\pairs - \pairs\|_\F \leq \frac{1 +
\sqrt{\ln(1/\delta)}}{\sqrt{N}} \Biggr] \geq 1-\delta$, and

\item $\displaystyle\Pr\Biggl[ \forall \v\eta\in\R^d,\
\|\wh\triples(\v\eta) - \triples(\v\eta)\|_\F \leq \frac{\|\v\eta\|_2 ( 1 +
\sqrt{\ln(1/\delta)})}{\sqrt{N}} \Biggr] \geq 1-\delta$.

\end{enumerate}
\end{lemma}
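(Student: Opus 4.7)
The plan is to reduce both bounds to a single vector-valued concentration inequality applied to i.i.d.\ unit-norm random elements in a Hilbert space. The key observation is that because each $\x_v$ is a coordinate vector in $\R^d$, the tensor product $\x_1\otimes\x_2$ is an elementary basis matrix with Frobenius norm $\|\x_1\|_2\|\x_2\|_2=1$, and similarly $\x_1\otimes\x_2\otimes\x_3$ has unit Frobenius norm as a $3$-tensor. So each summand in the empirical averages lies on the unit sphere of the appropriate Hilbert space.

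For the first statement, I would view $\wh\pairs$ as the empirical mean of $N$ i.i.d.\ random elements $Y_i := \x_{1,i}\otimes\x_{2,i}$ of the Hilbert space $\R^{d\times d}$ with the Frobenius inner product. The core ingredient is the standard Hilbert-space Hoeffding bound: if $\|Y_i\|_\F\le 1$, then
\[
\Pr\Biggl[\Bigl\|\tfrac{1}{N}\textstyle\sum_i Y_i - \E Y_1\Bigr\|_\F \le \frac{1+\sqrt{\ln(1/\delta)}}{\sqrt N}\Biggr] \ge 1-\delta .
\]
I would derive this by (a) bounding the expectation via $\E\|\bar Y - \E Y_1\|_\F \le (\E\|\bar Y - \E Y_1\|_\F^2)^{1/2} \le 1/\sqrt N$, which follows from independence and $\|Y_i\|_\F\le 1$; and (b) applying McDiarmid's bounded-differences inequality to the function $(Y_1,\dots,Y_N)\mapsto \|\bar Y - \E Y_1\|_\F$, whose coordinate differences are each at most $2/N$. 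Plugged into the first case with $Y_i=\x_{1,i}\otimes\x_{2,i}$, this directly yields part~1.

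For the second statement, I would apply the same Hilbert-space bound to $Z_i := \x_{1,i}\otimes\x_{2,i}\otimes\x_{3,i}\in\R^{d\times d\times d}$, which again has unit Frobenius norm. This gives $\|\wh\triples - \triples\|_\F \le (1+\sqrt{\ln(1/\delta)})/\sqrt N$ with probability at least $1-\delta$, where $\wh\triples$ is viewed as a $3$-tensor that does not depend on $\v\eta$. To lift this to a uniform-in-$\v\eta$ bound on the matrix $\wh\triples(\v\eta)-\triples(\v\eta)$, I would use the linearity in $\v\eta$ together with Cauchy-Schwarz: writing $D:=\wh\triples-\triples$,
\[
\|D(\v\eta)\|_\F^2 = \sum_{i,j}\Bigl(\sum_x D_{i,j,x}\eta_x\Bigr)^2 \le \|\v\eta\|_2^2\sum_{i,j,x} D_{i,j,x}^2 = \|\v\eta\|_2^2\,\|D\|_\F^2 .
\]
This immediately yields the uniform bound in part~2.

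The only delicate point is calibrating the constant so that the deviation bound reads exactly $(1+\sqrt{\ln(1/\delta)})/\sqrt N$ rather than picking up extraneous $\sqrt{2}$ factors from a naive McDiarmid application; if needed, this can be handled by invoking a sharper Pinelis-type inequality for Hilbert-space-valued martingales, or by absorbing constants into $C$ in the downstream theorem. Everything else---the unit Frobenius norm of outer products of coordinate vectors, the variance bound from independence, and the Cauchy-Schwarz reduction---is routine.
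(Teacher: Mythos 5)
Your proposal is correct and follows essentially the same route as the paper: the paper vectorizes $\wh\pairs$ and $\wh\triples$ and applies its Lemma~\ref{lemma:discrete} (McDiarmid's bounded-differences inequality plus the bound $\E\|\cdot\|_2 \le 1/\sqrt{N}$ on the expectation), then uses exactly your Cauchy--Schwarz step to make the triples bound uniform in $\v\eta$. The $\sqrt{2}$ calibration issue you flag resolves itself because the summands are coordinate vectors of the vectorized space, so the bounded differences are $\sqrt{2}/N$ rather than $2/N$, which is precisely how the paper obtains the stated constant.
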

\begin{proof}
The first claim follows from applying Lemma~\ref{lemma:discrete} to the
vectorizations of $\wh\pairs$ and $\pairs$ (whereupon the Frobenius norm is
the Euclidean norm of the vectorized matrices).
For the second claim, we also apply Lemma~\ref{lemma:discrete} to
$\wh\triples$ and $\triples$ in the same way to obtain, with probability at
least $1-\delta$,
\[
\sum_{i=1}^d \sum_{j=1}^d \sum_{x=1}^d (\wh\triples_{i,j,x} -
\triples_{i,j,x})^2 \leq
\frac{(1+ \sqrt{\ln(1/\delta)})^2}{N}
.
\]
Now condition on this event.
For any $\v\eta = (\eta_1,\eta_2,\dotsc,\eta_d) \in \R^d$,
\begin{align*}
\|\wh\triples(\v\eta) - \triples(\v\eta)\|_\F^2
& =
\sum_{i=1}^d \sum_{j=1}^d \Biggl|
\sum_{x=1}^d \eta_x (\wh\triples_{i,j,x} - \triples_{i,j,x})
\Biggr|^2
\\
& \leq
\sum_{i=1}^d \sum_{j=1}^d
\|\v\eta\|_2^2
\sum_{x=1}^d (\wh\triples_{i,j,x} - \triples_{i,j,x})^2
\\
& \leq \frac{\|\v\eta\|_2^2 (1 + \sqrt{\ln(1/\delta)})^2}{N}
\end{align*}
where the first inequality follows by Cauchy-Schwarz.
\end{proof}

\subsection{Proof of Theorem~\ref{theorem:topic}}
\label{appendix:topic-proof}

Let $E_1$ be the event in which
\begin{equation} \label{eq:pairs-error}
\|\wh\pairs - \pairs\|_2 \leq \frac{1 + \sqrt{\ln(1/\delta)}}{\sqrt{N}}
\end{equation}
and
\begin{equation} \label{eq:triples-error}
\|\wh\triples(\v{v}) - \triples(\v{v})\|_2 \leq \frac{\|v\|_2(1 +
\sqrt{\ln(1/\delta)})}{\sqrt{N}}
\end{equation}
for all $\v{v} \in \R^d$.
By Lemma~\ref{lemma:topic-concentration}, a union bound, and the fact that
$\|A\|_2 \leq \|A\|_\F$, we have $\Pr[E_1] \geq 1 - 2\delta$.
Now condition on $E_1$, and let $E_2$ be the event in which
\begin{equation} \label{eq:gamma}
\gamma :=
\min_{i \neq j} |\dotp{\h{U}\v\theta,M(\e_i-\e_j)}| = \min_{i \neq j}
|\dotp{\v\theta,\h{U}^\t M(\e_i-\e_j)}| > \frac{\sqrt2 \sigma_k(\h{U}^\t M)
\cdot \delta}{\sqrt{ek} {k \choose 2}} .
\end{equation}
By Lemma~\ref{lemma:eigen-gap} and the fact $\|\h{U}^\t M(\e_i-\e_j)\|_2
\geq \sqrt{2}\sigma_k(\h{U}^\t M)$, we have $\Pr[E_2|E_1] \geq 1-\delta$,
and thus $\Pr[E_1 \cap E_2] \geq (1-2\delta)(1-\delta) \geq 1-3\delta$.
So henceforth condition on this joint event $E_1 \cap E_2$.

Let $\veps_0 := \frac{\|\wh\pairs - \pairs\|_2}{\sigma_k(\pairs)}$,
$\veps_1 := \frac{\veps_0}{1-\veps_0}$,
and
$\veps_2 := \frac{\veps_0}{(1-\veps_1^2) \cdot (1- \veps_0 - \veps_1^2)}$.
The conditions on $N$ and the bound in~\eqref{eq:pairs-error} implies that
$\veps_0 < \frac1{1+\sqrt2} \leq \frac12$, so
Lemma~\ref{lemma:matrix-perturb} implies that
$\sigma_k(\h{U}^\t M) \geq \sqrt{1-\veps_1^2} \cdot \sigma_k(M)$,
$\kappa(\h{U}^\t M) \leq \frac{\|M\|_2}{\sqrt{1-\veps_1^2} \cdot
\sigma_k(M)}$, and that $\h{U}^\t \pairs \h{V}$ is invertible.
By Lemma~\ref{lemma:topic-operator},
\[ \tl{B}(\v\eta)
:= (\h{U}^\t \triples(\v\eta) \h{V}) (\h{U}^\t \pairs \h{V})^{-1}
= (\h{U}^\t M) \diag(M^\t \v\eta) (\h{U}^\t M)^{-1}. \]
Thus, Lemma~\ref{lemma:op-perturb} implies
\begin{equation} \label{eq:op-error}
\|\h{B}(\v\eta) - \tl{B}(\v\eta)\|_2
\leq \frac{\|\wh\triples(\v\eta)-\triples(\v\eta)\|_2}{(1 - \veps_0) \cdot
\sigma_k(\pairs)} + \frac{\veps_2}{\sigma_k(\pairs)}
.
\end{equation}
Let $R := \h{U}^\t M \diag(\|\h{U}^\t M\e_1\|_2, \|\h{U}^\t M\e_2\|_2,
\dotsc, \|\h{U}^\t M\e_k\|_2)^{-1}$ and $\veps_3 := \frac{\|\h{B}(\v\eta)
- \tl{B}(\v\eta)\|_2 \cdot \kappa(R)}{\gamma}$.
Note that $R$ has unit norm columns, and that $R^{-1} \tl{B}(\v\eta) R =
\diag(M^\t \v\eta)$.
By Lemma~\ref{lemma:normalize-eig} and the fact $\|M\|_2 \leq \sqrt{k}
\|M\|_1 = \sqrt{k}$,
\begin{equation} \label{eq:rinverse}
\|R^{-1}\|_2 \leq \kappa(\h{U}^\t M)
\leq \frac{\|M\|_2}{\sqrt{1-\veps_1^2} \cdot \sigma_k(M)}
\leq \frac{\sqrt{k}}{\sqrt{1-\veps_1^2} \cdot \sigma_k(M)}
\end{equation}
and
\begin{equation} \label{eq:rcond}
\kappa(R)
\leq \kappa(\h{U}^\t M)^2
\leq \frac{k}{(1-\veps_1^2) \cdot \sigma_k(M)^2}
.
\end{equation}
The conditions on $N$ and the bounds in~\eqref{eq:pairs-error},
\eqref{eq:triples-error}, \eqref{eq:gamma}, \eqref{eq:op-error},
and~\eqref{eq:rcond} imply that $\veps_3 < \frac12$.
By Lemma~\ref{lemma:eig-perturb}, there exists a permutation $\tau$ on
$[k]$ such that, for all $j \in [k]$,
\begin{equation} \label{eq:eigvector-error}
\|s_j \h\xi_j - \h{U}^\t \v\mu_{\tau(j)} / c_j' \|_2
= \|s_j \h\xi_j - R\e_{\tau(j)}\|_2 \leq 4k \cdot \|R^{-1}\|_2 \cdot \veps_3
\end{equation}
where
$s_j := \sign(\dotp{\h\xi_j, \h{U}^\t \v\mu_{\tau(j)}})$ and
$c_j' := \|\h{U}^\t \v\mu_{\tau(j)}\|_2 \leq \|\v\mu_{\tau(j)}\|_2$
(the eigenvectors $\h\xi_j$ are unique up to sign $s_j$ because each
eigenvalue has geometric multiplicity $1$).
Since $\v\mu_{\tau(j)} \in \range(U)$, Lemma~\ref{lemma:matrix-perturb} and
the bounds in~\eqref{eq:eigvector-error} and~\eqref{eq:rinverse} imply
\begin{align}
\|s_j\h{U}\h\xi_j - \v\mu_{\tau(j)} / c_j'\|_2
& \leq \sqrt{\|s_j\h\xi_j - \h{U}^\t \v\mu_{\tau(j)} / c_j'\|_2^2 +
\|\v\mu_{\tau(j)} / c_j'\|_2^2 \cdot \veps_1^2}
\nonumber
\\
& \leq \|s_j\h\xi_j - \h{U}^\t \v\mu_{\tau(j)} / c_j'\|_2 +
\|\v\mu_{\tau(j)} / c_j'\|_2 \cdot \veps_1
\nonumber
\\
& \leq 4k \cdot \|R^{-1}\|_2 \cdot \veps_3 + \veps_1
\nonumber
\\
& \leq 4k \cdot \frac{\sqrt{k}}{\sqrt{1-\veps_1^2} \cdot \sigma_k(M)}
\cdot \veps_3 + \veps_1
\nonumber
.
\end{align}
Therefore, for $c_j := s_j c_j'\dotp{\v1,\h{U} \h\xi_j}$, we have
\[
\|c_j \h\mu_j - \v\mu_{\tau(j)}\|_2
= \|c_j' s_j \h{U}\h\xi_j - \v\mu_{\tau(j)}\|_2
\leq \|\v\mu_{\tau(j)}\|_2 \cdot \biggl(
4k \cdot \frac{\sqrt{k}}{\sqrt{1-\veps_1^2} \cdot \sigma_k(M)}
\cdot \veps_3 + \veps_1
\biggr)
.
\]
Making all of the substitutions into the above bound gives
\begin{align*}
\frac{\|c_j \h\mu_j - \v\mu_{\tau(j)}\|_2}{\|\v\mu_{\tau(j)}\|_2}
& \leq \frac{4k^{1.5}}{\sqrt{1-\veps_1^2} \cdot \sigma_k(M)}
\cdot \frac{k}{(1-\veps_1^2) \cdot \sigma_k(M)^2}
\cdot \frac{\sqrt{ek} \cdot {k \choose 2}}{\sqrt{2(1-\veps_1^2)} \cdot
\sigma_k(M) \cdot \delta}
\\
& \quad{}
\cdot \biggl(
\frac{\|\wh\triples(\v\eta) - \triples(\v\eta)\|_2}{(1-\veps_0) \cdot \sigma_k(\pairs)}
+ \frac{\|\wh\pairs - \pairs\|_2}{(1-\veps_1^2) \cdot (1-\veps_0-\veps_1^2)
\cdot \sigma_k(\pairs)^2}
\biggr)
\\
& \quad{}
+ \frac{\|\wh\pairs - \pairs\|_2}{(1-\veps_0) \cdot \sigma_k(\pairs)}
\\
& \leq C \cdot \frac{k^5}{\sigma_k(M)^4 \cdot \sigma_k(\pairs)^2 \cdot
\delta} \cdot \sqrt{\frac{\ln(1/\delta)}{N}}
.
\end{align*}

\subsection{Some illustrative empirical results}

As a demonstration of feasibility, we applied a modified version of
\algorithmA\ to a subset of articles from the ``20 Newsgroups'' dataset,
specifically those in \texttt{comp.graphics}, \texttt{rec.sport.baseball},
\texttt{sci.crypt}, and \texttt{soc.religion.christian}, where
$\x_1,\x_2,\x_3$ represent three words from the beginning (first third),
middle (middle third), and end (last third) of an article.
We used $k = 25$ (although results were similar for $k \in
\{10,15,20,25,30\}$) and $d = 5441$ (after removing a standard set of $524$
stop-words and applying Porter stemming).
Instead of using a single $\v\eta$ and extracting all eigenvectors of
$\h{B}(\v\eta)$, we extracted a single eigenvector $\v\xi_x$ from
$\h{B}(\e_x)$ for several words $x \in [d]$ (these $x$'s were chosen using
an automatic heuristic based on their statistical leverage scores in
$\wh\pairs$).
Below, for each such $(\h{B}(\e_x),\v\xi_x)$, we report the top $15$ words
$y$ ordered by $\e_y^\t \h{U}\v\xi_x$ value.

\begin{center}
{
\begin{tabular}{|c|c|c|c|c|c|}
\hline
$\h{B}(\e_{\text{format}})$
& $\h{B}(\e_{\text{god}})$
& $\h{B}(\e_{\text{key}})$
& $\h{B}(\e_{\text{polygon}})$
& $\h{B}(\e_{\text{team}})$
& $\h{B}(\e_{\text{today}})$
\\
\hline
source    & god       & key        & polygon   & win     & game     \\
find      & write     & bit        & time      & game    & tiger    \\
post      & jesus     & chip       & save      & run     & bit      \\
image     & christian & system     & refer     & team    & run      \\
feal      & christ    & encrypt    & book      & year    & pitch    \\
intersect & people    & car        & source    & don     & day      \\
email     & time      & repository & man       & watch   & team     \\
rpi       & apr       & ve         & routine   & good    & true     \\
time      & sin       & public     & netcom    & score   & lot      \\
problem   & bible     & escrow     & gif       & yankees & book     \\
file      & day       & secure     & record    & pitch   & lost     \\
program   & church    & make       & subscribe & start   & colorado \\
gif       & person    & clipper    & change    & bit     & fan      \\
bit       & book      & write      & algorithm & time    & apr      \\
jpeg      & life      & nsa        & scott     & wonder  & watch    \\
\hline
\end{tabular}
}
\end{center}
The first and fourth topics appear to be about computer graphics
(\texttt{comp.graphics}), the fifth and sixth about baseball
(\texttt{rec.sports.baseball}), the third about encryption
(\texttt{sci.crypt}), and the second about Christianity
(\texttt{soc.religion.christian}).

We also remark that \algorithmA\ can be implemented so that it makes just
two passes over the training data, and that simple hashing or random
projection tricks can reduce the memory requirement to $O(k^2 + kd)$
(\emph{i.e.}, $\wh\pairs$ and $\wh\triples$ never need to be explicitly
formed).

\section{Proofs and details from Section~\ref{section:general}}
\label{appendix:general}

In this section, we provide ommitted proofs and discussion from
Section~\ref{section:general}.

\subsection{Proof of Lemma~\ref{lemma:general-moments}}
By conditional independence,
\begin{align*}
P_{1,2}
= \E[\E[\x_1 \otimes \x_2|h]]
& = \E[\E[\x_1|h] \otimes \E[\x_2|h]] \\
& = \E[(M_1 \e_h) \otimes (M_2 \e_h)]
= M_1 \biggl( \sum_{t=1}^k w_t \e_t \otimes \e_t \biggr) M_2^\t
= M_1 \diag(\v{w}) M_2^\t
.
\end{align*}
Similarly,
\begin{align*}
P_{1,2,3}(\v\eta)
& = \E[\E[(\x_1 \otimes \x_2) \dotp{\v\eta,\x_3}|h]]
= \E[\E[\x_1|h] \otimes \E[\x_2|h] \dotp{\v\eta,\E[\x_3|h]}] \\
& = \E[(M_1 \e_h) \otimes (M_2 \e_h) \dotp{\v\eta,M_3\e_h}]
= M_1 \biggl( \sum_{t=1}^k w_t \e_h \otimes \e_h \dotp{\v\eta,M_3\e_h}
\biggr) M_2^\t \\
& = M_1 \diag(M_3^\t\v\eta) \diag(\v{w}) M_2^\t
.
\end{align*}

\subsection{Proof of Lemma~\ref{lemma:general-operator}}

We have $U_1^\t P_{1,2} U_2 = (U_1^\t M_1) \diag(\v{w}) (M_2^\t U_2)$ by
Lemma~\ref{lemma:general-moments}, which is invertible by the assumptions
on $U_v$ and Condition~\ref{cond:general}.
Moreover, also by Lemma~\ref{lemma:general-moments},
\begin{align*}
B_{1,2,3}(\v\eta)
& = (U_1^\t P_{1,2,3}(\v\eta) U_2) \ (U_1^\t P_{1,2} U_2)^{-1} \\
& = (U_1^\t M_1 \diag(M_3^\t\v\eta) \diag(\v{w}) M_2^\t U_2) \ (U_1^\t P_{1,2} U_2)^{-1} \\
& = (U_1^\t M_1) \diag(M_3^\t\v\eta) (U_1^\t M_1)^{-1}
\ (U_1^\t M_1 \diag(\v{w}) M_2^\t U_2) \ (U_1^\t P_{1,2} U_2)^{-1} \\
& = (U_1^\t M_1) \diag(M_3^\t\v\eta) (U_1^\t M_1)^{-1}
\ (U_1^\t P_{1,2} U_2)
\ (U_1^\t P_{1,2} U_2)^{-1} \\
& = (U_1^\t M_1) \diag(M_3^\t\v\eta) (U_1^\t M_1)^{-1}
.
\end{align*}

\subsection{Proof of Lemma~\ref{lemma:general-system}}

By Lemma~\ref{lemma:general-operator},
\begin{align*}
(U_1^\t M_1)^{-1} B_{1,2,3}(U_3\v\theta_i) (U_1^\t M_1)
& = \diag(M_3^\t U_3 \v\theta_i) \\
& = \diag(\dotp{\v\theta_i, U_3^\t M_3\e_1},
\dotp{\v\theta_i, U_3^\t M_3\e_2},\dotsc
\dotp{\v\theta_i, U_3^\t M_3\e_k}) \\
& = \diag(\lambda_{i,1},\lambda_{i,2},\dotsc,\lambda_{i,k})
\end{align*}
for all $i \in [k]$, and therefore
\[
L
= \begin{bmatrix}
\dotp{\v\theta_1,U_3^\t M_3\e_1} & \dotp{\v\theta_1,U_3^\t M_3\e_2} & \dotsb &
\dotp{\v\theta_1,U_3^\t M_3\e_k} \\
\dotp{\v\theta_2,U_3^\t M_3\e_1} & \dotp{\v\theta_2,U_3^\t M_3\e_2} & \dotsb &
\dotp{\v\theta_2,U_3^\t M_3\e_3} \\
\vdots          & \vdots          & \ddots & \vdots \\
\dotp{\v\theta_k,U_3^\t M_3\e_1} & \dotp{\v\theta_k,U_3^\t M_3\e_2} & \dotsb &
\dotp{\v\theta_k,U_3^\t M_3\e_k}
\end{bmatrix}
= \Theta U_3^\t M_3
.
\]

\subsection{Ordering issues}
\label{appendix:ordering}

Although \algorithmB\ only explicitly yields estimates for $M_3$, it can
easily be applied to estimate $M_v$ for all other views $v$.
The main caveat is that the estimators may not yield the same ordering of
the columns, due to the unspecified order of the eigenvectors obtained in
the third step of the method, and therefore some care is needed to obtain a
consistent ordering.
However, this ordering issue can be handled by exploiting consistency
across the multiple views.

The first step is to perform the estimation of $M_3$ using \algorithmB\ as
is.
Then, to estimate $M_2$, one may re-use the eigenvectors in $\h{R}_1$ to
diagonalize $\h{B}_{1,3,2}(\v\eta)$, as $B_{1,2,3}(\v\eta)$ and
$B_{1,3,2}(\v\eta)$ share the same eigenvectors.
The same goes for estimating $M_v$ for other all other views $v$ except $v
= 1$.

It remains to provide a way to estimate $M_1$.
Observe that $M_2$ can be estimated in at least two ways: via the operators
$\h{B}_{1,3,2}(\v\eta)$, or via the operators
$\h{B}_{3,1,2}(\v\eta)$.
This is because the eigenvalues of $B_{3,1,2}(\v\eta)$ and
$B_{1,3,2}(\v\eta)$ are the identical.
Because the eigenvalues are also sufficiently separated from each other,
the eigenvectors $\h{R}_3$ of $\h{B}_{3,1,2}(\v\eta)$ can be put in the
same order as the eigenvectors $\h{R}_1$ of $\h{B}_{1,3,2}$ by
(approximately) matching up their respective corresponding eigenvalues.
Finally, the appropriately re-ordered eigenvectors $\h{R}_3$ can then be
used to diagonalize $\h{B}_{3,2,1}(\v\eta)$ to estimate $M_1$.

\subsection{Estimating the mixing weights}
\label{appendix:mixing-weights}

Given the estimate of $\h{M}_3$, one can obtain an estimate of $\v{w}$
using
\[ \h{w} := \h{M}_3^\dag \h\E[\x_3] \]
where $A^\dag$ denotes the Moore-Penrose pseudoinverse of $A$ (though other
generalized inverses may work as well), and $\h\E[\x_3]$ is the empirical
average of $\x_3$.
This estimator is based on the following observation:
\[ \E[\x_3] = \E[\E[\x_3|h]] = M_3\E[\e_h] = M_3\v{w} \]
and therefore
\[ M_3^\dag \E[\x_3] = M_3^\dag M_3 \v{w} = \v{w} \]
since $M_3$ has full column rank.

\if 0
Suppose one has estimates $\h{M}_2$ and $\h{M}_3$ of $M_2$ and $M_3$,
respectively.
Then to estimate $\v{w}$, compute
\[ \h{w}_i := \e_i^\t (\h{U}_2^\t \h{M}_2)^{-1} (\h{U}_2^\t \h{P}_{2,3}
\h{U}_3) (\h{M}_3^\t \h{U}_3)^{-1} \e_i , \quad \forall i \in [k] . \]
This estimate is based on the observation from
Lemma~\ref{lemma:general-operator} that
\[ \h{U}_2^\t P_{2,3} \h{U}_3 = (\h{U}_2^\t M_2) \diag(\v{w}) (M_3^\t
\h{U}_3)
. \]
\fi

\subsection{Proof of Theorem~\ref{theorem:general}}
\label{appendix:general-proof}

The proof is similar to that of Theorem~\ref{theorem:topic}, so we just
describe the essential differences.
As before, most perturbation arguments are deferred to
Appendix~\ref{appendix:perturb}.

First, let $E_1$ be the event in which
\begin{align*}
\|\h{P}_{1,2} - P_{1,2}\|_2
& \leq C_{1,2} \cdot f(N,\delta) 
,
\\
\|\h{P}_{1,3} - P_{1,3}\|_2
& \leq C_{1,3} \cdot f(N,\delta) 
\end{align*}
and
\[
\|\h{P}_{1,2,3}(\h{U}_3\v\theta_i) - P_{1,2,3}(\h{U}_3\v\theta_i)\|_2 \leq
C_{1,2,3} \cdot f(N,\delta/k)
\]
for all $i \in [k]$.
Therefore by Condition~\ref{cond:concentration} and a union bound, we have
$\Pr[E_1] \geq 1-3\delta$.
Second, let $E_2$ be the event in which
\[
\gamma :=
\min_{i \in [k]}
\min_{j \neq j'}
|\dotp{\v\theta_i,\h{U}_3^\t M_3(\e_j-\e_{j'})}|
> \frac{\min_{j \neq j'} \|\h{U}_3^\t M_3(\e_j - \e_{j'})\|_2 \cdot
\delta}{\sqrt{ek} {k \choose 2} k}
\]
and
\[
\lambda_{\max} :=
\max_{i,j \in [k]}
|\dotp{\v\theta_i,\h{U}_3^\t M_3\e_j}|
\leq \frac{\max_{j \in [k]} \|M_3\e_j\|_2}{\sqrt{k}}
\Bigl(1 + \sqrt{2\ln(k^2/\delta)} \Bigr)
.
\]
Since each $\v\theta_i$ is distributed uniformly over $\sphere^{k-1}$, it
follows from Lemma~\ref{lemma:eigen-gap} and a union bound that
$\Pr[E_2|E_1] \geq 1-2\delta$.
Therefore $\Pr[E_1 \cap E_2] \geq (1-3\delta)(1-2\delta) \geq 1-5\delta$.

Let $U_3 \in \R^{d \times k}$ be the matrix of top $k$ orthonormal left
singular vectors of $M_3$.
By Lemma~\ref{lemma:matrix-perturb} and the conditions on $N$, we have
$\sigma_k(\h{U}_3^\t U_3) \geq 1/2$, and therefore
\[ \gamma \geq \frac{\min_{i \neq i'} \|M_3(\e_i - \e_{i'})\|_2 \cdot
\delta}{2\sqrt{ek}{k \choose 2} k}
\quad\text{and}\quad
\frac{\lambda_{\max}}{\gamma}
\leq \frac{\sqrt{e} k^3 (1 + \sqrt{2\ln(k^2/\delta)})}{\delta}
\cdot \kappa'(M_3)
\]
where
\[
\kappa'(M_3)
:=
\frac{\max_{i \in [m]} \|M_3\e_i\|_2}
{\min_{i \neq i'} \|M_3(\e_i - \e_{i'})\|_2}
.
\]

Let $\v\eta_i := \h{U}_3\v\theta_i$ for $i \in [k]$.
By Lemma~\ref{lemma:matrix-perturb}, $\h{U}_1^\t P_{1,2} \h{U}_2$ is
invertible, so we may define $\tl{B}_{1,2,3}(\v\eta_i) := (\h{U}_1^\t
P_{1,2,3}(\v\eta_i) \h{U}_2) (\h{U}_1^\t P_{1,2} \h{U}_2)^{-1}$.
By Lemma~\ref{lemma:general-operator},
\[
\tl{B}_{1,2,3}(\v\eta_i)
= (\h{U}_1^\t M_1) \diag(M_3^\t\v\eta_i) (\h{U}_1^\t M_1)^{-1}
.
\]
Also define $R := \h{U}_1^\t M_1 \diag(\|\h{U}_1^\t M_1\e_1\|_2,
\|\h{U}_1^\t M_1\e_2\|_2, \dotsc, \|\h{U}_1^\t M_1\e_k\|_2)^{-1}$.
Using most of the same arguments in the proof of
Theorem~\ref{theorem:topic}, we have
\begin{align}
\|R^{-1}\|_2 & \leq 2\kappa(M_1)
,
\label{eq:rinverse-general}
\\
\kappa(R) & \leq 4\kappa(M_1)^2
,
\label{eq:rcond-general}
\\
\|\h{B}_{1,2,3}(\v\eta_i) - \tl{B}_{1,2,3}(\v\eta_i)\|_2
& \leq
\frac{2\|\h{P}_{1,2,3}(\v\eta_i) - P_{1,2,3}(\v\eta_i)\|_2}{\sigma_k(P_{1,2})}
+ \frac{2\|P_{1,2,3}\|_2 \cdot \|\h{P}_{1,2} - P_{1,2}\|_2}{\sigma_k(P_{1,2})^2}
.
\nonumber
\end{align}
By Lemma~\ref{lemma:eig-perturb}, the operator $\h{B}_{1,2,3}(\v\eta_1)$
has $k$ distinct eigenvalues, and hence its matrix of right eigenvectors
$\h{R}_1$ is unique up to column scaling and ordering.
This in turn implies that $\h{R}_1^{-1}$ is unique up to row scaling and
ordering.
Therefore, for each $i \in [k]$, the $\h\lambda_{i,j} = \e_j^\t
\h{R}_1^{-1} \h{B}_{1,2,3}(\v\eta_i) \h{R}_1 \e_j$ for $j \in [k]$ are
uniquely defined up to ordering.
Moreover, by Lemma~\ref{lemma:eig-perturb-all} and the above bounds on
$\|\h{B}_{1,2,3}(\v\eta_i)-\tl{B}_{1,2,3}(\v\eta_i)\|_2$ and $\gamma$,
there exists a permutation $\tau$ on $[k]$ such that, for all $i,j \in
[k]$,
\begin{align}
|\h\lambda_{i,j} - \lambda_{i,\tau(j)}|
& \leq \Bigl( 3\kappa(R) + 16k^{1.5} \cdot \kappa(R) \cdot \|R^{-1}\|_2^2
\cdot \lambda_{\max}/\gamma \Bigr)
\cdot \|\h{B}_{1,2,3}(\v\eta_i) -\tl{B}_{1,2,3}(\v\eta_i)\|_2
\nonumber
\\
& \leq \Bigl( 12\kappa(M_1)^2 +
256k^{1.5} \cdot \kappa(M_1)^4 \cdot \lambda_{\max}/\gamma \Bigr)
\cdot \|\h{B}_{1,2,3}(\v\eta_i) -\tl{B}_{1,2,3}(\v\eta_i)\|_2
\label{eq:lambda-error}
\end{align}
where the second inequality uses~\eqref{eq:rinverse-general}
and~\eqref{eq:rcond-general}.
Let $\h\nu_j := (\h\lambda_{1,j}, \h\lambda_{2,j}, \dotsc, \h\lambda_{k,j})
\in \R^k$ and $\v\nu_j := (\lambda_{1,j}, \lambda_{2,j}, \dotsc,
\lambda_{k,j}) \in \R^k$.
Observe that $\v\nu_j = \Theta \h{U}_3^\t M_3\e_j = \Theta\h{U}_3^\t
\v\mu_{3,j}$ by Lemma~\ref{lemma:general-system}.
By the orthogonality of $\Theta$, the fact $\|\v{v}\|_2 \leq \sqrt{k}
\|\v{v}\|_\infty$ for $\v{v} \in \R^k$, and~\eqref{eq:lambda-error}
\begin{align}
\|\Theta^{-1}\h\nu_j - \h{U}_3^\t \v\mu_{3,\tau(j)}\|_2
& = \|\Theta^{-1}(\h\nu_j - \v\nu_{\tau(j)})\|_2
\nonumber \\
& = \|\h\nu_j - \v\nu_{\tau(j)}\|_2
\nonumber \\
& \leq \sqrt{k} \cdot \|\h\nu_j - \v\nu_{\tau(j)}\|_\infty
\nonumber \\
& = \sqrt{k} \cdot \max_i |\h\lambda_{i,j} - \lambda_{i,\tau(j)}|
\nonumber \\
& \leq
\Bigl( 12\sqrt{k} \cdot \kappa(M_1)^2 +
256k^2 \cdot \kappa(M_1)^4 \cdot \lambda_{\max}/\gamma \Bigr)
\cdot \|\h{B}_{1,2,3}(\v\eta_i) -\tl{B}_{1,2,3}(\v\eta_i)\|_2
\nonumber
.
\end{align}
Finally, by Lemma~\ref{lemma:matrix-perturb} (as applied to $\h{P}_{1,3}$
and $P_{1,3}$),
\[
\|\h\mu_{3,j} - \v\mu_{3,\tau(j)}\|_2
\leq \|\Theta^{-1}\h\nu_j - \h{U}_3^\t \v\mu_{3,\tau(j)}\|_2
+ 2\|\v\mu_{3,\tau(j)}\|_2 \cdot \frac{\|\h{P}_{1,3} -
P_{1,3}\|_2}{\sigma_k(P_{1,3})}
.
\]
Making all of the substitutions into the above bound gives
\begin{align*}
\|\h\mu_{3,j} - \v\mu_{3,\tau(j)}\|_2
& \leq
\frac{C}{6} \cdot
k^5 \cdot \kappa(M_1)^4 \cdot \kappa'(M_3) \cdot \frac{\ln(k/\delta)}{\delta}
\cdot \biggl(
\frac{C_{1,2,3} \cdot f(N,\delta/k)}{\sigma_k(P_{1,2})}
+ \frac{\|P_{1,2,3}\|_2 \cdot C_{1,2} \cdot f(N/\delta)}{\sigma_k(P_{1,2})^2}
\biggr)
\\
& \qquad\qquad{}
+
\frac{C}{6} \cdot
\|\v\mu_{3,\tau(j)}\|_2 \cdot \frac{C_{1,3} \cdot
f(N,\delta)}{\sigma_k(P_{1,3})}
\\
& \leq \frac12
\Bigl(\max_{j' \in [k]} \|\v\mu_{3,j'}\|_2 + \|\v\mu_{3,\tau(j)}\|_2
\Bigr) \cdot \epsilon
\\
& \leq \max_{j' \in [k]} \|\v\mu_{3,j'}\|_2 \cdot \epsilon
.
\end{align*}

\section{Perturbation analysis for observable operators}
\label{appendix:perturb}

The following lemma establishes the accuracy of approximating the
fundamental subspaces (\emph{i.e.}, the row and column spaces) of a matrix
$X$ by computing the singular value decomposition of a perturbation $\h{X}$
of $X$.

\begin{lemma} \label{lemma:matrix-perturb}
Let $X \in \R^{m \times n}$ be a matrix of rank $k$.
Let $U \in \R^{m \times k}$ and $V \in \R^{n \times k}$ be matrices with
orthonormal columns such that $\range(U)$ and $\range(V)$ are spanned by,
respectively, the left and right singular vectors of $X$ corresponding to
its $k$ largest singular values.
Similarly define $\h{U} \in \R^{m \times k}$ and $\h{V} \in \R^{n \times
k}$ relative to a matrix $\h{X} \in \R^{m \times n}$.
Define $\eps_X := \|\h{X} - X\|_2$, $\veps_0 :=
\frac{\eps_X}{\sigma_k(X)}$, and $\veps_1 := \frac{\veps_0}{1-\veps_0}$.
Assume $\veps_0 < \frac{1}{2}$.
Then
\begin{enumerate}
\item $\veps_1 < 1$;

\item $\sigma_k(\h{X}) = \sigma_k(\h{U}^\t \h{X} \h{V}) \geq (1-\veps_0)
\cdot \sigma_k(X) > 0$;

\item $\sigma_k(\h{U}^\t U) \geq \sqrt{1-\veps_1^2}$;

\item $\sigma_k(\h{V}^\t V) \geq \sqrt{1-\veps_1^2}$;

\item $\sigma_k(\h{U}^\t X \h{V}) \geq (1-\veps_1^2) \cdot \sigma_k(X)$;

\item for any $\h\alpha \in \R^k$ and
$\v{v} \in \range(U)$,
$\|\h{U}\h\alpha - \v{v}\|_2^2 \leq \|\h\alpha - \h{U}^\t
\v{v}\|_2^2 + \|\v{v}\|_2^2 \cdot \veps_1^2$.

\end{enumerate}
\end{lemma}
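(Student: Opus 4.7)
The plan is to dispatch the six items in order, each building on the earlier ones. Item 1 is immediate from $\veps_0 < 1/2$: the map $t \mapsto t/(1-t)$ is increasing on $[0,1)$ and takes $1/2$ to $1$, so $\veps_1 < 1$. For item 2, I would apply Weyl's inequality to get $|\sigma_k(\h{X}) - \sigma_k(X)| \leq \|\h{X} - X\|_2 = \eps_X$, which yields $\sigma_k(\h{X}) \geq (1-\veps_0)\sigma_k(X) > 0$; the equality $\sigma_k(\h{X}) = \sigma_k(\h{U}^\t \h{X}\h{V})$ then follows because $\h{U}\h{U}^\t \h{X} \h{V}\h{V}^\t$ is the best rank-$k$ approximation to $\h{X}$ and thus has the same top $k$ singular values.

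For items 3 and 4, I would invoke Wedin's $\sin\Theta$ theorem. Since $X$ has rank exactly $k$, its $(k+1)$-st singular value is zero, and by Weyl $\sigma_{k+1}(\h{X}) \leq \eps_X$. The singular-value gap between $\sigma_k(X)$ and the tail spectrum of $\h{X}$ is therefore at least $\sigma_k(X) - \eps_X = (1-\veps_0)\sigma_k(X)$, and Wedin's bound gives $\|\sin\Theta(\h{U},U)\|_2 \leq \eps_X / ((1-\veps_0)\sigma_k(X)) = \veps_1$. Because the singular values of $\h{U}^\t U$ are the cosines of the principal angles between $\range(\h{U})$ and $\range(U)$, this yields $\sigma_k(\h{U}^\t U) \geq \sqrt{1 - \veps_1^2}$, and the same argument on the right singular subspaces gives the analogous bound for $\h{V}^\t V$.

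Item 5 follows by writing $X = U \Sigma V^\t$ with $\Sigma \in \R^{k \times k}$ the diagonal of the top $k$ singular values, so that $\h{U}^\t X \h{V} = (\h{U}^\t U)\, \Sigma\, (V^\t \h{V})$ is a product of three $k \times k$ matrices. Using the standard multiplicativity bound $\sigma_k(ABC) \geq \sigma_k(A)\sigma_k(B)\sigma_k(C)$ together with items 3 and 4 gives $\sigma_k(\h{U}^\t X \h{V}) \geq (1-\veps_1^2)\sigma_k(X)$.

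For item 6, which I expect to be the only mildly delicate step, I would decompose via orthogonality: since $\h{U}\h\alpha - \h{U}\h{U}^\t\v{v} \in \range(\h{U})$ is orthogonal to $(I - \h{U}\h{U}^\t)\v{v}$, Pythagoras gives
\[
\|\h{U}\h\alpha - \v{v}\|_2^2 = \|\h\alpha - \h{U}^\t \v{v}\|_2^2 + \|(I - \h{U}\h{U}^\t)\v{v}\|_2^2,
\]
where I used that $\h{U}$ has orthonormal columns on the first term. Writing $\v{v} = U\v{c}$ with $\|\v{c}\|_2 = \|\v{v}\|_2$ (since $\v{v} \in \range(U)$), the second term equals $\v{c}^\t(I - (\h{U}^\t U)^\t(\h{U}^\t U))\v{c}$, whose eigenvalues are $1 - \sigma_i(\h{U}^\t U)^2 \leq 1 - (1-\veps_1^2) = \veps_1^2$ by item 3. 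Therefore $\|(I-\h{U}\h{U}^\t)\v{v}\|_2^2 \leq \veps_1^2 \|\v{v}\|_2^2$, completing the bound.
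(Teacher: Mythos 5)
Your proposal is correct and follows essentially the same route as the paper's proof: Weyl's theorem for item 2, Wedin's $\sin\Theta$ theorem (with the same resulting bound $\veps_1$) for items 3--4, multiplicativity of the smallest singular value for item 5, and the orthogonal Pythagorean decomposition of $\h{U}\h\alpha - \v{v}$ for item 6. The only differences are cosmetic---you phrase the subspace bound via cosines of principal angles and the eigenvalues of $I - (\h{U}^\t U)^\t(\h{U}^\t U)$, where the paper works with the complement $\h{U}_\perp^\t U$---and these are equivalent.
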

\begin{proof}
The first claim follows from the assumption on $\veps_0$.
The second claim follows from the assumptions and Weyl's theorem
(Lemma~\ref{lemma:weyl}).
Let the columns of $\h{U}_\perp \in \R^{m \times (m-k)}$ be an orthonormal
basis for the orthogonal complement of $\range(\h{U})$, so that
$\|\h{U}_\perp^\t U\|_2 \leq \eps_X / \sigma_k(\h{X}) \leq \veps_1$ by
Wedin's theorem (Lemma~\ref{lemma:wedin}).
The third claim then follows because $\|\h{U}^\t U\|_2^2 = 1 -
\|\h{U}_\perp^\t U\|_2^2 \geq 1 - \veps_1^2$.
The fourth claim is analogous to the third claim, and the fifth claim
follows from the third and fourth.
The sixth claim follows writing $\v{v} = U\v\alpha$ for some $\v\alpha \in
\R^k$, and using the decomposition
$\|\h{U}\h\alpha - \v{v}\|_2^2
= \|\h{U}\h\alpha - \h{U}\h{U}^\t \v{v}\|_2^2
+ \|\h{U}_\perp\h{U}_\perp^\t \v{v}\|_2^2
= \|\h\alpha - \h{U}^\t \v{v}\|_2^2
+ \|\h{U}_\perp^\t (U\v\alpha)\|_2^2
\leq \|\h\alpha - \h{U}^\t \v{v}\|_2^2
+ \|\h{U}_\perp^\t U\|_2^2 \|\v\alpha\|_2^2
\leq \|\h\alpha - \h{U}^\t \v{v}\|_2^2
+ \|\v\alpha\|_2^2 \cdot \veps_1^2
= \|\h\alpha - \h{U}^\t U\v\alpha\|_2^2
+ \|\v{v}\|_2^2 \cdot \veps_1^2$
where the last inequality follows from the argument for the third claim,
and the last equality uses the orthonormality of the columns of $U$.
\end{proof}

The next lemma bounds the error of the observation operator in terms of the
errors in estimating the second-order and third-order moments.

\begin{lemma} \label{lemma:op-perturb}
Consider the setting and definitions from Lemma~\ref{lemma:matrix-perturb},
and let $Y \in \R^{m \times n}$ and $\h{Y} \in \R^{m \times n}$ be given.
Define $\veps_2 := \frac{\veps_0}{(1-\veps_1^2) \cdot (1- \veps_0 -
\veps_1^2)}$ and $\eps_Y := \|\h{Y} - Y\|_2$.
Assume $\veps_0 < \frac1{1+\sqrt2}$.
Then
\begin{enumerate}
\item $\h{U}^\t \h{X} \h{V}$ and $\h{U}^\t X \h{V}$ are both invertible,
and $\|(\h{U}^\t \h{X} \h{V})^{-1} - (\h{U}^\t X \h{V})^{-1}\|_2 \leq
\frac{\veps_2}{\sigma_k(X)}$;

\item $\|(\h{U}^\t \h{Y} \h{V}) (\h{U}^\t \h{X} \h{V})^{-1} - (\h{U}^\t Y
\h{V}) (\h{U}^\t X \h{V})^{-1}\|_2 \leq \frac{\eps_Y}{(1 - \veps_0) \cdot
\sigma_k(X)} + \frac{\|Y\|_2 \cdot \veps_2}{\sigma_k(X)}$.

\end{enumerate}

\end{lemma}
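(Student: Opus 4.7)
The plan is to derive both claims from Lemma~\ref{lemma:matrix-perturb} together with two standard perturbation identities: the resolvent identity $A^{-1}-B^{-1} = A^{-1}(B-A)B^{-1}$ for the first claim, and a one-step telescoping $A_1 B_1^{-1} - A_2 B_2^{-1} = (A_1-A_2)B_1^{-1} + A_2(B_1^{-1}-B_2^{-1})$ for the second. Since $\veps_0 < 1/(1+\sqrt{2}) < 1/2$, Lemma~\ref{lemma:matrix-perturb} applies: items 2 and 5 give $\sigma_k(\h{U}^\t\h{X}\h{V}) = \sigma_k(\h{X}) \geq (1-\veps_0)\sigma_k(X) > 0$ and $\sigma_k(\h{U}^\t X\h{V}) \geq (1-\veps_1^2)\sigma_k(X) > 0$, so both matrices are invertible with $\|(\h{U}^\t\h{X}\h{V})^{-1}\|_2 \leq 1/((1-\veps_0)\sigma_k(X))$ and $\|(\h{U}^\t X\h{V})^{-1}\|_2 \leq 1/((1-\veps_1^2)\sigma_k(X))$.

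For the first claim, setting $A = \h{U}^\t\h{X}\h{V}$ and $B = \h{U}^\t X\h{V}$, the difference $B - A = \h{U}^\t(X-\h{X})\h{V}$ satisfies $\|B-A\|_2 \leq \eps_X$ since $\h{U}$ and $\h{V}$ have orthonormal columns. Applying the resolvent identity and the two singular value bounds above gives
\[
\|(\h{U}^\t\h{X}\h{V})^{-1} - (\h{U}^\t X\h{V})^{-1}\|_2
\leq \frac{\eps_X}{(1-\veps_0)(1-\veps_1^2)\,\sigma_k(X)^2}
= \frac{1}{\sigma_k(X)} \cdot \frac{\veps_0}{(1-\veps_0)(1-\veps_1^2)}.
\]
To compare this with $\veps_2/\sigma_k(X)$, observe that $1-\veps_0 \geq 1-\veps_0-\veps_1^2$, and the condition $\veps_0 < 1/(1+\sqrt{2})$ is exactly what is needed to guarantee $1-\veps_0-\veps_1^2 > 0$ (a quick algebraic check: at $\veps_0 = 1/(1+\sqrt{2})$ one has $\veps_1 = 1/\sqrt{2}$, giving $1-\veps_0-\veps_1^2 = 3/2-\sqrt{2} > 0$). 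Hence $\veps_2$ is well-defined and the bound is at most $\veps_2/\sigma_k(X)$, as required.

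For the second claim, apply the telescoping decomposition
\[
(\h{U}^\t\h{Y}\h{V})(\h{U}^\t\h{X}\h{V})^{-1} - (\h{U}^\t Y\h{V})(\h{U}^\t X\h{V})^{-1}
= \h{U}^\t(\h{Y}-Y)\h{V}\,(\h{U}^\t\h{X}\h{V})^{-1} + \h{U}^\t Y\h{V}\,\bigl[(\h{U}^\t\h{X}\h{V})^{-1} - (\h{U}^\t X\h{V})^{-1}\bigr].
\]
The first term has spectral norm at most $\eps_Y/((1-\veps_0)\sigma_k(X))$, using $\|\h{U}^\t(\h{Y}-Y)\h{V}\|_2 \leq \eps_Y$ and the bound on $\|(\h{U}^\t\h{X}\h{V})^{-1}\|_2$. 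The second term is bounded by $\|\h{U}^\t Y\h{V}\|_2 \leq \|Y\|_2$ times the bound from the first claim, giving $\|Y\|_2 \cdot \veps_2/\sigma_k(X)$. Summing yields the stated inequality.

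There is no real obstacle here beyond bookkeeping: every singular value estimate is already supplied by Lemma~\ref{lemma:matrix-perturb}, and the identities are standard. The only mildly subtle point is the algebraic verification that the hypothesis $\veps_0 < 1/(1+\sqrt{2})$ (rather than merely $\veps_0 < 1/2$) is what pins down positivity of the denominator $1-\veps_0-\veps_1^2$ in $\veps_2$.
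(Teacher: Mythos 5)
Your proof is correct and follows essentially the same route as the paper: both arguments draw the needed invertibility and singular-value lower bounds from Lemma~\ref{lemma:matrix-perturb} and then handle the second claim with the identical one-step telescoping of the product. The only difference is in the first claim, where the paper invokes Lemma~\ref{lemma:inverse-perturb} (the Neumann-type bound $\|\tl{A}^{-1}-A^{-1}\|_2 \leq \|E\|_2\|A^{-1}\|_2^2/(1-\|A^{-1}E\|_2)$) while you use the exact resolvent identity together with the already-available bound on $\|(\h{U}^\t\h{X}\h{V})^{-1}\|_2$; this gives a marginally sharper intermediate constant ($1-\veps_0$ in place of $1-\veps_0-\veps_1^2$) that you then correctly dominate by $\veps_2/\sigma_k(X)$, though your remark that $\veps_0 < 1/(1+\sqrt{2})$ is ``exactly'' the threshold for $1-\veps_0-\veps_1^2>0$ is slightly imprecise (it is sufficient, with value $3/2-\sqrt{2}$ at the endpoint, but the true threshold is a bit larger).
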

\begin{proof}
Let $\h{S} := \h{U}^\t \h{X} \h{V}$ and $\tl{S} := \h{U}^\t X \h{V}$.
By Lemma~\ref{lemma:matrix-perturb}, $\h{U}^\t \h{X} \h{V}$ is invertible,
$\sigma_k(\tl{S}) \geq \sigma_k(\h{U}^\t U) \cdot \sigma_k(X) \cdot
\sigma_k(\h{V}^\t V) \geq (1-\veps_1^2) \cdot \sigma_k(X)$ (so $\tl{S}$ is
also invertible), and $\|\h{S} - \tl{S}\|_2 \leq \veps_0 \cdot \sigma_k(X)
\leq \frac{\veps_0}{1-\veps_1^2} \cdot \sigma_k(\tl{S})$.
The assumption on $\veps_0$ implies $\frac{\veps_0}{1-\veps_1^2} < 1$;
therefore the Lemma~\ref{lemma:inverse-perturb} implies $\|\h{S}^{-1} -
\tl{S}^{-1}\|_2 \leq \frac{\|\h{S} - \tl{S}\|_2 / \sigma_k(\tl{S})}{1 -
\|\h{S} - \tl{S}\|_2 / \sigma_k(\tl{S})} \cdot \frac1{\sigma_k(\tl{S})}
\leq \frac{\veps_2}{\sigma_k(X)}$, which proves the first claim.
For the second claim, observe that
\begin{align*}
\lefteqn{
\|(\h{U}^\t \h{Y} \h{V}) (\h{U}^\t \h{X} \h{V})^{-1} - (\h{U}^\t Y \h{V})
(\h{U}^\t X \h{V})^{-1}\|_2
} \\
& \leq
\|(\h{U}^\t \h{Y} \h{V}) (\h{U}^\t \h{X} \h{V})^{-1} - (\h{U}^\t Y \h{V})
(\h{U}^\t \h{X} \h{V})^{-1}\|_2
+ \|(\h{U}^\t Y \h{V}) (\h{U}^\t \h{X} \h{V})^{-1} - (\h{U}^\t Y \h{V}) (\h{U}^\t X \h{V})^{-1}\|_2
\\
& \leq
\|\h{U}^\t \h{Y} \h{V} - \h{U}^\t Y \h{V}\|_2
\cdot \|(\h{U}^\t \h{X} \h{V})^{-1}\|_2
+\|\h{U}^\t Y \h{V}\|_2 \cdot \|(\h{U}^\t \h{X} \h{V})^{-1} - (\h{U}^\t X \h{V})^{-1}\|_2
\\
& \leq \frac{\eps_Y}{(1-\veps_0) \cdot \sigma_k(X)}
+ \frac{\|Y\|_2 \cdot \veps_2}{\sigma_k(X)}
\end{align*}
where the first inequality follows from the triangle inequality, the second
follows from the sub-multiplicative property of the spectral norm, and the
last follows from Lemma~\ref{lemma:matrix-perturb} and the first claim.
\end{proof}

The following lemma establishes standard eigenvalue and eigenvector
perturbation bounds.

\begin{lemma} \label{lemma:eig-perturb}
Let $A \in \R^{k \times k}$ be a diagonalizable matrix with $k$ distinct
real eigenvalues $\lambda_1,\lambda_2,\dotsc,\lambda_k \in \R$
corresponding to the (right) eigenvectors $\v\xi_1, \v\xi_2, \dotsc,
\v\xi_k \in \R^k$ all normalized to have $\|\v\xi_i\|_2 = 1$.
Let $R \in \R^{k \times k}$ be the matrix whose $i$-th column is $\v\xi_i$.
Let $\h{A} \in \R^{k \times k}$ be a matrix.
Define $\eps_A := \|\h{A} - A\|_2$,
$\gamma_A := \min_{i \neq j} |\lambda_i - \lambda_j|$,
and
$\veps_3 := \frac{\kappa(R) \cdot \eps_A}{\gamma_A}$.
Assume $\veps_3 < \frac12$.
Then there exists a permutation $\tau$ on $[k]$ such that the
following holds:
\begin{enumerate}
\item $\h{A}$ has $k$ distinct real eigenvalues $\h\lambda_1, \h\lambda_2,
\dotsc, \h\lambda_k \in \R$, and $|\h\lambda_{\tau(i)} - \lambda_i| \leq
\veps_3 \cdot \gamma_A$ for all $i \in [k]$;

\item $\h{A}$ has corresponding (right) eigenvectors $\h\xi_1, \h\xi_2,
\dotsc, \h\xi_k \in \R^k$, normalized to have $\|\h\xi_i\|_2 = 1$, which
satisfy $\|\h\xi_{\tau(i)} - \v\xi_i\|_2 \leq 4(k-1) \cdot \|R^{-1}\|_2
\cdot \veps_3$ for all $i \in [k]$;

\item the matrix $\h{R} \in \R^{k \times k}$ whose $i$-th column is
$\h\xi_{\tau(i)}$ satisfies $\|\h{R}-R\|_2 \leq \|\h{R}-R\|_\F \leq
4k^{1/2}(k-1) \cdot \|R^{-1}\|_2 \cdot \veps_3$.

\end{enumerate}

\end{lemma}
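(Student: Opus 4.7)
The plan is to reduce to the case where the underlying matrix is diagonal by passing to the eigenbasis of $A$. Writing $A = R D R^{-1}$ with $D = \diag(\lambda_1, \ldots, \lambda_k)$, the similarity transform yields $R^{-1}\h{A}R = D + E$ where $E := R^{-1}(\h{A}-A)R$ satisfies $\|E\|_2 \leq \kappa(R)\eps_A = \veps_3 \gamma_A$. Since $\h{A}$ and $D+E$ share eigenvalues, the problem becomes one of perturbing a diagonal matrix by a small matrix $E$, which is much more amenable to explicit coordinate analysis.

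For claim 1, the key tool is the Bauer--Fike theorem in the form: every eigenvalue of $\h{A}$ lies within $\kappa(R)\eps_A = \veps_3 \gamma_A$ of some $\lambda_i$. Because the $\lambda_i$ are separated by at least $\gamma_A$ and $\veps_3 < 1/2$, the closed disks of radius $\veps_3 \gamma_A$ around the $\lambda_i$ are pairwise disjoint. A standard continuity argument, tracking eigenvalues along the path $A + t(\h{A} - A)$ for $t \in [0,1]$, then shows each disk contains exactly one eigenvalue of $\h{A}$, which yields the matching permutation $\tau$ with $|\h\lambda_{\tau(i)} - \lambda_i| \leq \veps_3 \gamma_A$. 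Reality of $\h\lambda_{\tau(i)}$ follows because any non-real eigenvalue of the real matrix $\h{A}$ would be paired with its complex conjugate, which is equally distant from the real center $\lambda_i$ and hence would lie in the same disk, contradicting uniqueness.

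For claim 2, I would expand each unit eigenvector in the eigenbasis as $\h\xi_{\tau(i)} = \sum_j c_j \v\xi_j = R \v{c}$. Substituting into $\h{A}\h\xi_{\tau(i)} = \h\lambda_{\tau(i)} \h\xi_{\tau(i)}$ and applying $R^{-1}$ produces the coordinatewise identities $c_j(\lambda_j - \h\lambda_{\tau(i)}) = -[R^{-1}(\h{A}-A)\h\xi_{\tau(i)}]_j$. For $j \neq i$, the gap bound $|\lambda_j - \h\lambda_{\tau(i)}| \geq (1-\veps_3)\gamma_A > \gamma_A/2$ from claim 1, combined with $\|R^{-1}(\h{A}-A)\h\xi_{\tau(i)}\|_\infty \leq \|R^{-1}\|_2 \eps_A$, bounds each $|c_j|$ by $2\|R^{-1}\|_2 \eps_A/\gamma_A$. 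I would then fix signs so that $c_i \geq 0$; using $\|\h\xi_{\tau(i)}\|_2 = \|\v\xi_i\|_2 = 1$ with the reverse triangle inequality yields $|c_i - 1| \leq \sum_{j \neq i}|c_j|$, so $\|\h\xi_{\tau(i)} - \v\xi_i\|_2 \leq 2 \sum_{j \neq i}|c_j| \leq 4(k-1)\|R^{-1}\|_2 \eps_A/\gamma_A$. Absorbing the factor $\kappa(R) \geq 1$ (equivalently, $\eps_A/\gamma_A \leq \veps_3$) gives the stated bound $4(k-1)\|R^{-1}\|_2 \veps_3$. Claim 3 then follows by summing squared column errors: $\|\h{R} - R\|_F^2 \leq k \cdot \bigl(4(k-1)\|R^{-1}\|_2\veps_3\bigr)^2$, together with the standard inequality $\|\h{R}-R\|_2 \leq \|\h{R}-R\|_F$.

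The main obstacle is the content of claim 1: upgrading the one-sided Bauer--Fike conclusion (``every perturbed eigenvalue is close to some original eigenvalue'') into a bijective matching, and simultaneously ruling out complex eigenvalues. Both issues resolve cleanly once $\veps_3 < 1/2$ forces the Bauer--Fike disks to be disjoint, at which point the rest of the argument is essentially careful bookkeeping in the eigenbasis.
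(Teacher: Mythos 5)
Your proposal is correct and follows essentially the same route as the paper: Bauer--Fike applied to $R^{-1}(\h{A}-A)R$ plus disjointness of the $\veps_3\gamma_A$-disks and the conjugate-pair argument for claim 1, expansion of each $\h\xi_{\tau(i)}$ in the eigenbasis of $A$ with coordinatewise coefficient bounds for claim 2, and the column-sum comparison of norms for claim 3. The only (cosmetic) differences are that you make the bijective matching explicit via a continuity argument and place the perturbed eigenvalue in the denominator of the coefficient bound, whereas the paper keeps the unperturbed gap $|\lambda_j-\lambda_i|\geq\gamma_A$ and absorbs $|\h\lambda_i-\lambda_i|$ into the numerator; both yield the same $4(k-1)\|R^{-1}\|_2\veps_3$ bound.
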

\begin{proof}
The Bauer-Fike theorem (Lemma~\ref{lemma:bauer-fike}) implies that for
every eigenvalue $\h\lambda_i$ of $\h{A}$, there exists an eigenvalue
$\lambda_j$ of $A$ such that $|\h\lambda_i - \lambda_j| \leq
\|R^{-1}(\h{A}-A)R\|_2 \leq \veps_3 \cdot \gamma_A$.
Therefore, the assumption on $\veps_3$ implies that there exists a
permutation $\tau$ such that $|\h\lambda_{\tau(i)} - \lambda_i| \leq
\veps_3 \cdot \gamma_A < \frac{\gamma_A}{2}$.
In particular,
\begin{equation} \label{eq:gerschgorin}
\Bigl| \Bigl[\lambda_i - \frac{\gamma_A}{2}, \lambda_i + \frac{\gamma_A}{2}
\Bigr] \cap \{\h\lambda_1,\h\lambda_2,\dotsc,\h\lambda_k\} \Bigr| = 1 ,
\quad \forall i \in [k]
.
\end{equation}
Since $\h{A}$ is real, all non-real eigenvalues of $\h{A}$ must come in
conjugate pairs; so the existence of a non-real eigenvalue of $\h{A}$ would
contradict~\eqref{eq:gerschgorin}.
This proves the first claim.

For the second claim, assume for notational simplicity that the permutation
$\tau$ is the identity permutation.
Let $\h{R} \in \R^{k \times k}$ be the matrix whose $i$-th column is
$\h\xi_i$.
Define $\v\zeta_i^\t \in \R^k$ to be the $i$-th row of $R^{-1}$
(\emph{i.e.}, the $i$-th left eigenvector of $A$), and similarly define
$\h\zeta_i^\t \in \R^k$ to be the $i$-th row of $\h{R}^{-1}$.
Fix a particular $i \in [k]$.
Since $\{\v\xi_1, \v\xi_2, \dotsc, \v\xi_k\}$ forms a basis for $\R^k$, we
can write $\h\xi_i = \sum_{j=1}^k c_{i,j} \v\xi_j$ for some coefficients
$c_{i,1}, c_{i,2}, \dotsc, c_{i,k} \in \R$.
We may assume $c_{i,i} \geq 0$ (or else we replace $\h\xi_i$ with
$-\h\xi_i$).
The fact that $\|\h\xi_i\|_2 = \|\v\xi_j\|_2 = 1$ for all $j \in
[k]$ and the triangle inequality imply $1 = \|\h\xi_i\|_2 \leq
c_{i,i} \|\v\xi_i\|_2 + \sum_{j \neq i} |c_{i,j}| \|\v\xi_j\|_2 = c_{i,i} +
\sum_{j \neq i} |c_{i,j}|$, and therefore
\[ \|\h\xi_i - \v\xi_i\|_2 \leq |1-c_{i,i}| \|\v\xi_i\|_2
+ \sum_{j \neq i} |c_{i,j}
\|\v\xi_j\|_2 \leq 2 \sum_{j\neq i} |c_{i,j}|
\]
again by the triangle inequality.
Therefore, it suffices to show $|c_{i,j}| \leq 2\|R^{-1}\|_2 \cdot \veps_3$
for $j \neq i$ to prove the second claim.

Observe that $A\h\xi_i = A(\sum_{i'=1}^k c_{i,i'} \v\xi_{i'}) =
\sum_{i'=1}^k c_{i,i'} \lambda_{i'} \v\xi_{i'}$, and therefore
\[ \sum_{i'=1}^k c_{i,i'} \lambda_{i'} \v\xi_{i'}
+ (\h{A} - A) \h\xi_i
= \h{A} \h\xi_i
= \h\lambda_i \h\xi_i
= \lambda_i \sum_{i'=1}^k c_{i,i'} \v\xi_{i'} 
+ (\h\lambda_i - \lambda_i) \h\xi_i
.
\]
Multiplying through the above equation by $\v\zeta_j^\t$, and using the
fact that $\v\zeta_j^\t \v\xi_{i'} = \I\{j = i'\}$ gives
\[ c_{i,j} \lambda_j + \v\zeta_i^\t (\h{A} - A) \h\xi_i
= \lambda_i c_{i,j} + (\h\lambda_i - \lambda_i) \v\zeta_j^\t \h\xi_i
.
\]
The above equation rearranges to $(\lambda_j - \lambda_i) c_{i,j} =
(\h\lambda_i - \lambda_i) \v\zeta_j^\t \h\xi_i + \v\zeta_j^\t (A - \h{A})
\h\xi_i$ and therefore
\[ |c_{i,j}|
\leq \frac{\|\v\zeta_j\|_2 \cdot (|\h\lambda_i - \lambda_i| + \|(\h{A} -
A)\h\xi_i\|_2)}{|\lambda_j - \lambda_i|}
\leq \frac{\|R^{-1}\|_2 \cdot (|\h\lambda_i - \lambda_i| + \|\h{A} -
A\|_2)}{|\lambda_j - \lambda_i|}
\]
by the Cauchy-Schwarz and triangle inequalities and the sub-multiplicative
property of the spectral norm.
The bound $|c_{i,j}| \leq 2\|R^{-1}\|_2 \cdot \veps_3$ then follows from
the first claim.

The third claim follows from standard comparisons of matrix norms.
\end{proof}

The next lemma gives perturbation bounds for estimating the eigenvalues of
simultaneously diagonalizable matrices $A_1, A_2, \dotsc, A_k$.
The eigenvectors $\h{R}$ are taken from a perturbation of the first matrix
$A_1$, and are then subsequently used to approximately diagonalize the
perturbations of the remaining matrices $A_2, \dotsc, A_k$.
In practice, one may use Jacobi-like procedures to approximately solve the
joint eigenvalue problem.

\begin{lemma} \label{lemma:eig-perturb-all}
Let $A_1, A_2, \dotsc, A_k \in \R^{k \times k}$ be diagonalizable matrices
that are diagonalized by the same matrix invertible $R \in \R^{k \times k}$
with unit length columns $\|R\e_j\|_2 = 1$, such that each $A_i$ has $k$
distinct real eigenvalues:
\[ R^{-1} A_i R = \diag(\lambda_{i,1}, \lambda_{i,2}, \dotsc,
\lambda_{i,k})
.
\]
Let $\h{A}_1, \h{A}_2, \dotsc, \h{A}_k \in \R^{k \times k}$ be given.
Define $\eps_A := \max_i \|\h{A}_i - A_i\|_2$, $\gamma_A := \min_i \min_{j
\neq j'} |\lambda_{i,j} - \lambda_{i,j'}|$,
$\lambda_{\max} := \max_{i,j} |\lambda_{i,j}|$,
$\veps_3 := \frac{\kappa(R) \cdot \eps_A}{\gamma_A}$,
and $\veps_4 := 4k^{1.5} \cdot \|R^{-1}\|_2^2 \cdot \veps_3$.
Assume $\veps_3 < \frac12$ and $\veps_4 < 1$.
Then there exists a permutation $\tau$ on $[k]$ such that the
following holds.
\begin{enumerate}
\item The matrix $\h{A}_1$ has $k$ distinct real eigenvalues
$\h\lambda_{1,1}, \h\lambda_{1,2}, \dotsc, \h\lambda_{1,k} \in \R$, and
$|\h\lambda_{1,j} - \lambda_{1,\tau(j)}| \leq \veps_3 \cdot \gamma_A$ for
all $j \in [k]$.

\item There exists a matrix $\h{R} \in \R^{k \times k}$ whose $j$-th column
is a right eigenvector corresponding to $\h\lambda_{1,j}$, scaled so
$\|\h{R}\e_j\|_2 = 1$ for all $j \in [k]$, such that $\|\h{R} - R_\tau\|_2
\leq \frac{\veps_4}{\|R^{-1}\|_2}$, where $R_\tau$ is the matrix obtained
by permuting the columns of $R$ with $\tau$.

\item The matrix $\h{R}$ is invertible and its inverse satisfies
$\|\h{R}^{-1} - R_\tau^{-1}\|_2 \leq \|R^{-1}\|_2 \cdot
\frac{\veps_4}{1-\veps_4}$;

\item For all $i \in \{2,3,\dotsc,k\}$ and all $j \in [k]$,
the $(j,j)$-th element of $\h{R}^{-1} \h{A}_i \h{R}$, denoted by
$\h\lambda_{i,j} := \e_j^\t \h{R}^{-1} \h{A}_i \h{R} \e_j$, satisfies
\begin{align*}
|\h\lambda_{i,j} - \lambda_{i,\tau(j)}|
& \leq
\biggl( 1 + \frac{\veps_4}{1-\veps_4} \biggr)
\cdot \biggl( 1 + \frac{\veps_4}{\sqrt{k} \cdot \kappa(R)} \biggr)
\cdot \veps_3 \cdot \gamma_A
\\
& \quad{}
+ \kappa(R)
\cdot \biggl( \frac{1}{1-\veps_4} + \frac{1}{\sqrt{k} \cdot \kappa(R)} +
\frac{1}{\sqrt{k}} \cdot \frac{\veps_4}{1-\veps_4} \biggr)
\cdot \veps_4 \cdot \lambda_{\max}
.
\end{align*}
If $\veps_4 \leq \frac12$, then $|\h\lambda_{i,j} - \lambda_{i,\tau(j)}|
\leq 3\veps_3 \cdot \gamma_A + 4\kappa(R) \cdot \veps_4 \cdot
\lambda_{\max}$.

\end{enumerate}
\end{lemma}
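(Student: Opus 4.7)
My plan is to establish the four claims in sequence, using Lemma~\ref{lemma:eig-perturb} for the single-matrix perturbation, Lemma~\ref{lemma:inverse-perturb} for the inverse perturbation in part 3, and a telescoped three-term decomposition for part 4. Parts 1 and 2 follow immediately from applying Lemma~\ref{lemma:eig-perturb} to the pair $(A_1, \hat A_1)$, whose hypothesis $\varepsilon_3 < 1/2$ matches ours: this yields a permutation $\tau$ (possibly after inverting the one produced by Lemma~\ref{lemma:eig-perturb}, to match the indexing convention here), the claimed eigenvalue estimates, and an eigenvector bound $\|\hat\xi_j - R_\tau e_j\|_2 \leq 4(k-1)\|R^{-1}\|_2 \varepsilon_3$ for each $j$. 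Taking the $j$-th column of $\hat R$ to be $\hat\xi_j$ so that $\hat R$ and $R_\tau$ are column-aligned, the Frobenius conversion gives $\|\hat R - R_\tau\|_2 \leq \|\hat R - R_\tau\|_\F \leq 4k^{1/2}(k-1)\|R^{-1}\|_2 \varepsilon_3 \leq \varepsilon_4/\|R^{-1}\|_2$. I will also record the sharper column-wise bound $\|(\hat R - R_\tau) e_j\|_2 \leq \varepsilon_4/(\sqrt{k}\|R^{-1}\|_2)$, which is what produces the $1/\sqrt{k}$ savings in part 4.

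For part 3, I note that $R_\tau = R P$ for a permutation matrix $P$, so $\|R_\tau^{-1}\|_2 = \|R^{-1}\|_2$. Combined with part 2, $\|R_\tau^{-1}\|_2 \cdot \|\hat R - R_\tau\|_2 \leq \varepsilon_4 < 1$, so Lemma~\ref{lemma:inverse-perturb} applies and yields the stated $\|\hat R^{-1} - R_\tau^{-1}\|_2 \leq \|R^{-1}\|_2 \cdot \varepsilon_4/(1-\varepsilon_4)$.

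Part 4 is the main step. Since $\lambda_{i,\tau(j)} = e_j^\top R_\tau^{-1} A_i R_\tau e_j$, I bound the $(j,j)$-entry of the telescoped difference
\[
\hat R^{-1}\hat A_i \hat R - R_\tau^{-1} A_i R_\tau = \hat R^{-1}(\hat A_i - A_i)\hat R + \hat R^{-1} A_i(\hat R - R_\tau) + (\hat R^{-1} - R_\tau^{-1}) A_i R_\tau.
\]
Applying $|e_j^\top MN e_j| \leq \|M^\top e_j\|_2 \|N e_j\|_2$ term by term, I use: $\|\hat R e_j\|_2 = \|R_\tau e_j\|_2 = 1$ by the unit-column normalization; $\|\hat R^{-\top} e_j\|_2 \leq \|\hat R^{-1}\|_2 \leq \|R^{-1}\|_2/(1-\varepsilon_4)$ from part 3; $\|A_i\|_2 \leq \kappa(R)\lambda_{\max}$ from $A_i = R D_i R^{-1}$; $\|R^{-1}\|_2 \varepsilon_A \leq \varepsilon_3 \gamma_A$, which follows from $\varepsilon_A = \varepsilon_3 \gamma_A/\kappa(R)$ together with $\|R\|_2 \geq 1$ (since the columns of $R$ are unit vectors); and crucially the column-wise bound on $(\hat R - R_\tau) e_j$ from part 2, together with its analogue for $(\hat R^{-1} - R_\tau^{-1}) e_j$ obtained below. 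Summing the three contributions reproduces the stated quantitative bound; the simplified form for $\varepsilon_4 \leq 1/2$ then follows by absorbing $(1-\varepsilon_4)^{-1} \leq 2$, $\kappa(R) \geq 1$, and $1/\sqrt{k} \leq 1$.

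The principal obstacle will be preserving the $1/\sqrt{k}$ improvements in the middle and last terms of part 4. The naive spectral-norm bound $\|\hat R - R_\tau\|_2 \leq \varepsilon_4/\|R^{-1}\|_2$ from part 2 is too loose; I must exploit that only the $j$-th column of $\hat R - R_\tau$ appears when bounding the $(j,j)$-entry. For the third term, I derive a column-wise analogue from the identity $\hat R^{-1} - R_\tau^{-1} = -\hat R^{-1}(\hat R - R_\tau) R_\tau^{-1}$, which gives $\|(\hat R^{-1} - R_\tau^{-1}) e_j\|_2$ (or its transpose analogue) in terms of $\|(\hat R - R_\tau) e_j\|_2$ and the operator-norm bounds on $\hat R^{-1}$ and $R_\tau^{-1}$; this is what keeps the $1/\sqrt{k}$ factor through the third term and matches the shape of the claimed bound.
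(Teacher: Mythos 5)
Your treatment of claims 1--3 coincides with the paper's: both apply Lemma~\ref{lemma:eig-perturb} to $(A_1,\h{A}_1)$ for the eigenvalues and eigenvectors (with the permutation inverted to match the indexing) and Lemma~\ref{lemma:inverse-perturb} to $(\h{R},R_\tau)$ for the inverse. For claim 4 you genuinely diverge: the paper writes $\h\zeta_j^\t \h{A}_i \h\xi_j - \v\zeta_j^\t A_i \v\xi_j$ as the full seven-term bilinear expansion in the three perturbations (row of the inverse, matrix, column), bounding each cross term separately and using the left/right eigen-relations $\v\zeta_j^\t A_i = \lambda_{i,\tau(j)}\v\zeta_j^\t$ and $A_i\v\xi_j = \lambda_{i,\tau(j)}\v\xi_j$ on the two pure-$A_i$ cross terms; your three-term telescoping groups these differently. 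Your route is cleaner but does \emph{not} literally reproduce the displayed bound in claim 4: your middle term contributes $\kappa(R)\cdot\frac{1}{\sqrt{k}(1-\veps_4)}\cdot\veps_4\lambda_{\max}$ where the paper's corresponding terms contribute $\kappa(R)\cdot(\frac{1}{\sqrt{k}\,\kappa(R)}+\frac{\veps_4}{\sqrt{k}(1-\veps_4)})\cdot\veps_4\lambda_{\max}$, and the former exceeds the latter by $\frac{\veps_4\lambda_{\max}}{\sqrt{k}}(\kappa(R)-1)$, which your smaller first term does not compensate once $\kappa(R)$ is moderately large. This is harmless in context --- your total still yields $2\veps_3\gamma_A + 4\kappa(R)\veps_4\lambda_{\max}$ when $\veps_4\le\frac12$, i.e.\ the simplified bound, which is all that the proof of Theorem~\ref{theorem:general} uses --- but you should state the bound you actually obtain rather than claim to recover the paper's expression verbatim.

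Two points in your sketch need tightening. First, in the third telescoped term $(\h{R}^{-1}-R_\tau^{-1})A_iR_\tau$ you must bound $\|A_iR_\tau\e_j\|_2 = |\lambda_{i,\tau(j)}|\le\lambda_{\max}$ via the eigen-relation; if you instead use $\|A_i\|_2\|R_\tau\e_j\|_2\le\kappa(R)\lambda_{\max}$ together with $\|\h{R}^{-1}-R_\tau^{-1}\|_2\le\|R^{-1}\|_2\frac{\veps_4}{1-\veps_4}$, you pick up an extra factor of $\|R^{-1}\|_2$ that the stated bound does not tolerate. Second, your proposed ``column-wise analogue'' for the inverse perturbation does not deliver a $1/\sqrt{k}$ saving: in $\e_j^\t(\h{R}^{-1}-R_\tau^{-1}) = -\e_j^\t\h{R}^{-1}(\h{R}-R_\tau)R_\tau^{-1}$ the factor $\h{R}-R_\tau$ is sandwiched between two operator norms, so only its full spectral norm is accessible. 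Fortunately no such saving is needed there once the eigen-relation is used, so the argument closes; just drop that claim from the write-up.
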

\begin{proof}
The first and second claims follow from applying
Lemma~\ref{lemma:eig-perturb} to $A_1$ and $\h{A}_1$.
The third claim follows from applying Lemma~\ref{lemma:inverse-perturb}
to $\h{R}$ and $R_\tau$.
To prove the last claim, first define $\v\zeta_j^\t \in \R^k$
($\h\zeta_j^\t$) to be the $j$-th row of $R_\tau^{-1}$ ($\h{R}^{-1}$), and
$\v\xi_j \in \R^k$ ($\h\xi_j$) to be the $j$-th column of $R_\tau$
($\h{R}$), so $\v\zeta_j^\t A_i \v\xi_j = \lambda_{i,\tau(j)}$ and
$\h\zeta_j^\t \h{A}_i \h\xi_j = \e_j^\t \h{R}^{-1} \h{A}_i \h{R} \e_j =
\h\lambda_{i,j}$.
By the triangle and Cauchy-Schwarz inequalities and the sub-multiplicative
property of the spectral norm,
\begin{align}
\lefteqn{
|\h\lambda_{i,j} - \lambda_{i,\tau(j)}|
} \nonumber \\
& = |\h\zeta_j^\t \h{A}_i \h\xi_j - \v\zeta_j^\t A_i \v\xi_j|
\nonumber \\
& = |\v\zeta_j^\t (\h{A}_i - A_i) \v\xi_j
+ \v\zeta_j^\t (\h{A}_i - A_i) (\h\xi_j - \v\xi_j)
+ (\h\zeta_j - \v\zeta_j)^\t (\h{A}_i - A_i) \v\xi_j
\nonumber \\
& \quad{}
+ (\h\zeta_j - \v\zeta_j)^\t (\h{A}_i - A_i) (\h\xi_j - \v\xi_j)
+ (\h\zeta_j - \v\zeta_j)^\t A_i \v\xi_j
+ \v\zeta_j^\t A_i (\h\xi_j - \v\xi_j)
+ (\h\zeta_j - \v\zeta_j)^\t A_i (\h\xi_j - \v\xi_j)|
\nonumber \\
& \leq |\v\zeta_j^\t (\h{A}_i - A_i) \v\xi_j|
+ |\v\zeta_j^\t (\h{A}_i - A_i) (\h\xi_j - \v\xi_j)|
+ |(\h\zeta_j - \v\zeta_j)^\t (\h{A}_i - A_i) \v\xi_j|
\nonumber \\
& \quad{}
+ |(\h\zeta_j - \v\zeta_j)^\t (\h{A}_i - A_i) (\h\xi_j - \v\xi_j)|
+ |(\h\zeta_j - \v\zeta_j)^\t A_i \v\xi_j|
+ |\v\zeta_j^\t A_i (\h\xi_j - \v\xi_j)|
+ |(\h\zeta_j - \v\zeta_j)^\t A_i (\h\xi_j - \v\xi_j)|
\nonumber \\
& \leq \|\v\zeta_j\|_2 \cdot \|\h{A}_i - A_i\|_2 \cdot \|\v\xi_j\|_2
+ \|\v\zeta_j\|_2 \cdot \|\h{A}_i - A_i\|_2 \cdot \|\h\xi_j - \v\xi_j\|_2
+ \|\h\zeta_j - \v\zeta_j\|_2 \cdot \|\h{A}_i - A_i\|_2 \|\v\xi_j\|_2
\nonumber \\
& \quad{}
+ \|\h\zeta_j - \v\zeta_j\|_2 \cdot \|\h{A}_i - A_i\|_2 \cdot \|\h\xi_j -
\v\xi_j\|_2
\nonumber \\
& \quad{}
+ \|\h\zeta_j - \v\zeta_j\|_2 \cdot \|\lambda_{i,\tau(j)} \v\xi_j\|_2
+ \|\lambda_{i,\tau(j)} \v\zeta_j\|_2 \cdot \|\h\xi_j - \v\xi_j\|_2
+ \|\h\zeta_j - \v\zeta_j\|_2 \cdot \|A_i\|_2 \cdot \|\h\xi_j - \v\xi_j\|_2
.
\label{eq:decomp}
\end{align}
Observe that
$\|\v\zeta_j\|_2 \leq \|R^{-1}\|_2$,
$\|\v\xi_j\|_2 \leq \|R\|_2$,
$\|\h\zeta_j - \v\zeta_j\|_2 \leq \|\h{R}^{-1} -
R_\tau^{-1}\|_2 \leq \|R^{-1}\|_2 \cdot \frac{\veps_4}{1-\veps_4}$,
$\|\h\xi_j - \v\xi_j\|_2 \leq 4k \cdot \|R^{-1}\|_2 \cdot \veps_3$ (by
Lemma~\ref{lemma:eig-perturb}), and $\|A_i\|_2 \leq \|R\|_2 \cdot (\max_j
|\lambda_{i,j}|) \cdot \|R^{-1}\|_2$.
Therefore, continuing from~\eqref{eq:decomp}, 
$|\h\lambda_{i,j} - \lambda_{i,\tau(j)}|$ is bounded as
\begin{align*}
|\h\lambda_{i,j} - \lambda_{i,\tau(j)}|
& \leq
\|R^{-1}\|_2 \cdot \|R\|_2 \cdot \eps_A
+ \|R^{-1}\|_2 \cdot \eps_A \cdot 4k \cdot \|R^{-1}\|_2 \cdot \veps_3
+ \|R^{-1}\|_2 \cdot \frac{\veps_4}{1-\veps_4} \cdot \eps_A \cdot \|R\|_2
\\
& \quad{}
+ \|R^{-1}\|_2 \cdot \frac{\veps_4}{1-\veps_4}
\cdot \eps_A \cdot 4k \cdot \|R^{-1}\|_2 \cdot \veps_3
\\
& \quad{}
+ \lambda_{\max} \cdot \|R^{-1}\|_2 \cdot \frac{\veps_4}{1-\veps_4} \cdot
\|R\|_2
+ \lambda_{\max} \cdot \|R^{-1}\|_2 \cdot 4k \cdot \|R^{-1}\|_2 \cdot
\veps_3
\\
& \quad{}
+ \|R^{-1}\|_2 \cdot \frac{\veps_4}{1-\veps_4} \cdot \|R\|_2 \cdot
\lambda_{\max} \cdot \|R^{-1}\|_2 \cdot 4k \cdot \|R^{-1}\|_2 \cdot \veps_3
\\
& = \veps_3 \cdot \gamma_A
+ \frac{\veps_4}{\sqrt{k} \cdot \kappa(R)} \cdot \veps_3 \cdot \gamma_A
+ \frac{\veps_4}{1-\veps_4} \cdot \veps_3 \cdot \gamma_A
\\
& \quad{}
+ \frac{\veps_4}{\sqrt{k} \cdot \kappa(R)} \cdot \frac{\veps_4}{1-\veps_4}
\cdot \veps_3 \cdot \gamma_A
\\
& \quad{}
+ \kappa(R) \cdot \frac{1}{1-\veps_4} \cdot \veps_4 \cdot \lambda_{\max}
+ \frac{1}{\sqrt{k}} \cdot \veps_4 \cdot \lambda_{\max}
+ \frac{\kappa(R)}{\sqrt{k}} \cdot \frac{\veps_4}{1-\veps_4} \cdot \veps_4
\cdot \lambda_{\max}
.
\end{align*}
Rearranging gives the claimed inequality.
\end{proof}

\begin{lemma} \label{lemma:normalize-eig}
Let $V \in \R^{k \times k}$ be an invertible matrix, and let $R \in \R^{k
\times k}$ be the matrix whose $j$-th column is $V\e_j / \|V\e_j\|_2$.
Then $\|R\|_2 \leq \kappa(V)$, $\|R^{-1}\|_2 \leq \kappa(V)$, and
$\kappa(R) \leq \kappa(V)^2$.
\end{lemma}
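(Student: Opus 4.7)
The plan is to exhibit $R$ as a right-rescaling of $V$ and then apply sub-multiplicativity of the spectral norm. Concretely, let $D \in \R^{k \times k}$ be the diagonal matrix with $(j,j)$-th entry $D_{j,j} = \|V\e_j\|_2$ (positive since $V$ is invertible). Then dividing each column of $V$ by its Euclidean norm is the same as right-multiplying by $D^{-1}$, so $R = V D^{-1}$ and $R^{-1} = D V^{-1}$.

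The key observation is an elementary sandwich on the column norms: because $V$ is invertible and $\|\e_j\|_2 = 1$, we have
\[ \sigma_k(V) \leq \|V\e_j\|_2 \leq \sigma_1(V) = \|V\|_2 \quad \text{for all } j \in [k] . \]
Since $D$ is nonnegative diagonal, this immediately gives $\|D\|_2 = \max_j \|V\e_j\|_2 \leq \sigma_1(V)$ and $\|D^{-1}\|_2 = 1 / \min_j \|V\e_j\|_2 \leq 1/\sigma_k(V)$.

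Combining these inequalities with $\|V^{-1}\|_2 = 1/\sigma_k(V)$, sub-multiplicativity yields
\[ \|R\|_2 \leq \|V\|_2 \cdot \|D^{-1}\|_2 \leq \frac{\sigma_1(V)}{\sigma_k(V)} = \kappa(V) \]
and analogously $\|R^{-1}\|_2 \leq \|D\|_2 \cdot \|V^{-1}\|_2 \leq \kappa(V)$. The third bound is then automatic from $\kappa(R) = \|R\|_2 \|R^{-1}\|_2$. There is no real obstacle here; the entire argument is the factorization $R = VD^{-1}$ plus the column-norm sandwich, and no further estimation is required.
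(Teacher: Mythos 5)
Your proof is correct and follows essentially the same route as the paper: both factor $R = V D^{-1}$ with $D = \diag(\|V\e_1\|_2,\dotsc,\|V\e_k\|_2)$, bound the column norms between $\sigma_k(V)$ and $\|V\|_2$, and conclude by sub-multiplicativity of the spectral norm. No differences worth noting.
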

\begin{proof}
We have $R = V \diag(\|V\e_1\|_2, \|V\e_2\|_2, \dotsc, \|V\e_k\|_2)^{-1}$,
so by the sub-multiplicative property of the spectral norm, $\|R\|_2 \leq
\|V\|_2 / \min_j \|V\e_j\|_2 \leq \|V\|_2 / \sigma_k(V) = \kappa(V)$.
Similarly, $\|R^{-1}\|_2 \leq \|V^{-1}\|_2 \cdot \max_j \|V\e_j\|_2 \leq
\|V^{-1}\|_2 \cdot \|V\|_2 = \kappa(V)$.
\end{proof}

The next lemma shows that randomly projecting a collection of vectors to
$\R$ does not collapse any two too close together, nor does it send any of
them too far away from zero.

\begin{lemma} \label{lemma:eigen-gap}
Fix any $\delta \in (0,1)$ and matrix $A \in \R^{m \times n}$ (with $m \leq
n$).
Let $\v\theta \in \R^m$ be a random vector distributed uniformly over
$\sphere^{m-1}$.
\begin{enumerate}
\item $\displaystyle\Pr\biggl[ \min_{i \neq j} |\dotp{\v\theta,A(\e_i -
\e_j)}| > \frac{\min_{i \neq j} \|A(\e_i-\e_j)\|_2 \cdot
\delta}{\sqrt{em} {n \choose 2}} \biggr] \geq 1-\delta$.

\item $\displaystyle\Pr\biggl[ \forall i \in [m] , \
|\dotp{\v\theta,A\e_i}| \leq \frac{\|A\e_i\|_2}{\sqrt{m}} \Bigl(1 +
\sqrt{2\ln(m/\delta)} \Bigr) \biggr] \geq 1-\delta$.

\end{enumerate}
\end{lemma}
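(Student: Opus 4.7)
The plan is to reduce both parts to one-dimensional tail estimates for the first coordinate $\theta_1$ of a uniform random point on $\sphere^{m-1}$, exploiting rotational invariance: for any nonzero $\v{v} \in \R^m$, the scalar $\dotp{\v\theta,\v{v}}/\|\v{v}\|_2$ has the same distribution as $\theta_1$. Recall that $\theta_1$ has density $f(t) = \frac{\Gamma(m/2)}{\sqrt{\pi}\,\Gamma((m-1)/2)}(1-t^2)^{(m-3)/2}$ on $[-1,1]$, and a standard Gamma-ratio estimate gives the pointwise bound $f(0) \leq \sqrt{m/(2\pi)} \leq \sqrt{em}/2$.

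For part 1, apply the reduction with $\v{v} = A(\e_i - \e_j)$ for each fixed pair $i \neq j$. Since $f$ is maximized at $0$, the density bound yields the anti-concentration estimate $\Pr[|\theta_1| \leq s] \leq 2sf(0) \leq s\sqrt{em}$, which translates via rotational invariance to
\[
\Pr\bigl[|\dotp{\v\theta,A(\e_i-\e_j)}| \leq s\|A(\e_i-\e_j)\|_2 \bigr] \leq s\sqrt{em}.
\]
Choosing $s = \delta/(\sqrt{em}\binom{n}{2})$ and union-bounding over the $\binom{n}{2}$ unordered pairs gives the stated inequality, after replacing $\|A(\e_i-\e_j)\|_2$ by $\min_{i' \neq j'}\|A(\e_{i'}-\e_{j'})\|_2$ on the right.

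For part 2, the goal is an upper-tail bound of the form $\Pr[\sqrt{m}|\theta_1| > 1 + t] \leq e^{-t^2/2}$. A clean route is Lévy's concentration-of-measure inequality on $\sphere^{m-1}$: the map $\v\theta \mapsto \sqrt{m}|\theta_1|$ is $\sqrt{m}$-Lipschitz, so it concentrates around its mean with variance proxy $1$, and Jensen's inequality gives $\E[\sqrt{m}|\theta_1|] \leq \sqrt{m\,\E[\theta_1^2]} = 1$. Setting $t = \sqrt{2\ln(m/\delta)}$ gives failure probability at most $\delta/m$ per fixed direction $\v{v} = A\e_i$, and a union bound over $i \in [m]$ finishes the argument. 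An alternative self-contained derivation integrates the density directly using $(1-t^2)^{(m-3)/2} \leq e^{-(m-3)t^2/2}$ for $t \in [0,1]$, yielding an essentially identical tail bound.

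The main obstacle is matching the constants: the $\sqrt{em}$ in part 1 and the specific form $1 + \sqrt{2\ln(m/\delta)}$ in part 2. Both are forgiving, however---the slack between $\sqrt{2m/\pi}$ and $\sqrt{em}$ gives room in part 1, and the leading ``$1$'' in part 2 comfortably absorbs the mean $\E[\sqrt{m}|\theta_1|] \leq 1$ so that a concentration-above-the-mean inequality with sub-Gaussian parameter $1$ suffices. Edge cases for very small $m$ (where the density formula needs to be interpreted carefully) can be handled separately, but they only affect constants and never the asymptotics.
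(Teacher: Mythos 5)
Your overall strategy---reduce to the distribution of a single coordinate $\theta_1$ of a uniform point on $\sphere^{m-1}$ and union bound---is exactly the paper's strategy; the paper packages the one-dimensional estimates into a cited lemma of Dasgupta and Gupta, which states that for $\beta\in(0,1)$ or $\beta>1$, $\Pr[\,|\dotp{\v\theta,\v{v}}| \lessgtr \|\v{v}\|_2\beta/\sqrt{m}\,]\le\exp(\tfrac12(1-\beta^2+\ln\beta^2))$. Your part~1 is correct and essentially equivalent: your density bound $\Pr[|\theta_1|\le s]\le s\sqrt{em}$ matches what the cited inequality gives (namely $\sqrt{e}\,\beta e^{-\beta^2/2}\le\sqrt{e}\,\beta$ with $\beta=s\sqrt{m}$), and the choice of $s$ and the union bound over ${n \choose 2}$ pairs are identical to the paper's. (Minor caveat: ``$f$ is maximized at $0$'' fails for $m=2$, where the density blows up at $\pm1$; the conclusion $\Pr[|\theta_1|\le s]\le s$ still holds there by direct computation, so this only needs a remark.)

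Part~2 is where there is a genuine gap: neither of your two routes, as described, delivers the stated constant for all $\delta\in(0,1)$. The L\'evy route gives concentration of a $1$-Lipschitz function around its \emph{median} with exponent $-(m-2)s^2/2$ (or $-(m-1)s^2/2$, depending on the version), not $-ms^2/2$; after rescaling by the Lipschitz constant $\sqrt{m}$ this yields $\exp(-\tfrac{m-2}{m}\cdot\tfrac{t^2}{2})$ rather than $e^{-t^2/2}$, and the deficit factor $\tfrac{m-2}{m}$ cannot be absorbed by the slack $1-\E[\sqrt{m}|\theta_1|]\approx 0.2$ once $t\gtrsim m$, i.e.\ once $\delta$ is very small (there are also prefactors of $2$ or $\sqrt{\pi/8}$ to contend with). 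Your self-contained alternative has the same problem in a different guise: $(1-t^2)^{(m-3)/2}\le e^{-(m-3)t^2/2}$ produces the exponent $-\tfrac{m-3}{m}\cdot\tfrac{(1+t)^2}{2}$ and is vacuous at $m=3$. The clean fix---and what the paper actually uses---is a Chernoff-type bound on $\theta_1^2$, which is $\mathrm{Beta}(\tfrac12,\tfrac{m-1}{2})$-distributed (equivalently a ratio of independent chi-squared variables); this gives exactly $\Pr[|\theta_1|\ge\beta/\sqrt{m}]\le\exp(\tfrac12(1-\beta^2+\ln\beta^2))$ with no prefactor and no loss in the exponent, after which setting $\beta=1+t$ and using $\ln(1+t)\le t$ yields $e^{-t^2/2}=\delta/m$ on the nose. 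You should either prove that moment-generating-function bound or cite it; the constants here are not forgiving enough for off-the-shelf measure concentration.
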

\begin{proof}
For the first claim, let $\delta_0 := \delta / {n \choose 2}$.
By Lemma~\ref{lemma:sanjoy}, for any fixed pair $\{i,j\} \subseteq
[n]$ and $\beta := \delta_0 / \sqrt{e}$,
\[ \Pr\biggl[ |\dotp{\v\theta,A(\e_i-\e_j)}| \leq \|A(\e_i-\e_j)\|_2 \cdot
\frac{1}{\sqrt{m}} \cdot \frac{\delta_0}{\sqrt{e}} \biggr]
\leq \exp\left(\frac12 (1 - (\delta_0^2 / e) + \ln (\delta_0^2/e)) \right)
\leq \delta_0
.
\]
Therefore the first claim follows by a union bound over all ${n \choose 2}$
pairs $\{i,j\}$.

For the second claim, apply Lemma~\ref{lemma:sanjoy} with $\beta := 1 + t$
and $t := \sqrt{2 \ln (m/\delta)}$ to obtain
\begin{align*}
\Pr\biggl[ |\dotp{\v\theta,A\e_i}| \geq \frac{\|A\e_i\|_2}{\sqrt{m}} \cdot
(1+t) \biggr]
& \leq \exp\left(\frac12 \Bigl( 1 - (1 + t)^2 + 2\ln(1 + t) \Bigr) \right)
\\
& \leq \exp\left(\frac12 \Bigl( 1 - (1 + t)^2 + 2t \Bigr) \right)
\\
& = e^{-t^2/2} = \delta / m
.
\end{align*}
Therefore the second claim follows by taking a union bound over all $i \in
[m]$.
\end{proof}

\section{Proofs and details from Section~\ref{section:applications}}
\label{appendix:applications}

In this section, we provide ommitted proofs and details from
Section~\ref{section:applications}.

\if 0
\subsection{Concentration properties for multi-view Gaussian mixture
models}
\label{appendix:gmm}

Condition~\ref{cond:concentration} is easily established for multi-view
Gaussian mixture models using known techniques~\citep{CKLS09}.
This is clear for the second-order statistics $\h{P}_{a,b}$ for $\{a,b\}
\in \{\{1,2\},\{1,3\}\}$, and remains true for the third-order statistics
$\h{P}_{1,2,3}$ because $\x_3$ is conditionally independent of $\x_1$ and
$\x_2$ given $h$.
The magnitude of $\dotp{\h{U}_3\v\theta_i,\x_3}$ can be bounded for all
samples (with a union bound; recall that we make the simplifying assumption
that $\h{P}_{1,3}$ is independent of $\h{P}_{1,2,3}$, and therefore so are
$\h{U}_3$ and $\h{P}_{1,2,3}$).
Therefore, one effectively only needs spectral norm error bounds for
second-order statistics, as provided by existing techniques.
\fi

\subsection{Proof of Proposition~\ref{proposition:gmm-cov}}

As in the proof of Lemma~\ref{lemma:general-moments}, it is easy to show
that
\begin{align*}
Q_{1,2,3}(\v\phi,\v\psi)
& = \E[\E[\x_1|h] \otimes \E[\x_2|h] \dotp{\v\phi,\E[\x_3 \otimes
\x_3|h]\v\psi}] \\
& = M_1 \E[ \e_h \otimes \e_h \dotp{\v\phi,(\v\mu_{3,h} \otimes \v\mu_{3,h}
+ \Sig_{3,h})\v\psi} ] M_2^\t
\\
& = M_1 \diag(\dotp{\v\phi,\v\mu_{3,t}}
\dotp{\v\psi,\v\mu_{3,t}}
+ \dotp{\v\phi,\Sig_{3,t}\v\psi}
: t \in [k]
) \diag(\v{w}) M_2^\t
.
\end{align*}
The claim then follows from the same arguments used in the proof of
Lemma~\ref{lemma:general-operator}.

\subsection{Proof of Proposition~\ref{proposition:hmm-rep}}

The conditional independence properties follow from the HMM conditional
independence assumptions.
To check the parameters, observe first that
\[
\Pr[h_1 = i |h_2 = j]
= \frac{\Pr[h_2 = j | h_1 = i] \cdot \Pr[h_1 = i]}{\Pr[h_2 = j]}
= \frac{T_{j,i} \pi_i}{(T\v\pi)_j}
= \e_i \diag(\v\pi) T^\t \diag(T\v\pi)^{-1} \e_j
\]
by Bayes' rule.
Therefore
\[ M_1\e_j = \E[\x_1|h_2 = j] = O \E[\e_{h_1} | h_2 = j] = O \diag(\v\pi)
T^\t \diag(T\v\pi)^{-1} \e_j . \]
The rest of the parameters are similar to verify.

\if 0
Note that
\begin{align*}
P_{3,1}
& = M_3 \diag(\v{w}) M_1^\t \\
& = (OT) \diag(T\v\pi) (O \diag(\v\pi) T^\t \diag(T\v\pi)^{-1})^\t \\
& = OT T \diag(\v\pi) O^\t
,
\quad\text{and}
\\
P_{3,1,2}(\v\eta)
& = M_3 \diag(M_2^\t\v\eta) \diag(\v{w}) M_1^\t \\
& = (OT) \diag(O^\t\v\eta) \diag(T\v\pi) (O \diag(\v\pi) T^\t
\diag(T\v\pi)^{-1})^\t \\
& = OT \diag(O^\t\v\eta) T \diag(\v\pi) O^\t
.
\end{align*}
Therefore
\begin{align*}
B_{3,1,2}(\v\eta)
& = (U_3^\t P_{3,1,2}(\v\eta) U_1) \ (U_3^\t P_{3,1} U_1)^{-1} \\
& = (U_3^\t O T \diag(O^\t\v\eta) T \diag(\v\pi) O^\t U_1)
\ (U_3^\t P_{3,1} U_1)^{-1} \\
& = (U_3^\t O T) \diag(O^\t\v\eta)
(U_3^\t O T)^{-1}
\ (U_3^\t O T T \diag(\v\pi) O^\t U_1)
\ (U_3^\t P_{3,1} U_1)^{-1} \\
& = (U_3^\t O T) \diag(O^\t\v\eta) (U_3^\t O T)^{-1}
\ (U_3^\t P_{3,1} U_1) \ (U_3^\t P_{3,1} U_1)^{-1} \\
& = (U_3^\t O T) \diag(O^\t\v\eta) (U_3^\t O T)^{-1}
.
\end{align*}
\fi

\subsection{Learning mixtures of product distributions}
\label{appendix:product}

In this section, we show how to use \algorithmB\ with mixtures of product
distributions in $\R^n$ that satisfy an \emph{incoherence condition} on the
means $\v\mu_1, \v\mu_2, \dotsc, \v\mu_k \in \R^n$ of $k$ component
distributions.
Note that product distributions are just a special case of the more general
class of multi-view distributions, which are directly handled by
\algorithmB.

The basic idea is to randomly partition the coordinates into $\ell \geq 3$
``views'', each of roughly the same dimension.
Under the assumption that the component distributions are product
distributions, the multi-view assumption is satisfied.
What remains to be checked is that the non-degeneracy condition
(Condition~\ref{cond:general}) is satisfied.
Theorem~\ref{theorem:incoherence} (below) shows that it suffices that the
original matrix of component means have rank $k$ and satisfy the following
incoherence condition.
\begin{condition}[Incoherence condition] \label{cond:incoherence}
Let $\delta \in (0,1)$, $\ell \in [n]$, and $M = [\v\mu_1 | \v\mu_2 |
\dotsb | \v\mu_k ] \in \R^{n \times k}$ be given; let $M = USV^\t$ be the
thin singular value decomposition of $M$, where $U \in \R^{n \times k}$ is
a matrix of orthonormal columns, $S = \diag(\sigma_1(M), \sigma_2(M),
\dotsc, \sigma_k(M)) \in \R^{k \times k}$, and $V \in \R^{k \times k}$ is
orthogonal; and let
\begin{equation*}
c_M := \max_{j \in [n]} \biggl\{ \frac{n}{k} \cdot \|U^\t \e_j\|_2^2
\biggr\}
.
\end{equation*}
The following inequality holds:
\begin{equation*}
c_M \leq \frac{9}{32} \cdot \frac{n}{k\ell \ln \frac{k\ell}{\delta}}
.
\end{equation*}
\end{condition}
Note that $c_M$ is always in the interval $[1,n/k]$; it is smallest when
the left singular vectors in $U$ have $\pm 1/\sqrt{n}$ entries (as in a
Hadamard basis), and largest when the singular vectors are the coordinate
axes.
Roughly speaking, the incoherence condition requires that the
non-degeneracy of a matrix $M$ be witnessed by many vertical blocks of $M$.
When the condition is satisfied, then with high probability, a random
partitioning of the coordinates into $\ell$ groups induces a block
partitioning of $M$ into $\ell$ matrices $M_1, M_2, \dotsc, M_\ell$ (with
roughly equal number of rows) such that the $k$-th largest singular value
of $M_v$ is not much smaller than that of $M$ (for each $v \in [\ell]$).

\citet{CR08} show that under a similar condition (which they call a
\emph{spreading condition}), a random partitioning of the coordinates into
two ``views'' preserves the separation between the means of $k$ component
distributions.
They then follow this preprocessing with a projection based on the
correlations across the two views (similar to CCA).
However, their overall algorithm requires a minimum separation condition on
the means of the component distributions.
In contrast, \algorithmB\ does not require a minimum separation condition
at all in this setting.

\begin{theorem} \label{theorem:incoherence}
\def\calI{\mathcal{I}}
Assume Condition~\ref{cond:incoherence} holds.
Independently put each coordinate $i \in [n]$ into one of $\ell$ different
sets $\calI_1, \calI_2, \dotsc, \calI_\ell$ chosen uniformly at random.
With probability at least $1-\delta$, for each $v \in [\ell]$, the matrix
$M_v \in \R^{|\calI_v| \times k}$ formed by selecting the rows of $M$
indexed by $\calI_v$, satisfies
\[ \sigma_k(M_v) \geq \sigma_k(M) / (2\sqrt\ell) . \]
\end{theorem}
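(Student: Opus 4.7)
The plan is to reduce the theorem to a matrix concentration statement: after the random partition of rows, the rows of the orthonormal factor $U$ from the thin SVD of $M$ restricted to each block $\mathcal{I}_v$ should still span $\R^k$ ``evenly,'' up to a factor $1/\ell$.

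\textbf{Step 1: Reduce to the orthonormal factor.} Write $M = USV^\t$ as in Condition~\ref{cond:incoherence}, and let $U_v \in \R^{|\mathcal{I}_v| \times k}$ be the submatrix of $U$ obtained by keeping only the rows in $\mathcal{I}_v$. Then $M_v = U_v S V^\t$, and since $V$ is $k\times k$ orthogonal while $S = \diag(\sigma_1(M),\dotsc,\sigma_k(M))$, we get $\sigma_k(M_v) \geq \sigma_k(U_v)\cdot\sigma_k(M)$. So it is enough to prove that $\sigma_k(U_v) \geq 1/(2\sqrt\ell)$ for every $v\in[\ell]$ with probability at least $1-\delta$.

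\textbf{Step 2: Rephrase as a sum of independent rank-$1$ matrices.} Let $\v{u}_i := U^\t\e_i\in\R^k$, so the incoherence condition gives $\|\v{u}_i\|_2^2 \leq c_M k/n$ for every $i$, while $\sum_{i=1}^n \v{u}_i\v{u}_i^\t = U^\t U = I_k$. For a fixed $v$, let $\xi_i := \I\{i\in\mathcal{I}_v\}$; these are independent Bernoulli$(1/\ell)$ variables in $i$. Then
\[ U_v^\t U_v \;=\; \sum_{i=1}^n \xi_i\,\v{u}_i\v{u}_i^\t , \qquad \E[U_v^\t U_v] \;=\; (1/\ell)\,I_k . \]
If $\|U_v^\t U_v - (1/\ell)I_k\|_2 \leq 3/(4\ell)$, then $\sigma_k(U_v)^2 = \lambda_{\min}(U_v^\t U_v) \geq 1/(4\ell)$, which is exactly the bound we need.

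\textbf{Step 3: Apply matrix Bernstein and union bound.} The centered terms $X_i := (\xi_i - 1/\ell)\v{u}_i\v{u}_i^\t$ are independent, mean zero, and self-adjoint; each has spectral norm $\|X_i\|_2 \leq \|\v{u}_i\|_2^2 \leq c_M k/n$, and the matrix variance satisfies $\bigl\|\textstyle\sum_i \E X_i^2\bigr\|_2 \leq (c_M k/(n\ell))\cdot \bigl\|\sum_i \v{u}_i\v{u}_i^\t\bigr\|_2 = c_M k/(n\ell)$. Plugging these into the matrix Bernstein inequality at deviation $t = 3/(4\ell)$ gives a tail bound of the form $2k\exp(-c\,n/(c_M k\ell))$, and the coefficient $9/32$ in Condition~\ref{cond:incoherence} is calibrated precisely so that this probability is at most $\delta/\ell$; a union bound over the $\ell$ views completes the argument.

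\textbf{Main obstacle.} The structural work is minor: once one recognizes that incoherence furnishes the uniform $\|\v{u}_i\|_2^2 \lesssim k/n$ bound, matrix Bernstein (or equivalently matrix Chernoff on the PSD sum) gives a dimension-free concentration bound with the right rate. The only delicate part is constant-chasing, to verify that the stated coefficient $9/32$ absorbs both the $2k$ pre-factor in Bernstein's tail and the $\ell$-fold union bound while still yielding total failure probability at most $\delta$.
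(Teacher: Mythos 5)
Your proposal is correct and follows essentially the same route as the paper: reduce to $\lambda_{\min}\bigl(\sum_i \xi_i (U^\t\e_i)(U^\t\e_i)^\t\bigr)$ via the thin SVD, use incoherence to bound each summand by $c_M k/n$, apply a matrix concentration inequality at deviation level $3/(4\ell)$, and union bound over the $\ell$ views. The only caveat is in the constant-chasing: the paper uses the one-sided matrix Chernoff bound for the minimum eigenvalue (tail $k\exp(-\eps^2 l/(2r))$ with $\eps=3/4$, $l=1/\ell$, $r=c_M k/n$), which yields exactly the exponent $9n/(32\ell c_M k)$ matching the stated coefficient $9/32$; a generic matrix Bernstein bound gives a slightly worse exponent, so you should use the Chernoff variant you mention parenthetically.
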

\begin{proof}
Follows from Lemma~\ref{lemma:partitioning} (below) together with a union
bound.
\end{proof}

\begin{lemma} \label{lemma:partitioning}
Assume Condition~\ref{cond:incoherence} holds.
Consider a random submatrix $\wh{M}$ of $M$ obtained by independently
deciding to include each row of $M$ with probability $1/\ell$.
Then
\[ \Pr\Bigl[ \sigma_k(\wh{M}) \geq \sigma_k(M) / (2\sqrt{\ell}) \Bigr] \geq
1-\delta/\ell . \]
\end{lemma}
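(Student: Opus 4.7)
\noindent\textbf{Proof plan for Lemma~\ref{lemma:partitioning}.}
The approach is to reduce the lower bound on $\sigma_k(\wh{M})$ to a lower bound on the least eigenvalue of a random sum of rank-one matrices, and then invoke a matrix Chernoff inequality; the incoherence assumption on $M$ controls the per-summand spectral norm.

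\smallskip
\noindent \textbf{Step 1: Reduction to a random sum.}
Write the thin SVD $M = USV^\t$ as in Condition~\ref{cond:incoherence}, and let $\v{u}_i := U^\t \e_i \in \R^k$ so that $\sum_{i=1}^n \v{u}_i \v{u}_i^\t = U^\t U = I$. Let $\xi_1,\dotsc,\xi_n$ be i.i.d.\ Bernoulli$(1/\ell)$ indicators for inclusion in $\wh{M}$, and set
\[ A := \sum_{i=1}^n \xi_i \v{u}_i \v{u}_i^\t \in \R^{k \times k} . \]
Since $\wh{M}^\t \wh{M} = \sum_i \xi_i (M^\t \e_i)(M^\t \e_i)^\t = VS A SV^\t$, one checks (via $V$ orthogonal and the variational characterization of $\lambda_{\min}$) that
\[ \sigma_k(\wh{M})^2 = \lambda_{\min}(\wh{M}^\t \wh{M}) \geq \lambda_{\min}(A) \cdot \sigma_k(M)^2 . \]
So it suffices to prove $\lambda_{\min}(A) \geq 1/(4\ell)$ with probability at least $1 - \delta/\ell$.

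\smallskip
\noindent \textbf{Step 2: Mean and per-summand bound.}
Because $\sum_i \v{u}_i \v{u}_i^\t = I$, we have $\E[A] = (1/\ell) I$, so $\mu_{\min} := \lambda_{\min}(\E A) = 1/\ell$. Each summand satisfies $0 \preceq \xi_i \v{u}_i \v{u}_i^\t \preceq R \cdot I$ with
\[ R := \max_{i \in [n]} \|\v{u}_i\|_2^2 = \max_{i \in [n]} \|U^\t \e_i\|_2^2 \leq \frac{c_M k}{n} , \]
using the definition of $c_M$ in Condition~\ref{cond:incoherence}.

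\smallskip
\noindent \textbf{Step 3: Matrix Chernoff.}
Apply the standard matrix Chernoff lower-tail inequality (e.g., Tropp): for independent PSD matrices $X_i$ with $\lambda_{\max}(X_i) \leq R$ and $\mu_{\min} = \lambda_{\min}(\sum_i \E X_i)$, and any $t \in [0,1)$,
\[ \Pr\Bigl[ \lambda_{\min}\bigl(\textstyle\sum_i X_i\bigr) \leq (1-t)\mu_{\min} \Bigr] \leq k \exp\bigl(-t^2 \mu_{\min}/(2R)\bigr) . \]
Choosing $t = 3/4$ yields $(1-t)\mu_{\min} = 1/(4\ell)$ and an exponent of $-9 \mu_{\min}/(32R) \leq -9n/(32\ell c_M k)$. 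The incoherence hypothesis $c_M \leq \tfrac{9}{32} \cdot n/(k\ell \ln(k\ell/\delta))$ then gives
\[ \Pr\Bigl[ \lambda_{\min}(A) \leq \tfrac{1}{4\ell} \Bigr] \leq k \exp(-\ln(k\ell/\delta)) = \delta/\ell , \]
which combined with Step 1 completes the proof.

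\smallskip
\noindent \textbf{Main obstacle.}
The conceptual steps are routine; the only slightly delicate point is matching the constants so that the tail bound produces exactly the threshold $\tfrac{9}{32}$ appearing in Condition~\ref{cond:incoherence}. This requires picking the deviation parameter $t$ in the matrix Chernoff bound so that $(1-t)\mu_{\min}$ equals the needed lower bound $1/(4\ell)$ (equivalently, so that Step 1's square-root loss is absorbed), and then reading off the exponent $t^2/(2 \cdot \mu_{\min}/R) = 9n/(32\ell c_M k)$. Using the sharper form $(e^{-t}/(1-t)^{1-t})^{\mu_{\min}/R}$ in place of the sub-Gaussian simplification would change the constant slightly but not the structure of the argument.
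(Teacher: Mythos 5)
Your proposal is correct and follows essentially the same route as the paper's proof: reduce $\sigma_k(\wh{M})^2$ to $\sigma_k(M)^2\cdot\lambda_{\min}\bigl(\sum_i z_i U^\t\e_i\e_i^\t U\bigr)$, bound each summand by $(c_M k/n)I$ using the incoherence condition, and apply the matrix Chernoff bound with deviation $3/4$ to get the failure probability $k\,e^{-9n/(32\ell c_M k)}\leq\delta/\ell$. The constants match the paper's exactly, so there is nothing to add.
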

\begin{proof}
Let $z_1, z_2, \dotsc, z_n \in \{0,1\}$ be independent indicator random
variables, each with $\Pr[z_i=1]=1/\ell$.
Note that $\wh{M}^\t \wh{M} = M^\t \diag(z_1,z_2,\dotsc,z_n) M =
\sum_{i=1}^n z_i M^\t e_ie_i^\t M$, and that
\[ \sigma_k(\wh{M})^2
= \lambda_{\min}(\wh{M}^\t \wh{M})
\geq \lambda_{\min}(S)^2 \cdot \lambda_{\min}\biggl(
\sum_{i=1}^n z_i U^\t e_ie_i^\t U \biggr) .
\]
Moreover,
$0 \preceq z_i U^\t e_i e_i^\t U \preceq (k/n) c_M I$
and $\lambda_{\min}(\E[ \sum_{i=1}^n z_i U^\t e_i e_i^\t U ]) = 1/\ell$.
By Lemma~\ref{lemma:matrix-chernoff} (a Chernoff bound on extremal
eigenvalues of random symmetric matrices),
\[ \Pr\Biggl[ \lambda_{\min}\biggl(\sum_{j=1}^d z_i U^\t e_ie_i^\t U
\biggr) \leq
\frac{1}{4\ell} \Biggr]
\leq k \cdot e^{-(3/4)^2/(2\ell c_M k/n)} \leq \delta/\ell \]
by the assumption on $c_M$.
\if 0
For each $j \in [d]$, let $X_j := n \cdot (U^\t \e_{I_j}) \otimes (U^\t
\e_{I_j})$, so
\[
\wh{M}^\t \wh{M}
= \frac{n}{d} \sum_{j=1}^d (M^\t \e_{I_j}) \otimes (M^\t \e_{I_j})
= VS \biggl( \frac1d \sum_{j=1}^d X_j \biggr) SV^\t
\]
and
\[ \lambda_{\min}(\wh{M}^\t \wh{M})
\geq \lambda_{\min}(S)^2
\cdot \lambda_{\min}\biggl( \frac1d \sum_{j=1}^d X_j \biggr)
= \sigma_k(M)^2
\cdot \lambda_{\min}\biggl( \frac1d \sum_{j=1}^d X_j \biggr)
.
\]
Observe that
\[
\E[X_j]
= \sum_{i=1}^n \Pr[I_j = i] \cdot n \cdot (U^\t\e_i) \otimes (U^\t\e_i)
= I
\]
and that
\[ \lambda_{\max}(X_j) \leq n \cdot \max_{i \in [n]} \{ \|U^\t\e_i\|_2^2 \}
= c_M \cdot k , \quad\text{almost surely} . \]
By Lemma~\ref{lemma:matrix-chernoff} (a Chernoff bound on extremal
eigenvalues of random symmetric matrices),
\[ \Pr\Biggl[ \lambda_{\min}\biggl(\frac1d \sum_{j=1}^d X_j \biggr) \leq
\frac14 \Biggr] \leq k \cdot e^{-d(3/4)^2 / (2c_Mk)} \leq \delta . \]
The claim follows.
\fi
\end{proof}

\subsection{Empirical moments for multi-view mixtures of subgaussian
distributions}
\label{appendix:subgaussian}

The required concentration behavior of the empirical moments used by
\algorithmB\ can be easily established for multi-view Gaussian mixture
models using known techniques~\citep{CKLS09}.
This is clear for the second-order statistics $\h{P}_{a,b}$ for $\{a,b\}
\in \{\{1,2\},\{1,3\}\}$, and remains true for the third-order statistics
$\h{P}_{1,2,3}$ because $\x_3$ is conditionally independent of $\x_1$ and
$\x_2$ given $h$.
The magnitude of $\dotp{\h{U}_3\v\theta_i,\x_3}$ can be bounded for all
samples (with a union bound; recall that we make the simplifying assumption
that $\h{P}_{1,3}$ is independent of $\h{P}_{1,2,3}$, and therefore so are
$\h{U}_3$ and $\h{P}_{1,2,3}$).
Therefore, one effectively only needs spectral norm error bounds for
second-order statistics, as provided by existing techniques.

Indeed, it is possible to establish Condition~\ref{cond:concentration} in
the case where the conditional distribution of $\x_v$ given $h$ (for each
view $v$) is subgaussian.
Specifically, we assume that there exists some $\alpha > 0$ such that for
each view $v$ and each component $j \in [k]$,
\begin{equation*}
\E\biggl[\exp\Bigl(\lambda \dotp{\v{u}, \cov(\x_v|h=j)^{-1/2} (\x_v -
\E[\x_v|h=j])} \Bigr) \biggr] \leq \exp(\alpha\lambda^2/2) , \quad \forall
\lambda \in \R, \v{u} \in \sphere^{d-1}
\end{equation*}
where $\cov(\x|h=j) := \E[(\x_v-\E[\x_v|h=j]) \otimes (\x_v-\E[\x_v|h=j]) |
h=j]$ is assumed to be positive definite.
Using standard techniques~(\emph{e.g.}, \citet{Vershynin12}),
Condition~\ref{cond:concentration} can be shown to hold under the above
conditions with the following parameters (for some universal constant
$c>0$):
\begin{equation*}
\begin{split}
w_{\min} & := \min_{j \in [k]} w_j \\
N_0 & := \frac{\alpha^{3/2} (d + \log(1/\delta))}{w_{\min}} \log
\frac{\alpha^{3/2} (d + \log(1/\delta))}{w_{\min}} \\
C_{a,b} & := c \cdot \Bigl( \max\Bigl\{ \|\cov(\x_v|h=j)\|_2^{1/2}, \|\E[\x_v|h=j]\|_2 : v
\in \{a,b\}, j \in [k] \Bigr\} \Bigr)^2 \\
C_{1,2,3} & := c \cdot \Bigl( \max\bigl\{ \|\cov(\x_v|h=j)\|_2^{1/2}, \|\E[\x_v|h=j]\|_2 : v
\in [3], j \in [k] \Bigr\} \Bigr)^3 \\
f(N,\delta) & := \sqrt{\frac{k^2 \log(1/\delta)}{N}}
+ \sqrt{\frac{\alpha^{3/2} \sqrt{\log(N/\delta)} (d +
\log(1/\delta))}{w_{\min} N}}
.
\end{split}
\end{equation*}

\section{General results from matrix perturbation theory}
\label{appendix:matrix}

The lemmas in this section are standard results from matrix perturbation
theory, taken from~\citet{SS90}.

\begin{lemma}[Weyl's theorem] \label{lemma:weyl}
Let $A, E\in \R^{m \times n}$ with $m \geq n$ be given.
Then
\[ \max_{i \in [n]} |\sigma_i(A+E) - \sigma_i(A)| \leq \|E\|_2
. \]
\end{lemma}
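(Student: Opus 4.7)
The plan is to derive the singular-value bound from the Courant--Fischer min--max characterization of singular values, which is the standard route and avoids any heavy machinery. Recall that for any $B \in \R^{m \times n}$ with $m \geq n$, and for each $i \in [n]$,
\[
\sigma_i(B) \;=\; \max_{\substack{S \subseteq \R^n \\ \dim S = i}} \min_{\substack{\v{x} \in S \\ \|\v{x}\|_2 = 1}} \|B\v{x}\|_2 \;=\; \min_{\substack{T \subseteq \R^n \\ \dim T = n-i+1}} \max_{\substack{\v{x} \in T \\ \|\v{x}\|_2 = 1}} \|B\v{x}\|_2.
\]
I would simply quote this as a known fact (it follows from the SVD). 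This turns the lemma into an elementary inequality over subspaces.

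Next, fix $i \in [n]$ and apply the max-min form to both $A$ and $A+E$. For any unit vector $\v{x}$, the triangle inequality and the definition of the spectral norm give
\[
\bigl| \|(A+E)\v{x}\|_2 - \|A\v{x}\|_2 \bigr| \;\leq\; \|E\v{x}\|_2 \;\leq\; \|E\|_2.
\]
So on any fixed subspace $S$ of dimension $i$, the minimum of $\|(A+E)\v{x}\|_2$ over unit vectors in $S$ differs from the corresponding minimum for $A$ by at most $\|E\|_2$. Taking the maximum over all $i$-dimensional subspaces $S$ on each side then yields $|\sigma_i(A+E) - \sigma_i(A)| \leq \|E\|_2$, which is the desired bound.

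There really is no serious obstacle here, since the argument is a direct consequence of a variational characterization plus the triangle inequality. The only ``care points'' are to (i)~state the min--max formulation in a form that works uniformly in $i$, and (ii)~justify the swap of $\max_S \min_{\v{x}}$ between $A$ and $A+E$, which is just the observation that adding a quantity bounded by $\|E\|_2$ pointwise to $\|A\v{x}\|_2$ can shift the $\min$-over-$S$ and subsequent $\max$-over-$S$ each by at most $\|E\|_2$. Alternatively, if one prefers a symmetric-eigenvalue framing, the same bound can be obtained by applying the Hermitian Weyl inequality to the dilations $\tilde A := \begin{pmatrix} 0 & A \\ A^\t & 0 \end{pmatrix}$ and $\tilde E := \begin{pmatrix} 0 & E \\ E^\t & 0 \end{pmatrix}$, whose nonzero eigenvalues are $\pm \sigma_i(\cdot)$ and whose spectral norms coincide with those of $A$ and $E$; but the direct min--max proof is cleaner and is the one I would present.
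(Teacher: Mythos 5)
Your proof is correct. Note, however, that the paper does not actually prove this lemma at all; it simply cites Theorem 4.11 of Stewart and Sun, so any self-contained argument is ``different from the paper'' by default. What you give is the standard textbook derivation: the Courant--Fischer max--min characterization $\sigma_i(B) = \max_{\dim S = i} \min_{\v{x} \in S,\, \|\v{x}\|_2 = 1} \|B\v{x}\|_2$, combined with the pointwise bound $\bigl|\|(A+E)\v{x}\|_2 - \|A\v{x}\|_2\bigr| \leq \|E\|_2$ for unit $\v{x}$. The one step worth stating explicitly rather than waving at is the monotonicity of the $\min$/$\max$ operations: from $\|(A+E)\v{x}\|_2 \leq \|A\v{x}\|_2 + \|E\|_2$ for all unit $\v{x} \in S$ you get $\min_S\|(A+E)\v{x}\|_2 \leq \min_S\|A\v{x}\|_2 + \|E\|_2$, then maximizing over $S$ gives $\sigma_i(A+E) \leq \sigma_i(A) + \|E\|_2$, and the reverse inequality follows by exchanging the roles of $A$ and $A+E$ (perturbation $-E$); your phrasing ``differs by at most $\|E\|_2$'' compresses these two one-sided applications into one sentence, but the logic is sound. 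Your alternative via the symmetric dilation $\tilde A$ is also valid and is arguably the more common route in references that prove Hermitian Weyl first. Either version would serve as a complete replacement for the paper's citation.
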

\begin{proof}
See Theorem 4.11, p.~204 in~\citet{SS90}.
\end{proof}

\begin{lemma}[Wedin's theorem] \label{lemma:wedin}
Let $A, E \in \R^{m \times n}$ with $m \geq n$ be given.
Let $A$ have the singular value decomposition
\[
\left[ \begin{array}{c} U_1^\top \\ U_2^\top \\ U_3^\top \end{array} \right]
A \left[ \begin{array}{cc} V_1 & V_2 \end{array} \right]
=
\left[ \begin{array}{cc} \Sigma_1 & 0 \\ 0 & \Sigma_2 \\ 0 & 0 \end{array}
\right]
.
\]
Let $\tl A := A + E$, with analogous singular value decomposition
$(\tl U_1, \tl U_2, \tl U_3, \tl \Sigma_1, \tl \Sigma_2, \tl V_1 \tl V_2)$.
Let $\Phi$ be the matrix of canonical angles between $\range(U_1)$ and
$\range(\tl U_1)$, and $\Theta$ be the matrix of canonical angles between
$\range(V_1)$ and $\range(\tl V_1)$.
If there exists $\delta, \alpha > 0$ such that
$\min_i \sigma_i(\tl \Sigma_1) \geq \alpha + \delta$ and
$\max_i \sigma_i(\Sigma_2) \leq \alpha$, then
$$
\max\{\|\sin \Phi\|_2, \|\sin \Theta\|_2\} \leq \frac{\|E\|_2}{\delta}.
$$
\end{lemma}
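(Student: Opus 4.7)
I would prove Wedin's theorem via the classical residual-equation approach, reducing the sines of the canonical angles to ``off-diagonal'' projections whose norms are controlled by the singular value gap $\delta$.

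First I recall the geometric identities $\|\sin\Phi\|_2 = \|[U_2\ U_3]^\t \tl U_1\|_2$ (the spectral norm of the projection of $\tl U_1$ onto $\range(U_1)^\perp$) and $\|\sin\Theta\|_2 = \|V_2^\t \tl V_1\|_2$; bounding these two quantities by $\|E\|_2/\delta$ will complete the proof.

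Second, I substitute $\tl A = A + E$ into the defining SVD relations $\tl A \tl V_1 = \tl U_1 \tl \Sigma_1$ and $\tl A^\t \tl U_1 = \tl V_1 \tl \Sigma_1$, and then project by $U_2^\t$, $U_3^\t$, and $V_2^\t$, using the identities $U_2^\t A = \Sigma_2 V_2^\t$, $U_3^\t A = 0$, and $V_2^\t A^\t = \Sigma_2 U_2^\t$ (read off from the SVD of $A$). This produces three key residual relations:
\begin{align*}
(U_3^\t \tl U_1)\tl\Sigma_1 &= -U_3^\t E\tl V_1, \\
(U_2^\t \tl U_1)\tl\Sigma_1 - \Sigma_2(V_2^\t \tl V_1) &= -U_2^\t E\tl V_1, \\
(V_2^\t \tl V_1)\tl\Sigma_1 - \Sigma_2(U_2^\t \tl U_1) &= -V_2^\t E^\t \tl U_1.
\end{align*}
The first equation is disposed of immediately by right-multiplying by $\tl\Sigma_1^{-1}$, giving $\|U_3^\t \tl U_1\|_2 \leq \|E\|_2/\sigma_{\min}(\tl\Sigma_1) \leq \|E\|_2/(\alpha+\delta)$.

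The heart of the proof---and its main obstacle---is controlling the coupled Sylvester-type system in $X := U_2^\t \tl U_1$ and $Y := V_2^\t \tl V_1$ given by the last two equations. I plan to invoke the standard spectrum-separation bound for Sylvester operators: taking sums and differences decouples the system into two equations of the shape $Z\tl\Sigma_1 \mp \Sigma_2 Z = C_\pm$, and, after diagonalizing $\tl\Sigma_1$ and $\Sigma_2$ so that the operator acts coordinatewise, the hypothesis $\sigma_{\min}(\tl\Sigma_1) - \sigma_{\max}(\Sigma_2) \geq \delta$ reduces the operator-norm bound to a scalar gap, yielding $\|Z\|_2 \leq \|C_\pm\|_2/\delta$ and hence $\|X\|_2, \|Y\|_2 \leq \|E\|_2/\delta$. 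A cleaner alternative---which avoids the $\sqrt{2}$-type losses that arise when combining $X$ with $U_3^\t \tl U_1$ into the stacked projection---is to first pass to the Hermitian dilation of $A$ (the $(m+n)\times(m+n)$ symmetric matrix whose off-diagonal blocks are $A$ and $A^\t$), whose positive eigenvalues are precisely the $\sigma_i(A)$ with eigenvectors assembled from the SVD factors, and then apply the Davis--Kahan $\sin\Theta$ theorem to this symmetric setting, where the induced perturbation has spectral norm exactly $\|E\|_2$. Either route yields the stated bound $\max(\|\sin\Phi\|_2, \|\sin\Theta\|_2) \leq \|E\|_2/\delta$.
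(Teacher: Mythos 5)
The paper does not actually prove this lemma; its ``proof'' is the citation ``See Theorem 4.4, p.~262 in Stewart and Sun (1990)'', so you are supplying an argument where the paper supplies none. Your setup is the correct classical one: the identities $\|\sin\Phi\|_2 = \|[U_2\ U_3]^\t \tl U_1\|_2$ and $\|\sin\Theta\|_2 = \|V_2^\t \tl V_1\|_2$, the three residual relations obtained by projecting $\tl A\tl V_1 = \tl U_1\tl\Sigma_1$ and $\tl A^\t\tl U_1 = \tl V_1\tl\Sigma_1$ through $U_2^\t$, $U_3^\t$, $V_2^\t$, and the immediate disposal of the $U_3$ block via $\|U_3^\t\tl U_1\|_2 \leq \|E\|_2/(\alpha+\delta)$ are all right.

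The gap is that neither of your two routes, as described, yields the stated constant $1$. (i) In the sum/difference decoupling, $C_1 = U_2^\t E\tl V_1$ and $C_2 = V_2^\t E^\t\tl U_1$ are unrelated contractions of $E$, so $\|C_1\pm C_2\|_2$ can only be bounded by $2\|E\|_2$; you get $\|Z_\pm\|_2 \leq 2\|E\|_2/\delta$, hence $\|X\|_2,\|Y\|_2 \leq 2\|E\|_2/\delta$, and recombining $X$ with the $U_3^\t\tl U_1$ block in $\|\sin\Phi\|_2$ only worsens matters. (ii) The Hermitian dilation does not avoid the $\sqrt2$ --- it is the usual source of it. Davis--Kahan applied to the dilation controls the angle between the stacked subspaces spanned by $\tfrac{1}{\sqrt2}(U_1;V_1)$ and $\tfrac{1}{\sqrt2}(\tl U_1;\tl V_1)$; the orthocomplement of the former is spanned by $\tfrac{1}{\sqrt2}(U_1;-V_1)$, $(U_2;0)$, $(U_3;0)$, $(0;V_2)$, and extracting $\|[U_2\ U_3]^\t\tl U_1\|_2$ or $\|V_2^\t\tl V_1\|_2$ individually from the stacked sine costs exactly a factor $\sqrt2$, which is why dilation-based statements of Wedin are usually quoted with that factor. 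A further small point: ``the operator acts coordinatewise, so the scalar gap gives the bound'' establishes the Frobenius-norm Sylvester bound, not the spectral-norm one; for $\|Z\|_2 \leq \|C\|_2/\delta$ you need the half-plane separation of the two spectra (e.g., via $Z = \int_0^\infty e^{t\Sigma_2} C e^{-t\tl\Sigma_1}\,dt$), which does hold here but should be said. The constant-$1$ conclusion requires bounding the separation of the \emph{coupled} operator $(X,Y)\mapsto (X\tl\Sigma_1 - \hat\Sigma_2 Y,\ Y\tl\Sigma_1 - \hat\Sigma_2^\t X)$, with $\hat U_2 := [U_2\ U_3]$ and $\hat\Sigma_2$ the corresponding rectangular diagonal block treated jointly --- which is precisely the content of the theorem the paper cites. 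For the paper's downstream use any absolute constant would be absorbed into $C$, but as a proof of the lemma as stated, the claimed bound is not reached.
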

\begin{proof}
See Theorem 4.4, p.~262 in~\citet{SS90}.
\end{proof}

\begin{lemma}[Bauer-Fike theorem] \label{lemma:bauer-fike}
Let $A, E \in \R^{k \times k}$ be given.
If $A = V \diag(\lambda_1,\lambda_2,\dotsc,\lambda_k) V^{-1}$ for some
invertible $V \in \R^{k \times k}$, and $\tl{A} := A + E$ has eigenvalues
$\tl\lambda_1,\tl\lambda_2,\dotsc,\tl\lambda_k$, then
\[
\max_{i \in [k]} \min_{j \in [k]}
|\tl\lambda_i - \lambda_j| \leq \|V^{-1} E V\|_2
.
\]
\end{lemma}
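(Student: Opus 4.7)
The plan is to prove the bound for each individual eigenvalue $\tl\lambda_i$ of $\tl{A}$, then take the maximum. Fix one such $\tl\lambda_i$ and let $\tl{\v{x}} \neq \v0$ be a corresponding right eigenvector, so that $(A+E)\tl{\v{x}} = \tl\lambda_i \tl{\v{x}}$. If $\tl\lambda_i = \lambda_j$ for some $j$, the bound is trivial, so assume $\tl\lambda_i \notin \{\lambda_1,\dotsc,\lambda_k\}$; this makes the diagonal matrix $D - \tl\lambda_i I$ invertible, where $D := \diag(\lambda_1,\dotsc,\lambda_k)$.

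The key step is a change of basis to the eigenbasis of $A$. Substituting $A = VDV^{-1}$ into the eigenvalue equation and rearranging yields $(VDV^{-1} - \tl\lambda_i I)\tl{\v{x}} = -E\tl{\v{x}}$. Setting $\v{y} := V^{-1}\tl{\v{x}} \neq \v0$ and multiplying on the left by $V^{-1}$, I would obtain
\[
(D - \tl\lambda_i I) \v{y} = -(V^{-1} E V) \v{y},
\]
which can be inverted (since $D - \tl\lambda_i I$ is diagonal with nonzero entries) to give $\v{y} = -(D - \tl\lambda_i I)^{-1}(V^{-1}EV)\v{y}$.

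Taking spectral norms and using sub-multiplicativity produces $1 \leq \|(D - \tl\lambda_i I)^{-1}\|_2 \cdot \|V^{-1}EV\|_2$. The final step exploits that $D - \tl\lambda_i I$ is diagonal, so $\|(D - \tl\lambda_i I)^{-1}\|_2 = 1/\min_j |\lambda_j - \tl\lambda_i|$. Rearranging gives $\min_j |\tl\lambda_i - \lambda_j| \leq \|V^{-1} E V\|_2$, and taking the maximum over $i \in [k]$ yields the claim.

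There is no real obstacle here; the only subtlety is the case analysis on whether $\tl\lambda_i$ coincides with some $\lambda_j$, which is disposed of immediately. The proof works identically for complex eigenvalues of $\tl A$ (should any arise), since the diagonal resolvent bound is insensitive to that, so no additional care is needed beyond interpreting $|\cdot|$ as the complex modulus when necessary.
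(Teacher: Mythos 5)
Your proof is correct and is the standard argument for the Bauer--Fike theorem (change of basis to the eigenbasis of $A$, invert the diagonal resolvent, and bound $\|(D-\tl\lambda_i I)^{-1}\|_2 = 1/\min_j|\lambda_j - \tl\lambda_i|$); the paper itself gives no proof, deferring to Theorem 3.3 of Stewart and Sun, whose proof is essentially the one you wrote. Your handling of the degenerate case $\tl\lambda_i \in \{\lambda_1,\dotsc,\lambda_k\}$ and of possibly complex eigenvalues of $\tl{A}$ is also appropriate, since the spectral norm of the real matrix $V^{-1}EV$ is unchanged when viewed as an operator on $\mathbb{C}^k$.
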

\begin{proof}
See Theorem 3.3, p.~192 in~\citet{SS90}.
\end{proof}

\begin{lemma} \label{lemma:inverse-perturb}
Let $A, E \in \R^{k \times k}$ be given.
If $A$ is invertible, and $\|A^{-1} E\|_2 < 1$, then $\tl{A} := A + E$ is
invertible, and
\[
\|\tl{A}^{-1} - A^{-1}\|_2
\leq \frac{\|E\|_2 \|A^{-1}\|_2^2}{1 - \|A^{-1} E\|_2}
.
\]
\end{lemma}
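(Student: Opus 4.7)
The plan is to use the standard Neumann series (Banach lemma) trick: factor $\tl{A} = A + E = A(I + A^{-1}E)$, and exploit the assumption $\|A^{-1}E\|_2 < 1$ to invert the second factor. First I would verify that $\tl{A}$ is invertible. Since $\|A^{-1}E\|_2 < 1$, the Neumann series $\sum_{n=0}^{\infty}(-A^{-1}E)^n$ converges in spectral norm to the inverse of $I + A^{-1}E$, and the usual geometric-series estimate gives
\[
\|(I + A^{-1}E)^{-1}\|_2 \leq \sum_{n=0}^\infty \|A^{-1}E\|_2^n = \frac{1}{1 - \|A^{-1}E\|_2}.
\]
In particular $I + A^{-1}E$ is invertible, and hence so is $\tl{A} = A(I + A^{-1}E)$, with $\tl{A}^{-1} = (I + A^{-1}E)^{-1} A^{-1}$.

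Next I would derive a clean algebraic expression for $\tl{A}^{-1} - A^{-1}$. Writing
\[
\tl{A}^{-1} - A^{-1} = (I + A^{-1}E)^{-1} A^{-1} - A^{-1} = \bigl[(I + A^{-1}E)^{-1} - I\bigr] A^{-1},
\]
and using the identity $(I + X)^{-1} - I = -(I + X)^{-1} X$ with $X := A^{-1}E$, this becomes
\[
\tl{A}^{-1} - A^{-1} = -(I + A^{-1}E)^{-1} (A^{-1}E) A^{-1}.
\]
Now apply the submultiplicative property of the spectral norm and the Neumann bound above:
\[
\|\tl{A}^{-1} - A^{-1}\|_2 \leq \|(I + A^{-1}E)^{-1}\|_2 \cdot \|A^{-1}\|_2 \cdot \|E\|_2 \cdot \|A^{-1}\|_2 \leq \frac{\|E\|_2 \|A^{-1}\|_2^2}{1 - \|A^{-1}E\|_2},
\]
which is exactly the claimed bound.

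There is no serious obstacle here: the only subtle point is making sure the Neumann-series bound applies under the given hypothesis, which it does since the spectral norm is submultiplicative and $\|A^{-1}E\|_2 < 1$ by assumption. The rest is a one-line algebraic manipulation and a triangle-inequality-style estimate. If one wanted a tighter form of the right-hand side (for instance replacing $\|A^{-1}\|_2 \cdot \|E\|_2$ by $\|A^{-1}E\|_2$ in the numerator), the identical argument would yield it, but the stated version suffices for its use in Lemma \ref{lemma:op-perturb} and Lemma \ref{lemma:eig-perturb-all}.
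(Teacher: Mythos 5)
Your proof is correct: the factorization $\tl{A} = A(I + A^{-1}E)$, the Neumann-series bound on $\|(I+A^{-1}E)^{-1}\|_2$, and the identity $\tl{A}^{-1} - A^{-1} = -(I+A^{-1}E)^{-1}(A^{-1}E)A^{-1}$ together give exactly the claimed estimate. The paper does not prove this lemma itself but cites Theorem 2.5, p.~118 of Stewart and Sun, and your argument is precisely the standard proof of that result, so there is nothing further to compare.
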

\begin{proof}
See Theorem 2.5, p.~118 in~\citet{SS90}.
\end{proof}

\section{Probability inequalities}
\label{appendix:probability}

\begin{lemma}[Accuracy of empirical probabilities]
\label{lemma:discrete}
Fix $\v\mu = (\mu_1,\mu_2,\dotsc,\mu_n) \in \Delta^{m-1}$.
Let $\x$ be a random vector for which $\Pr[\x = \e_i] = \mu_i$ for all $i
\in [m]$, and let $\x_1,\x_2,\dotsc,\x_n$ be $n$ independent copies of
$\x$.
Set $\h\mu := (1/n)\sum_{i=1}^n \x_i$.
For all $t > 0$,
\[ \Pr\biggl[ \|\h\mu - \v\mu\|_2 > \frac{1 + \sqrt{t}}{\sqrt{n}}
\biggr] \leq e^{-t} . \]
\end{lemma}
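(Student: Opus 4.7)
The plan is to apply the bounded differences (McDiarmid) inequality to the function $f(\x_1,\dotsc,\x_n) := \|\h\mu - \v\mu\|_2$, combined with a simple second-moment bound on $\E f$.

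First I would verify the bounded differences property. Replacing a single sample $\x_i$ with an independent copy $\x_i'$ changes $\h\mu$ by a vector of the form $(\x_i' - \x_i)/n$, which is either $\v0$ or of the form $(\e_a - \e_b)/n$; by the triangle inequality, $f$ changes by at most $\|\e_a - \e_b\|_2 / n \leq \sqrt{2}/n$. So McDiarmid's inequality gives, for all $u > 0$,
\[
\Pr[f - \E f > u] \;\leq\; \exp\!\Bigl(-\tfrac{2u^2}{n \cdot (\sqrt{2}/n)^2}\Bigr) \;=\; \exp(-u^2 n).
\]

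Next I would bound $\E f$ by Jensen's inequality and a direct variance calculation. Since each coordinate $\h\mu_i - \mu_i$ is the average of $n$ i.i.d.\ centered Bernoulli$(\mu_i)$ variables,
\[
\E\|\h\mu - \v\mu\|_2^2 \;=\; \sum_{i=1}^m \var(\h\mu_i) \;=\; \frac{1}{n}\sum_{i=1}^m \mu_i(1-\mu_i) \;\leq\; \frac{1}{n}\Bigl(1 - \sum_{i=1}^m \mu_i^2\Bigr) \;\leq\; \frac{1}{n}.
\]
Hence $\E f \leq \sqrt{\E f^2} \leq 1/\sqrt{n}$.

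Finally, setting $u = \sqrt{t/n}$ in the McDiarmid bound and combining with $\E f \leq 1/\sqrt{n}$ yields
\[
\Pr\!\left[\,\|\h\mu - \v\mu\|_2 > \frac{1 + \sqrt{t}}{\sqrt{n}}\,\right] \;\leq\; \Pr\!\left[\,f - \E f > \sqrt{t/n}\,\right] \;\leq\; e^{-t},
\]
as required. There is no real obstacle here: the argument is entirely routine once one recognizes $\sqrt{2}/n$ as the correct Lipschitz-in-coordinate constant and uses Jensen together with the trivial variance bound. The only mild subtlety is checking that the $\sqrt{2}$ factor in the bounded-difference constant combines with McDiarmid's factor of $2$ in the exponent to give exactly the clean constant $1$ multiplying $u^2 n$, which in turn matches the stated $(1+\sqrt{t})/\sqrt{n}$ form.
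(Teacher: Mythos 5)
Your proposal is correct and follows exactly the route the paper takes: McDiarmid's inequality applied to $f = \|\h\mu - \v\mu\|_2$ with bounded-difference constant $\sqrt{2}/n$, combined with the bound $\E\|\h\mu - \v\mu\|_2 \leq \sqrt{\E\|\h\mu-\v\mu\|_2^2} \leq 1/\sqrt{n}$. The paper states this proof only in outline (deferring to an external reference), and your write-up correctly supplies the omitted calculations.
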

\begin{proof}
This is a standard application of McDiarmid's inequality (using the fact
that $\|\h\mu - \v\mu\|_2$ has $\sqrt{2} / n$ bounded differences when a
single $\x_i$ is changed), together with the bound $\E[\|\h\mu - \v\mu\|_2]
\leq 1/\sqrt{n}$.
See Proposition 19 in~\cite{HKZ12}.
\end{proof}

\begin{lemma}[Random projection]
\label{lemma:sanjoy}
Let $\v\theta \in \R^n$ be a random vector distributed uniformly over
$\sphere^{n-1}$, and fix a vector $\v{v} \in \R^n$.
\begin{enumerate}
\item If $\beta \in (0,1)$, then
\[
\Pr\biggl[ |\dotp{\v\theta,\v{v}}| \leq \|\v{v}\|_2 \cdot \frac{1}{\sqrt{n}}
\cdot \beta \biggr] \leq \exp\biggl(\frac12(1 - \beta^2 + \ln
\beta^2)\biggr)
.
\]

\item If $\beta > 1$, then
\[
\Pr\biggl[ |\dotp{\v\theta,\v{v}}| \geq \|\v{v}\|_2 \cdot \frac{1}{\sqrt{n}}
\cdot \beta \biggr] \leq \exp\biggl(\frac12(1 - \beta^2 + \ln
\beta^2)\biggr)
.
\]

\end{enumerate}
\end{lemma}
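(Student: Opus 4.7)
The plan is to reduce to a tail bound on a single coordinate of a uniform unit vector, and then use the standard Gaussian representation of the uniform distribution on $\sphere^{n-1}$ together with a Chernoff argument on chi-squared random variables.

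First I would use rotational invariance: by replacing $\v{v}$ with $\|\v{v}\|_2\e_1$, which leaves the distribution of $\dotp{\v\theta,\v{v}}$ unchanged, it suffices to control $\theta_1$, the first coordinate of $\v\theta$. Then I would write $\v\theta = \v{g}/\|\v{g}\|_2$ where $\v{g} = (g_1,\dotsc,g_n)$ has i.i.d.\ standard Gaussian entries. The two events become $n g_1^2 / \|\v{g}\|_2^2 \leq \beta^2$ (for $\beta<1$) and $n g_1^2 / \|\v{g}\|_2^2 \geq \beta^2$ (for $\beta>1$). With $t := \beta^2/n$, both events have the form $\{g_1^2 - t\sum_i g_i^2 \gtrless 0\}$, which is amenable to a Chernoff bound.

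For part~1, I would apply Markov to $\exp(s(t\sum_i g_i^2 - g_1^2))$ with $s>0$ and $st<1/2$, using the chi-squared moment generating function $\E[e^{u g_i^2}] = (1-2u)^{-1/2}$ (valid for $u < 1/2$) to get the bound $(1+2s(1-t))^{-1/2}(1-2st)^{-(n-1)/2}$. Optimizing the logarithm over $s$ yields $s^\ast = (1-nt)/(2nt(1-t))$, and after simplification the two factors become $1/\beta^2$ and $(n-1)/(n-\beta^2)$ respectively. Then the bound $\ln(1+x)\leq x$ applied to $\ln((n-\beta^2)/(n-1)) = \ln(1 + (1-\beta^2)/(n-1))$ collapses the expression to $\tfrac12(1-\beta^2+\ln\beta^2)$, which is the claimed exponent. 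Part~2 is completely symmetric: the event rearranges as $(1-t)g_1^2 - t\sum_{i\geq 2}g_i^2 \geq 0$, one uses $\exp(s((1-t)g_1^2 - t\sum_{i\geq 2}g_i^2))$ with $0 < s < 1/(2(1-t))$, and the same optimization produces the identical closed form for the optimized exponent (after noting that for $\beta \geq \sqrt{n}$ the event is empty so there is nothing to prove).

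There isn't really a hard step---the calculations are entirely routine chi-squared algebra. The only minor care needed is in verifying the constraints on $s$ in each case (positivity plus the condition keeping the relevant MGF finite), so that the optimum $s^\ast$ actually lies in the feasible range; and in checking that $\ln(1+x) \leq x$ gives a valid upper bound in both the $x>0$ and $x<0$ regimes, which it does since the inequality holds for all $x > -1$. This lets the $\beta<1$ and $\beta>1$ cases be combined into the single symmetric expression $\exp(\tfrac12(1 - \beta^2 + \ln\beta^2))$.
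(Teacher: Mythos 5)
Your proof is correct. The paper itself gives no derivation for this lemma---it simply cites Lemma 2.2 of \citet{DG03}, of which the statement is the $k=1$ special case (their $\beta$ is your $\beta^2$, and the reduction to the first coordinate $\theta_1 = g_1/\|\v{g}\|_2$ via rotational invariance is left implicit in the words ``special case''). What you have written out---the Gaussian representation of the uniform distribution on $\sphere^{n-1}$, the Chernoff bound on the event $g_1^2 \lessgtr t\sum_i g_i^2$ with $t=\beta^2/n$ via the chi-squared moment generating function, the optimizer $s^\ast=(1-nt)/(2nt(1-t))$ turning the bound into $\beta\cdot\bigl((n-\beta^2)/(n-1)\bigr)^{(n-1)/2}$, and the final application of $\ln(1+x)\le x$ valid for all $x>-1$---is precisely the elementary argument of that reference, and your feasibility checks on $s^\ast$ and the degenerate cases ($n=1$, $\beta\ge\sqrt{n}$) are all sound.
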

\begin{proof}
This is a special case of Lemma 2.2 from~\citet{DG03}.
\end{proof}

\begin{lemma}[Matrix Chernoff bound] \label{lemma:matrix-chernoff}
Let $X_1, X_2, \dotsc, X_n$ be independent and symmetric $m \times m$
random matrices such that $0 \preceq X_i \preceq r I$, and set $l :=
\lambda_{\min}(\E[X_1 + X_2 + \dotsb + X_n])$.
For any $\eps \in [0,1]$,
\[ \Pr\Biggl[ \lambda_{\min}\biggl(\sum_{i=1}^n X_i\biggr) \leq
(1-\eps) \cdot l \Biggr] \leq m \cdot e^{-\eps^2 l / (2r)} . \]
\end{lemma}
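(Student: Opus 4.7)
The plan is to prove this via the matrix Laplace transform technique (the matrix Chernoff bound of Ahlswede--Winter, sharpened by Tropp). Set $S := \sum_{i=1}^n X_i$. For any $\theta > 0$, observe that $\{\lambda_{\min}(S) \leq (1-\eps)l\}$ coincides with $\{\lambda_{\max}(-\theta S) \geq -\theta(1-\eps)l\}$. Using the elementary trace inequality $e^{\lambda_{\max}(A)} \leq \mathrm{tr}\, e^A$ together with Markov, this gives
\[
\Pr\bigl[\lambda_{\min}(S) \leq (1-\eps)l\bigr]
\leq e^{\theta(1-\eps)l} \cdot \E\bigl[\mathrm{tr}\, e^{-\theta S}\bigr].
\]

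Next I would apply the subadditivity of the matrix cumulant generating function (a consequence of Lieb's concavity theorem, or equivalently the Ahlswede--Winter inequality using Golden--Thompson inductively), which yields
\[
\E\bigl[\mathrm{tr}\, e^{-\theta S}\bigr]
\leq \mathrm{tr}\exp\Biggl(\sum_{i=1}^n \log \E\bigl[e^{-\theta X_i}\bigr]\Biggr).
\]
To control the individual log-MGFs I would use the scalar bound $e^{-\theta x} \leq 1 + \tfrac{e^{-\theta r} - 1}{r} x$ for $x \in [0, r]$ (which follows from convexity of $x \mapsto e^{-\theta x}$), lifted by the functional calculus to $0 \preceq X_i \preceq rI$: $\E[e^{-\theta X_i}] \preceq I + \tfrac{e^{-\theta r} - 1}{r} \E[X_i]$. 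Combining with the operator inequality $\log(I + A) \preceq A$ and summing,
\[
\sum_{i=1}^n \log \E\bigl[e^{-\theta X_i}\bigr]
\preceq \frac{e^{-\theta r} - 1}{r} \, \E[S].
\]
Since the scalar coefficient $\frac{e^{-\theta r} - 1}{r}$ is negative, the largest eigenvalue of the right-hand side equals $\tfrac{e^{-\theta r}-1}{r} \cdot \lambda_{\min}(\E[S]) = \tfrac{e^{-\theta r}-1}{r}\, l$, and therefore $\mathrm{tr}\exp(\cdot) \leq m \cdot \exp\bigl(\tfrac{e^{-\theta r}-1}{r} l\bigr)$.

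Putting the pieces together, I obtain $\Pr[\lambda_{\min}(S) \leq (1-\eps)l] \leq m \exp\bigl(\theta(1-\eps)l + \tfrac{e^{-\theta r}-1}{r} l\bigr)$ for every $\theta > 0$. Optimizing in $\theta$ by choosing $\theta = -\log(1-\eps)/r$ yields the classical form $m \cdot \bigl[e^{-\eps}(1-\eps)^{-(1-\eps)}\bigr]^{l/r}$, and the elementary scalar inequality $e^{-\eps}(1-\eps)^{-(1-\eps)} \leq e^{-\eps^2/2}$ for $\eps \in [0,1]$ (verified by comparing Taylor series, or by showing the logarithm is nonpositive for $\eps \in [0,1]$) delivers the stated bound $m \cdot e^{-\eps^2 l/(2r)}$.

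The main obstacle is the subadditivity step for the matrix log-MGF: unlike the scalar case, one cannot simply factor $\E[e^{-\theta \sum X_i}]$ as a product, and a genuine matrix-analytic input (Lieb's concavity, or a Golden--Thompson induction as in Ahlswede--Winter) is required. Everything else---the scalar-to-operator lifting of $e^{-\theta x} \leq 1 + \tfrac{e^{-\theta r}-1}{r} x$, the use of $\log(I+A) \preceq A$, and the final scalar optimization---is routine.
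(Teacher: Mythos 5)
Your proof is correct, and it is considerably more self-contained than the paper's, which simply cites Theorem~19 of Ahlswede--Winter and gives no argument at all. Every step of your derivation checks out: the Laplace-transform/Markov step, the transfer of the convexity (chord) bound $e^{-\theta x} \leq 1 + \tfrac{e^{-\theta r}-1}{r}x$ to the operator order via the spectral mapping of a single matrix, the use of operator monotonicity of $\log$ together with $\log(I+A) \preceq A$, the identification $\lambda_{\max}\bigl(c\,\E[S]\bigr) = c\,\lambda_{\min}(\E[S])$ for $c<0$, and the final optimization and scalar inequality $-\eps-(1-\eps)\log(1-\eps) \leq -\eps^2/2$. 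One point deserves correction, though it does not affect validity: the subadditivity $\E[\mathrm{tr}\,e^{-\theta S}] \leq \mathrm{tr}\exp\bigl(\sum_i \log\E[e^{-\theta X_i}]\bigr)$ (Tropp's lemma, via Lieb's concavity theorem) is \emph{not} equivalent to the Golden--Thompson induction of Ahlswede--Winter, as you suggest parenthetically. The Golden--Thompson route only yields $\E[\mathrm{tr}\,e^{-\theta S}] \leq m\prod_i \lambda_{\max}(\E[e^{-\theta X_i}])$, which leads to a tail bound controlled by $\sum_i \lambda_{\min}(\E[X_i])$ rather than by $l = \lambda_{\min}(\E[\sum_i X_i])$; since $\sum_i\lambda_{\min}(\E[X_i]) \leq l$ (and is in fact zero in the paper's application, where each $X_i$ is a scaled rank-one matrix), that weaker form would not suffice here. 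Your choice of the Lieb-based subadditivity is therefore not just a stylistic option but the step that actually delivers the lemma in the form stated, and arguably does so more carefully than the paper's bare citation.
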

\begin{proof}
This is a direct corollary of Theorem 19 from~\citet{AhlWin02}.
\end{proof}

\section{Insufficiency of second-order moments}
\label{appendix:nonident}

\citet{Chang96} shows that a simple class of Markov models used in
mathematical phylogenetics cannot be identified from pair-wise
probabilities alone.
Below, we restate (a specialization of) this result in terms of the
document topic model from Section~\ref{section:topic-setting}.
\begin{proposition}[\citealp{Chang96}]
\label{proposition:nonident}
Consider the model from Section~\ref{section:topic-setting} on
$(h,x_1,x_2,\dotsc,x_\ell)$ with parameters $M$ and $\v{w}$.
Let $Q \in \R^{k \times k}$ be an invertible matrix such that the following
hold:
\begin{enumerate}
\item $\v1^\t Q = \v1^\t$;

\item $MQ^{-1}$, $Q\diag(\v{w})M^\t\diag(M\v{w})^{-1}$, and $Q\v{w}$ have
non-negative entries;

\item $Q \diag(\v{w}) Q^\t$ is a diagonal matrix.

\end{enumerate}
Then the marginal distribution over $(x_1,x_2)$ is identical to that in the
case where the model has parameters $\tl{M} := MQ^{-1}$ and $\tl{w} :=
Q\v{w}$.
\end{proposition}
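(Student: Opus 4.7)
The plan is to check the claim via Lemma~\ref{lemma:topic-moments}, which gives $\pairs = M\diag(\v{w})M^\t$; under the tilde parameters, the pair-wise marginal becomes
\[
\tl{\pairs} \;=\; \tl{M}\diag(\tl{\v{w}})\tl{M}^\t \;=\; MQ^{-1}\diag(Q\v{w})Q^{-\t}M^\t.
\]
So my goal reduces to establishing the algebraic identity $Q\diag(\v{w})Q^\t = \diag(Q\v{w})$, since rearranging gives $\diag(\v{w}) = Q^{-1}\diag(Q\v{w})Q^{-\t}$ and then sandwiching by $M$ and $M^\t$ yields $\tl{\pairs} = \pairs$. (The $x_1$ marginal matches for free: $\tl{M}\tl{\v{w}} = MQ^{-1}Q\v{w} = M\v{w}$.) Once this is in hand, I will also have to verify that $(\tl{M},\tl{\v{w}})$ is a legitimate parameterization of the topic model.

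The key step is this identity. Condition~3 tells me that $D := Q\diag(\v{w})Q^\t$ is diagonal, which gives the structural form but not the entries. To pin down the entries, I will left-multiply by $\v1^\t$ and use Condition~1:
\[
\v1^\t D \;=\; (\v1^\t Q)\diag(\v{w})Q^\t \;=\; \v{w}^\t Q^\t \;=\; (Q\v{w})^\t .
\]
Since $D$ is diagonal, the row vector $\v1^\t D$ is precisely the list of its diagonal entries, so $D = \diag(Q\v{w})$, as required. This is the only nontrivial point; without Condition~1, a diagonal $Q\diag(\v{w})Q^\t$ would not have to coincide with $\diag(Q\v{w})$.

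The remaining verifications are bookkeeping. Columns of $\tl{M} = MQ^{-1}$ are nonnegative by Condition~2, and they sum to one because $\v1^\t MQ^{-1} = \v1^\t Q^{-1} = \v1^\t$ (right-multiplying $\v1^\t Q=\v1^\t$ by $Q^{-1}$). Likewise, $\tl{\v{w}} = Q\v{w}$ is nonnegative by Condition~2 and $\v1^\t Q\v{w} = \v1^\t\v{w} = 1$, so $\tl{\v{w}}\in\Delta^{k-1}$. Plugging the identity $Q\diag(\v{w})Q^\t = \diag(Q\v{w})$ back into the expression for $\tl{\pairs}$ above produces $\tl{\pairs}=M\diag(\v{w})M^\t=\pairs$, which is exactly the marginal distribution of $(x_1,x_2)$ under either parameterization.

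The main (mild) obstacle is recognizing that Conditions~1 and~3 together do more than they appear to separately: Condition~3 ensures diagonality, but only in combination with Condition~1 does one get the precise equality $Q\diag(\v{w})Q^\t=\diag(Q\v{w})$ that collapses the computation. The nonnegativity clause about $Q\diag(\v{w})M^\t\diag(M\v{w})^{-1}$ in Condition~2 is not needed for the pair-wise equality itself; it supplies the probabilistic interpretation (namely, that $Q$ can be read as a posterior-to-posterior stochastic map over the hidden state) but plays no role in the algebraic argument above.
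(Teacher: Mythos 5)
Your proof is correct. Note that the paper itself does not prove this proposition --- it is stated with a citation to Chang (1996), and Appendix~\ref{appendix:nonident} only illustrates it with a worked $2\times 2$ numerical example --- so there is no in-paper argument to compare against. Your derivation is a clean, self-contained verification: the pivotal observation that Conditions~1 and~3 jointly force $Q\diag(\v{w})Q^\t = \diag(Q\v{w})$ (by reading off the diagonal entries via $\v1^\t Q\diag(\v{w})Q^\t = (Q\v{w})^\t$) is exactly what collapses $\tl{M}\diag(\tl{w})\tl{M}^\t$ to $M\diag(\v{w})M^\t$, and your bookkeeping that $(\tl{M},\tl{w})$ is a valid parameterization is complete. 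You are also right that the nonnegativity of $Q\diag(\v{w})M^\t\diag(M\v{w})^{-1}$ plays no role in the pairwise-marginal identity for this conditionally-i.i.d.\ topic model; it is a vestige of Chang's original Markov-chain setting, where the reversed-direction conditionals must also be legitimate probabilities.
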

A simple example for $d = k = 2$ can be obtained from
\begin{align*}
M & := \begin{bmatrix} p & 1-p \\ 1-p & p \end{bmatrix} ,
& \v{w} & := \begin{bmatrix} 1/2 \\ 1/2 \end{bmatrix} ,
& Q & := \begin{bmatrix} p & \frac{1 + \sqrt{1+4p(1-p)}}{2} \\ 1-p &
\frac{1 - \sqrt{1+4p(1-p)}}{2} \end{bmatrix}
\end{align*}
for some $p \in (0,1)$.
We take $p = 0.25$, in which case $Q$ satisfies the conditions of
Proposition~\ref{proposition:nonident}, and
\begin{align*}
M & = \begin{bmatrix} 0.25 & 0.75 \\ 0.75 & 0.25 \end{bmatrix} ,
& \v{w} & = \begin{bmatrix} 0.5 \\ 0.5 \end{bmatrix} , \\
\tl{M} = MQ^{-1}
& \approx \begin{bmatrix} 0.6614 & 0.1129 \\ 0.3386 & 0.8871 \end{bmatrix} ,
& \tl{w} = Q\v{w}
& \approx \begin{bmatrix} 0.7057 \\ 0.2943 \end{bmatrix}
.
\end{align*}
In this case, both $(M,\v{w})$ and $(\tl{M},\tl{w})$ give rise to the same
pair-wise probabilities
\begin{equation*}
M\diag(\v{w})M^\t
= \tl{M}\diag(\tl{w})\tl{M}^\t
\approx \begin{bmatrix}
0.3125 & 0.1875 \\
0.1875 & 0.3125
\end{bmatrix}
.
\end{equation*}
However, the triple-wise probabilities, for $\eta = (1,0)$, differ:
for $(M,\v{w})$, we have
\begin{equation*}
M\diag(M^\t\eta)\diag(\v{w})M^\t
\approx \begin{bmatrix}
0.2188 & 0.0938 \\
0.0938 & 0.0938
\end{bmatrix}
;
\end{equation*}
while for $(\tl{M},\tl{w})$, we have 
\begin{equation*}
\tl{M}\diag(\tl{M}^\t\eta)\diag(\tl{w})\tl{M}^\t
\approx \begin{bmatrix}
0.2046 & 0.1079 \\
0.1079 & 0.0796
\end{bmatrix}
.
\end{equation*}

\end{document}